\newtheorem{theorem}{Theorem}[section]
\providecommand{\lin}[1]{\ensuremath{\left\langle #1 \right\rangle}}
\providecommand{\norm}[1]{\left\lVert#1\right\rVert}
\providecommand{\R}{\mathbb{R}} %
\providecommand{\E}{{\mathbb E}}
\providecommand{\Eb}[1]{{\mathbb E}\left[#1\right] }       %
\DeclareMathOperator*{\argmin}{arg\,min}
\providecommand{\0}{\mathbf{0}}
\renewcommand{\aa}{\mathbf{a}}
\providecommand{\bb}{\mathbf{b}}
\providecommand{\ee}{\mathbf{e}}
\renewcommand{\gg}{\mathbf{g}}
\providecommand{\mm}{\mathbf{m}}
\providecommand{\uu}{\mathbf{u}}
\providecommand{\ww}{\mathbf{w}}
\providecommand{\xx}{\mathbf{x}}
\providecommand{\yy}{\mathbf{y}}
\providecommand{\cC}{\mathcal{C}}
\providecommand{\cO}{\mathcal{O}}
\let\hat\widetilde
\let\xx\ww
\let\yy\vv
\newcommand{\algopt}{\ref{scheme:ours}\xspace}
\title{Dynamic Model Pruning with Feedback} %
\author{Tao Lin \\
EPFL, Switzerland \\
\texttt{tao.lin@epfl.ch} \\
\And
Sebastian U. Stich \\
EPFL, Switzerland \\
\texttt{sebastian.stich@epfl.ch} \\
\And 
Luis Barba \\
EPFL \& ETH Zurich, Switzerland \\
\texttt{luis.barba@inf.ethz.ch} \\
\And
Daniil Dmitriev \\
EPFL, Switzerland \\
\texttt{daniil.dmitriev@epfl.ch}\\
\AND
Martin Jaggi \\
EPFL, Switzerland \\
\texttt{martin.jaggi@epfl.ch}
}
\begin{document}

\maketitle

\begin{abstract}
Deep neural networks often have millions of parameters.
This can hinder their deployment to low-end devices, not only due to high memory requirements but also because of increased latency at inference.
We propose a novel model compression method that generates a sparse trained model without additional overhead:
by allowing (i) dynamic allocation of the sparsity pattern and  (ii) incorporating feedback signal to reactivate prematurely pruned weights we obtain a performant sparse model in one single training pass (retraining is not needed, but can further improve the performance).
We evaluate our method on CIFAR-10 and ImageNet, and show that the obtained sparse models can reach the state-of-the-art performance of dense models. 
Moreover, their performance surpasses that of models generated by all previously proposed pruning schemes.
\end{abstract}

\section{Introduction}
\vspace{-1mm}
Highly overparametrized deep neural networks show impressive results on machine learning tasks.
However, with the increase in model size comes also the demand for memory and computer power at inference stage---two resources that are scarcely available on low-end devices.
Pruning techniques have been successfully applied to remove a significant fraction of the network weights while preserving test accuracy attained by dense models.
In some cases, the generalization of compressed networks has even been found to be better than with full models~\citep{han2015learning,han2016dsd,mocanu2018scalable}. 

The \emph{sparsity} of a network is the number of weights that are identically zero, and can be obtained by applying a \emph{sparsity mask} on the weights. 
There are several different approaches to find sparse models. 
For instance, \emph{one-shot pruning} strategies find a suitable sparsity mask by inspecting the weights of a pretrained network~\citep{mozer1989skeletonization,lecun1990optimal,han2016dsd}. 
While these algorithms achieve a substantial size reduction of the network with little degradation in accuracy,
they are computationally expensive (training and refinement on the dense model), and they are outperformed by algorithms that explore different sparsity masks instead of a single one. 
In \emph{dynamic pruning} methods, the sparsity mask is readjusted during training according to different criteria~\citep{mostafa2019parameter,mocanu2018scalable}. 
However, these methods require fine-tuning of many hyperparameters.

We propose a new pruning approach to obtain sparse neural networks with state-of-the-art test accuracy.
Our compression scheme uses a new saliency criterion that identifies important weights in the network throughout training to propose candidate masks.
As a key feature, our algorithm not only evolves the pruned sparse model alone, but jointly also a (closely related) dense model that is used in a natural way to correct for pruning errors during training.
This results in better generalization properties on a wide variety of tasks, since the simplicity of the scheme allows us further to study it from a theoretical point of view, and to provide further insights and interpretation.
We do not require tuning of additional hyperparameters, and no retraining of the sparse model is needed (though can further improve performance).

\textbf{Contributions.}\vspace{-0.4\baselineskip}%
\begin{itemize}[nosep,itemsep=0pt,leftmargin=*]
\item A novel dynamic pruning scheme, that incorporates an error feedback in a natural way \hfill Sec. \ref{sec:method} \\ and finds a trained sparse model in one training pass. \hfill Sec. \ref{subsec:unstructured_exp_setup}
\item We demonstrate state-of-the-art performance (in accuracy and sparsity), \hfill Sec. \ref{subsec:unstructured_exp_setup} \\ ourperforming all previously proposed pruning schemes. \hfill Sec. \ref{subsec:unstructured_exp_setup}
\item We complement our results by an ablation study that provides further insights. \hfill Sec. \ref{sec:discussion}\\
and convergence analysis for convex and non-convex objectives. \hfill Sec. \ref{sec:convergence}
\end{itemize}

\begin{figure}[t]
\centering \vspace{-3mm}
\includegraphics[width=1\textwidth]{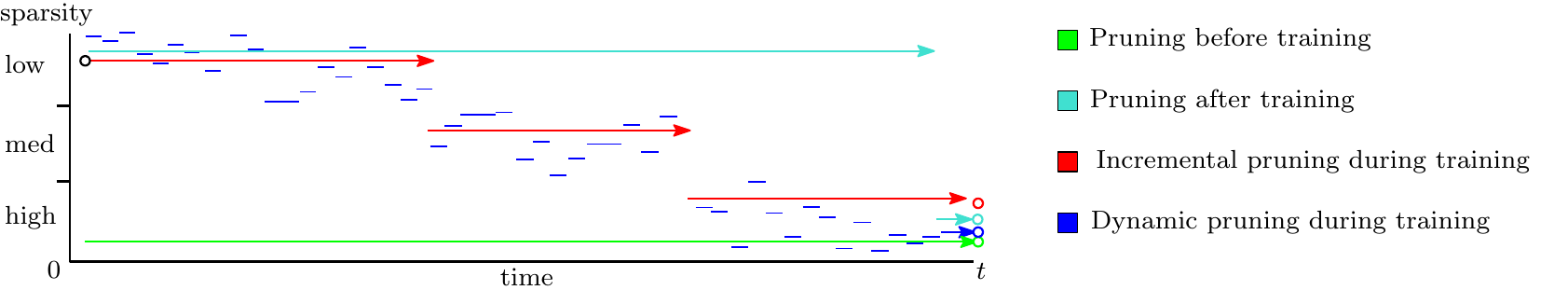}
\\[2mm]
	\resizebox{1.\textwidth}{!}{%
	\begin{tabular}{llll}
	\toprule
	\emph{Scheme}  & \emph{Gradient} & \emph{Pruning} & \emph{Comments} \\ \midrule
	 \multirow{2}{*}{Pruning before training} & \multirow{2}{*}{$\hat \gg(\hat \xx_t)$  (on pruned model)} & once at the beginning & + reaches local minima   \\
	   & &  mask $\mm_0$ fixed throughout & - finding suitable masks is challenging \\ \midrule
	 \multirow{2}{*}{Pruning after training}              & \multirow{2}{*}{$\gg(\xx_t)$       (on full model)}   & once at the end        & - does not reach local minima \\ 
	 & & mask $\mm_T$ fixed at the end & - fine-tuning of pruned model required \\\midrule
	 \multirow{2}{*}{Incremental p. during training} & \multirow{2}{*}{$\hat \gg(\hat \xx_{t})$ (on partially pruned m.)} & several times during training  &  + adaptive \\
     & & incrementally, $\mm_{t+1} \leq \mm_t$   & - cannot recover from premature pruning\\ \midrule
	 \multirow{2}{*}{Dynamic p. during training}  & \multirow{2}{*}{$\gg(\hat \xx_t)$ (on pruned model)}  & dynamically adapting mask $\mm_t$   & + adaptive \\ 
	 & & every few iterations &  + can recover from premature pruning \\
	\bottomrule
	\end{tabular}
	}
\caption{\small Schematic view of different pruning methodologies and their properties.
}
\label{fig:AlgBehaviour}
\vspace{-1em}
\end{figure}

\section{Related work}
\label{sec:relwork}
\vspace{-1mm}

Previous works on obtaining pruned networks can (loosely) be divided into three main categories.

\textbf{Pruning after training.}
Training approaches to obtain sparse networks usually include a three stage pipeline---training of a dense model, \emph{one-shot pruning} and fine-tuning---e.g.,~\citep{han2015learning}. 
Their results (i.e., moderate sparsity level with minor quality loss)
made them the standard method for network pruning and led to several variations~\citep{alvarez2016learning,guo2016dynamic,alvarez2017compression,carreira2018learning}.
\looseness=-1

\textbf{Pruning during training.}
\citet{zhu2017prune} propose the use of magnitude-based pruning and to gradually increase the sparsity ratio while training the model from scratch. A pruning schedule determines when the new masks are computed (extending and simplifying~\citep{narang2017exploring}).
\citet{he2018soft} (SFP)
prune entire filters of the model at the end of each epoch,
but allow the pruned filters to be updated when training the model. 
Deep Rewiring (DeepR)~\citep{bellec2018deep} allows for even more adaptivity by 
performing pruning and regrowth decisions periodically. This approach is computationally expensive and challenging to apply to large networks and datasets.
Sparse evolutionary training (SET)~\citep{mocanu2018scalable} simplifies prune--regrowth cycles by
using heuristics for random growth at the end of each training epoch
and NeST~\citep{dai2019nest} by inspecting gradient magnitudes.
\looseness=-1

Dynamic Sparse Reparameterization (DSR)~\citep{mostafa2019parameter} implements a 
prune--redistribute--regrowth cycle where  target sparsity levels are redistributed among layers, based on loss gradients (in contrast to SET, which uses fixed, manually configured, sparsity levels).
Sparse Momentum (SM)~\citep{dettmers2019sparse} follows the same cycle but instead using the mean momentum magnitude of each layer during the redistribute phase.
SM outperforms DSR on ImageNet for unstructured pruning by a small margin
but has no performance difference %
on CIFAR experiments.
Our approach also falls in the dynamic category
but we use error compensation mechanisms instead of hand crafted redistribute--regrowth cycles. \looseness=-1

\textbf{Pruning before training.}
Recently---spurred by the lottery ticket hypothesis (LT) \citep{frankle2018the}---methods which try to find a sparse mask that can be trained from scratch have attracted increased interest. For instance,
\citet{lee2018snip} propose SNIP to find a pruning mask by inspecting connection sensitivities and identifying structurally important connections in the network for a given task.
Pruning is applied at initialization, and the sparsity mask remains fixed throughout training.
Note that \citet{frankle2018the,frankle2019lottery} do not propose an efficient pruning scheme to find the mask, instead they rely on iterative pruning, repeated for several full training passes. 
\looseness=-1

\textbf{Further Approaches.}
\citet{srinivas2017training,louizos2018learning} learn gating variables (e.g. through $\ell_\0$ regularization) that minimize the number of nonzero weights, recent parallel work studies filter pruning for pre-trained models~\citep{You2019:gate}.
\citet{gal2017concrete,neklyudov2017structured,molchanov2017variational} prune from Bayesian perspectives to learn dropout probabilities during training
to prune and sparsify networks as dropout weight probabilities reach 1.
\citet{gale2019state} extensively study recent unstructured pruning methods on large-scale learning tasks,
and find that complex techniques~\citep{molchanov2017variational,louizos2018learning} perform inconsistently.
Simple magnitude pruning approaches achieve comparable or better results~\citep{zhu2017prune}.

\section{Method}\label{sec:method}\vspace{-1mm}
We consider the training of a non-convex loss function $f\colon \R^d \to \R$.
We assume for a weight vector~$\xx \in \R^d$ to have access to a stochastic gradient $\gg(\xx) \in \R^d$ such that $\E[\gg(\xx)]=\nabla f(\xx)$. 
This corresponds to the standard machine learning setting with $\gg(\xx)$ representing a (mini-batch) gradient of one (or several) components of the loss function.
Stochastic Gradient Descent (SGD) computes a sequence of iterates by the update rule
\begin{align}
 \xx_{t+1} := \xx_t - \gamma_t \gg(\xx_t)\,, \tag{SGD} \label{scheme:sgd}
\end{align}
for some learning rate $\gamma_t$. 
To obtain a sparse model, a general approach is to \emph{prune} some of the weights of $\xx_t$, i.e., to set them to zero. 
Such pruning can be implemented by applying a \emph{mask} $\mm \in \{0,1\}^d$ to the weights, 
resulting in a sparse model $\hat \xx_t := \mm \odot \xx_t$, where $\odot$ denotes the entry-wise (Hadamard) product. 
The mask could potentially depend on the weights $\xx_t$ (e.g., smallest magnitude pruning), 
or depend on $t$ (e.g., the sparsity is incremented over time).
 
\looseness=-1

Before we introduce our proposed dynamic pruning scheme, we formalize the three main existing types of pruning methodologies (summarized in Figure \ref{fig:AlgBehaviour}).
These approaches differ in the way the mask is computed, and the moment when it is applied.\footnote{The method introduced in Section~\ref{sec:relwork} typically follow one of these broad themes loosely, with slight variations in detail. For the sake of clarity we omit a too technical and detailed discussion here.}

\textbf{Pruning before training.} 
A mask $\mm_0$ 
(depending on e.g.\ the initialization $\xx_0$ or the network architecture of $f$)
is applied and~\eqref{scheme:sgd} is used for training on the resulting subnetwork $\smash{\hat f (\xx) := f(\mm_0 \odot \xx)}$ with the advantage that only pruned weights need to be stored and updated\footnote{\label{foot:hat}When training on $\hat f(\xx)$, it suffices to access stochastic gradients of $\hat f (\xx)$, denoted by  $\hat \gg(\xx) $, which can potentially be cheaper be computed than by naively applying the mask to $\gg(\xx)$ (note $\hat \gg (\xx) = \mm_0 \odot \gg(\xx)$). \looseness=-1}, and that by training with SGD 
a local minimum of the subnetwork $\smash{\hat f}$ (but not of $f$---the original training target) can be reached. 
In practice however, it remains a challenge to efficiently determine a good mask $\mm_0$ 
and a wrongly chosen mask at the beginning strongly impacts the performance.

\begin{figure}[!t]
\centering
\includegraphics[scale=0.9]{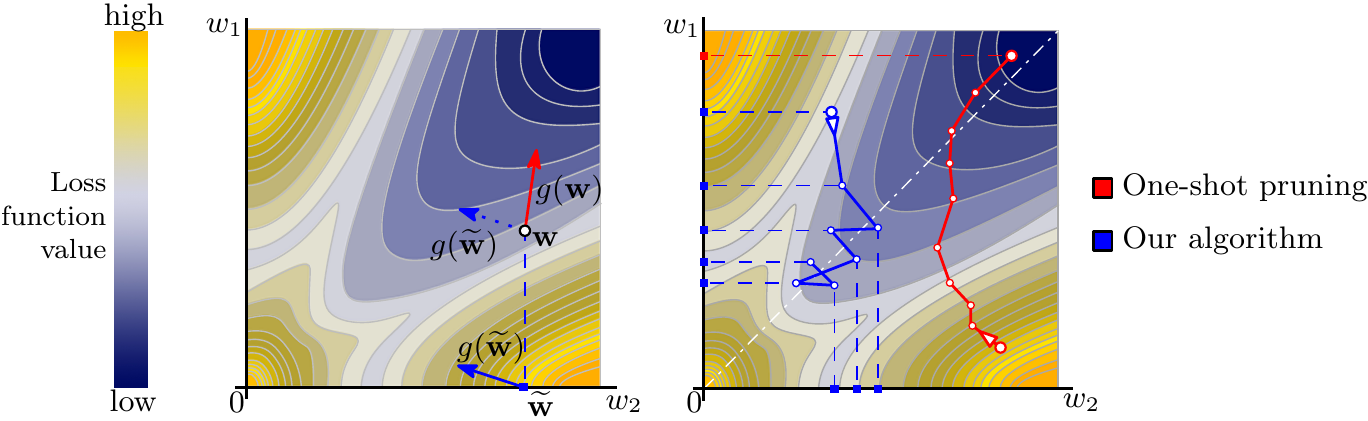}
\caption{\small
\textbf{Left}: One-shot pruning (red) computes a stochastic gradient at $\xx$ and takes a step towards the best dense model.
In contrast, \algopt (blue) computes a stochastic gradient at the pruned model $\hat{\xx}$ (here obtained by smallest magnitude pruning), and takes a step that best suits the compressed model. 
\textbf{Right}: One-shot pruning commits to a single sparsity mask and might obtain sparse models that generalize poorly (without retraining). \algopt explores different available sparsity patterns and finds better sparse models.
}
\label{fig:2D_example}
\end{figure}

\textbf{Pruning after training (one-shot pruning).} A dense model is trained, and pruning is applied to the trained model $\xx_T$. 
As the pruned model  $\hat \xx_T = \mm_T \odot \xx_T$ is very likely not at a local optimum of $f$, fine-tuning (retraining with the fixed mask $\mm_T$) is necessary to improve performance.

\textbf{Pruning during training (incremental and dynamic pruning).} 
Dynamic schemes change the mask $\mm_t$ every (few) iterations based on observations during training (i.e.\ by observing the weights and stochastic gradients).
Incremental schemes monotonically increase the sparsity pattern, fully dynamic schemes can also reactivate previously pruned weights. In contrast to previous dynamic schemes that relied on elaborated heuristics  to adapt the mask $\mm_t$, 
we propose a simpler approach:

\textbf{Dynamic pruning with feedback (\algopt, Algorithm~\ref{algo:our_algo}).}
Our scheme evaluates a stochastic gradient at the \emph{pruned} model $\hat \xx_t = \mm_t \odot \xx_t$ and applies it to the (simultaneously maintained) dense model~$\xx_t$: \vspace{-3mm}
\begin{align}
 \xx_{t+1} := \xx_t - \gamma_t \gg (\mm_t \odot \xx_t) = \xx_t - \gamma_t \gg(\hat \xx_t)\,. \tag{\textsc{DPF}} \label{scheme:ours}
\end{align}
Applying the gradient to the full model allows to recover from ``errors'', i.e.\ prematurely masking out important weights: when the accumulated gradient updates from the following steps drastically change a specific weight, it can become activated again (in contrast to incremental pruning approaches that have to stick to sub-optimal decisions).
For illustration, observe that~\eqref{scheme:ours} can equivalently be written as\vspace{-2mm}
\[
\xx_{t+1} = \xx_t - \gamma_t \gg(\xx_t + \ee_t) ,
\] 
where $\ee_t := \hat \xx_t - \xx_t $ is the \emph{error} produced by the compression. This provides a different intuition of the behavior of~\eqref{scheme:ours}, and connects it with the concept of \emph{error-feedback}~\citep{Stich2018:sparsified,karimireddy2019error}.\footnote{%
Our variable $\xx_t$ corresponds to $\tilde{\mathbf{x}}_t$ in the notation of~\citet{karimireddy2019error}.
Their error-fixed SGD algorithm evaluates gradients at perturbed iterates $\mathbf{x}_t := \tilde{\mathbf{x}}_t + \ee_t$, which correspond precisely to $\tilde \xx_t = \xx_t + \ee_t$ in our notation. This shows the connection of these two methods.}
 We illustrate this principle in Figure~\ref{fig:2D_example} and give detailed pseudocode and further implementation details in Appendix~\ref{sec:algorithm}.
The \ref{scheme:ours} scheme can also be seen as an instance of a more general class of schemes that apply (arbitrary) perturbed gradient updates to the dense model. For instance straight-through gradient estimators~\citep{Bengio2013:STE} that are used to empirically simplify the backpropagation can be seen as such perturbations. Our stronger assumptions on the structure of the perturbation allow to derive non-asymptotic convergence rates in the next section, though our analysis could also be extended to the setting in~\citep{Penghang2019:STE} if the perturbations can be bounded.

\section{Convergence Analysis}\label{sec:convergence}\vspace{-1mm}
We now present convergence guarantees for~\eqref{scheme:ours}.
For the purposes of deriving theoretical guarantees, we assume that the training objective is smooth, that is $\norm{\nabla f(\xx) - \nabla f(\yy)} \leq L \norm{\xx-\yy}$, $\forall \xx,\yy \in \R^d$, for a constant $L > 0$, and that the stochastic gradients are bounded $\E \norm{\gg(\hat{\xx}_t)}^2 \leq G^2$ for every pruned model $\hat{\xx}_t = \mm_t(\xx_t) \odot \xx_t$.
The \emph{quality} of this pruning is defined as the parameter $\delta_t \in [0,1]$ such that \vspace{-4mm}
\begin{align}
 \delta_t := \norm{\xx_t - \hat{\xx}_t}^2 \big/ \norm{\xx_t}^{2}\,. \label{def:delta}
\end{align}
Pruning without information loss corresponds to $\hat{\xx}_t=\xx_t$, i.e.,  $\delta_t=0$, and in general $\delta_t \leq 1$.

\textbf{Convergence on Convex functions.}
We first consider the case when $f$ is in addition $\mu$-strongly convex, that is $\lin{\nabla f(\yy),\xx-\yy} \leq f(\xx) - f(\yy)- \frac{\mu}{2} \norm{\xx-\yy}^2$, $\forall \xx,\yy \in \R^d$. While it is clear that this assumption does not apply to neural networks, it eases the presentation as strongly convex functions have a unique (global) minimizer $\xx^\star :=\argmin_{\xx \in \R^d}f(\xx)$.

\begin{theorem}\label{thm:convex}
Let $f$ be $\mu$-strongly convex and learning rates given as $\gamma_t = \frac{4}{\mu(t+2)}$.
Then for a randomly chosen pruned model $\hat \uu$ of the iterates $\{\hat \xx_0, \dots, \hat \xx_T\}$ of~\algopt, concretely $\hat \uu = \hat{\xx}_t$ with probability $p_t = \frac{2(t+1)}{(T+1)(T+2)}$, it holds that---in expectation over the stochasticity and the selection of~$\hat \uu$:\vspace{-2mm} \looseness=-1
\begin{equation}
   \E f(\hat \uu)- f(\xx^\star)  = \cO \left( \frac{G^2}{\mu T} + L \Eb{ \delta_t \norm{\xx_t}^2  }  \right)\,. \label{eq:convex}\vspace{-2mm}
\end{equation}
\end{theorem}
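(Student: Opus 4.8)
The plan is to run the classical distance-to-optimum analysis of \eqref{scheme:sgd}-type methods, but carefully accounting for the fact that here the gradient is queried at the \emph{pruned} point $\hat\xx_t$ rather than at the dense iterate $\xx_t$. I would track the quantity $a_t := \norm{\xx_t - \xx^\star}^2$, derive a one-step recursion for it, and then close the argument with a weighted averaging (telescoping) step whose weights are matched to the prescribed schedule $\gamma_t = \tfrac{4}{\mu(t+2)}$ and the probabilities $p_t \propto (t+1)$. The identity $\norm{\ee_t}^2 = \norm{\hat\xx_t - \xx_t}^2 = \delta_t \norm{\xx_t}^2$ will show at the very end that the accumulated perturbation is exactly the error term in~\eqref{eq:convex}.

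First I would expand the update $\xx_{t+1} = \xx_t - \gamma_t \gg(\hat\xx_t)$ to get $a_{t+1} = a_t - 2\gamma_t \lin{\gg(\hat\xx_t), \xx_t - \xx^\star} + \gamma_t^2 \norm{\gg(\hat\xx_t)}^2$, take the (conditional) expectation so that $\E \gg(\hat\xx_t) = \nabla f(\hat\xx_t)$, and bound the last term by $\gamma_t^2 G^2$. The crux is to lower-bound the cross term $\lin{\nabla f(\hat\xx_t), \xx_t - \xx^\star}$ so that it yields $f(\hat\xx_t) - f(\xx^\star)$, a strong-convexity contraction, and an error contribution that is \emph{exactly} $\cO(L)\norm{\ee_t}^2$. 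To this end I would split $\xx_t - \xx^\star = (\hat\xx_t - \xx^\star) - \ee_t$. Strong convexity applied to the first piece gives $\lin{\nabla f(\hat\xx_t), \hat\xx_t - \xx^\star} \ge f(\hat\xx_t) - f(\xx^\star) + \tfrac{\mu}{2}\norm{\hat\xx_t - \xx^\star}^2$. For the remaining term $-\lin{\nabla f(\hat\xx_t), \ee_t}$ I would apply Young's inequality with a free parameter $\beta$ together with the smoothness consequence $\norm{\nabla f(\hat\xx_t)}^2 \le 2L\big(f(\hat\xx_t) - f(\xx^\star)\big)$; choosing $\beta = \tfrac{1}{2L}$ absorbs precisely half of the recovered suboptimality and leaves a clean $-L\norm{\ee_t}^2$. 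Finally I would transfer the contraction back to $a_t$ via $\norm{\hat\xx_t - \xx^\star}^2 \ge \tfrac12 a_t - \norm{\ee_t}^2$.

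These steps produce the recursion
\[ \E a_{t+1} \le \Big(1 - \tfrac{\mu\gamma_t}{2}\Big) a_t - \gamma_t\big(f(\hat\xx_t) - f(\xx^\star)\big) + (2L+\mu)\gamma_t\norm{\ee_t}^2 + \gamma_t^2 G^2. \]
Dividing by $\gamma_t$, multiplying by $w_t = t+1$, and summing over $t$, I would verify that the schedule makes the distance terms telescope, i.e.\ $\tfrac{w_t}{\gamma_t} = w_{t+1}\big(1 - \tfrac{\mu\gamma_{t+1}}{2}\big)/\gamma_{t+1}$, and that the coefficient of $a_0$ vanishes because $1 - \tfrac{\mu\gamma_0}{2} = 0$ (so the initial distance drops out entirely). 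The noise sum obeys $\sum_{t=0}^T w_t \gamma_t G^2 = \cO(G^2 T/\mu)$, and dividing by $\sum_{t=0}^T w_t = \tfrac{(T+1)(T+2)}{2}$ turns the weighted sum of suboptimalities into $\E f(\hat\uu) - f(\xx^\star)$ with exactly $p_t = w_t/\sum_s w_s = \tfrac{2(t+1)}{(T+1)(T+2)}$. Substituting $\norm{\ee_t}^2 = \delta_t\norm{\xx_t}^2$ and using $\mu \le L$ to write $2L+\mu = \cO(L)$ yields the claimed bound.

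I expect the main obstacle to be the bias created by evaluating the gradient at $\hat\xx_t$ instead of $\xx_t$: controlling $\lin{\nabla f(\hat\xx_t), \ee_t}$ so that it costs only an $\cO(L)\norm{\ee_t}^2$ penalty \emph{without} consuming the strong-convexity contraction is what forces the specific smoothness bound $\norm{\nabla f(\hat\xx_t)}^2 \le 2L(f(\hat\xx_t) - f(\xx^\star))$ and the tuned Young parameter $\beta = \tfrac{1}{2L}$; a naive Cauchy--Schwarz bound would instead give a loose $\tfrac{L^2}{\mu}$ factor. A secondary subtlety is aligning the averaging weights with the learning-rate schedule so that the $a_t$ cancel exactly and the initial condition disappears, which is precisely what pins down the weights $p_t \propto (t+1)$.
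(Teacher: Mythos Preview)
Your proposal is correct and follows essentially the same route as the paper: the paper expands $\norm{\xx_{t+1}-\xx^\star}^2$, splits $\xx_t-\xx^\star=(\hat\xx_t-\xx^\star)-(\hat\xx_t-\xx_t)$, applies strong convexity to the first piece, Young's inequality with parameter $\tfrac{1}{2L}$ plus the smoothness bound $\tfrac{1}{2L}\norm{\nabla f(\hat\xx_t)}^2\le f(\hat\xx_t)-f(\xx^\star)$ to the second, and transfers $\norm{\hat\xx_t-\xx^\star}^2$ back to $\norm{\xx_t-\xx^\star}^2$, arriving at exactly your recursion with the $(2L+\mu)\le 3L$ coefficient. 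The telescoping with weights $\lambda_t=t+1$ and the vanishing initial term are also identical to what you describe.
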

The rightmost term in~\eqref{eq:convex} measures the average quality of the pruning. However, unless $\delta_t \to 0$ or $\norm{\xx_t} \to 0$ for $t \to \infty$, the error term never completely vanishes, meaning that the method converges only to a neighborhood of the optimal solution (this not only holds for the pruned model, but also for the jointly maintained dense model, as we will show in the appendix).
This behavior is expected, as the global optimal model $\xx^\star$ might be dense and cannot be approximated well by a sparse model. 

For one-shot methods that only prune the final~\eqref{scheme:sgd} iterate $\xx_T$ at the end, we have instead:\vspace{-1mm}
\begin{align*}
 \E f(\hat{\xx}_t)- f(\xx^\star) \leq 2 \E \left( f(\xx_T)-f(\xx^\star) \right)+ L \delta_T \E \norm{\xx_T}^2 = \cO \left(\frac{L G^2}{\mu^2 T} + L \Eb{ \delta_T  \norm{\xx_T}^2 } \right)\,, 
\end{align*}
as we show in the appendix. 
First, we see from this expression that the estimate is very sensitive to $\delta_T$ and $\xx_T$, i.e. the quality of the pruning the final model. 
This could be better or worse than the average of the pruning quality of all iterates. 
Moreover, one looses also a factor of the condition number $\smash{\frac{L}{\mu}}$ in the asymptotically decreasing term, compared to~\eqref{eq:convex}. This is due to the fact that standard convergence analysis only achieves optimal rates for an average of the iterates (but not the last one). This shows a slight theoretical advantage of~\algopt over rounding at the end.

\textbf{Convergence on Non-Convex Functions to Stationary Points.}
Secondly, we consider the case when $f$ is a non-convex function and show convergence (to a neighborhood) of a stationary point.

\begin{theorem}\label{thm:nonconvex}
Let learning rate be given as $\gamma = \frac{c}{\sqrt{T}}$, for $c=\sqrt{\frac{f(\xx_0)-f(\xx^\star)}{L G^2}}$.
Then for pruned model $\hat \uu$ chosen uniformly at random from the iterates $\{\hat \xx_0, \dots, \hat \xx_T\}$ of~\algopt, concretely $\hat \uu := \hat{\xx}_t$ with probability $p_t=\frac{1}{T+1}$, it holds---in expectation over the stochasticity and the selection of $\hat \uu$:\vspace{-2mm}
\begin{equation}
 \E \norm{\nabla f(\hat \uu)}^2 = \cO \left( \frac{\sqrt{L (f(\xx_0)-f(\xx^\star))}G}{\sqrt{T}}  + L^2 \Eb {  \delta_t \norm{\xx_t}^2} \right)\,.\vspace{-2mm}
\end{equation}
\end{theorem}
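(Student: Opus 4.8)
The plan is to run the standard descent-lemma analysis for non-convex SGD, but applied to the \emph{dense} iterate sequence $\{\xx_t\}$ while the stochastic gradient is queried at the \emph{pruned} point $\hat\xx_t$. The mismatch between the two is exactly the compression error $\ee_t = \hat\xx_t - \xx_t$ with $\norm{\ee_t}^2 = \delta_t \norm{\xx_t}^2$, and controlling this mismatch through smoothness is precisely where the $L^2 \Eb{\delta_t\norm{\xx_t}^2}$ term will enter.

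First I would invoke $L$-smoothness of $f$ to obtain the one-step descent inequality
\[
f(\xx_{t+1}) \leq f(\xx_t) - \gamma \lin{\nabla f(\xx_t), \gg(\hat\xx_t)} + \tfrac{L\gamma^2}{2}\norm{\gg(\hat\xx_t)}^2 ,
\]
using the update $\xx_{t+1} = \xx_t - \gamma\,\gg(\hat\xx_t)$. Taking the conditional expectation over the stochasticity and using $\E[\gg(\hat\xx_t)] = \nabla f(\hat\xx_t)$ together with $\E\norm{\gg(\hat\xx_t)}^2 \le G^2$ turns this into a bound involving the deterministic cross term $\lin{\nabla f(\xx_t), \nabla f(\hat\xx_t)}$.

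The crux---and the step I expect to be the main obstacle---is converting this cross term, which mixes the gradient at the dense point with the gradient at the pruned point, into a usable lower bound on $\norm{\nabla f(\hat\xx_t)}^2$, the quantity the theorem actually measures. I would split $\nabla f(\xx_t) = \nabla f(\hat\xx_t) + \bigl(\nabla f(\xx_t) - \nabla f(\hat\xx_t)\bigr)$, so the cross term becomes $\norm{\nabla f(\hat\xx_t)}^2$ plus a residual inner product; smoothness gives $\norm{\nabla f(\xx_t) - \nabla f(\hat\xx_t)} \le L\norm{\ee_t}$, and a Young inequality $L\norm{\ee_t}\,\norm{\nabla f(\hat\xx_t)} \le \tfrac12\norm{\nabla f(\hat\xx_t)}^2 + \tfrac{L^2}{2}\norm{\ee_t}^2$ absorbs half of the gradient norm while paying the additive price $\tfrac{L^2}{2}\norm{\ee_t}^2$. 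This yields the key inequality $\lin{\nabla f(\xx_t), \nabla f(\hat\xx_t)} \ge \tfrac12\norm{\nabla f(\hat\xx_t)}^2 - \tfrac{L^2}{2}\norm{\ee_t}^2$, which isolates the pruning error from the descent quantity.

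Finally I would rearrange to place $\tfrac{\gamma}{2}\norm{\nabla f(\hat\xx_t)}^2$ on one side, take full expectations, sum over $t=0,\dots,T$ so the $f(\xx_t)$ terms telescope, and lower-bound $f(\xx_{T+1}) \ge f(\xx^\star)$. Dividing by $\tfrac{\gamma(T+1)}{2}$ produces exactly $\E\norm{\nabla f(\hat\uu)}^2$ on the left under uniform sampling $p_t = 1/(T+1)$, with three terms on the right: $\tfrac{2(f(\xx_0)-f(\xx^\star))}{\gamma(T+1)}$, the variance term $L\gamma G^2$, and the error term $\tfrac{L^2}{T+1}\sum_t \E\norm{\ee_t}^2 = L^2\Eb{\delta_t\norm{\xx_t}^2}$. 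Substituting $\gamma = c/\sqrt{T}$ with $c = \sqrt{(f(\xx_0)-f(\xx^\star))/(LG^2)}$ balances the first two terms, each collapsing to $\cO\bigl(\sqrt{L(f(\xx_0)-f(\xx^\star))}\,G/\sqrt{T}\bigr)$, which together with the non-vanishing error term gives the claimed rate. The only delicate points beyond the cross-term bound are routine: justifying $\norm{\nabla f(\hat\xx_t)}^2 \le \E\norm{\gg(\hat\xx_t)}^2$ by Jensen where needed, and treating $T+1 = \Theta(T)$ inside the $\cO$-notation.
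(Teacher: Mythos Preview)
Your proposal is correct and follows essentially the same route as the paper's proof: descent lemma on the dense iterate, split the cross term $\lin{\nabla f(\xx_t),\nabla f(\hat\xx_t)}$ into $\norm{\nabla f(\hat\xx_t)}^2$ plus a residual, bound the residual via smoothness and Young's inequality to absorb half the gradient norm at the cost of $\tfrac{L^2}{2}\norm{\ee_t}^2$, then telescope and balance the stepsize. The only cosmetic difference is that the paper applies Young's inequality directly to the inner product $\lin{\nabla f(\hat\xx_t)-\nabla f(\xx_t),\nabla f(\hat\xx_t)}$ and invokes smoothness afterward, whereas you pass through Cauchy--Schwarz first; the resulting bounds and constants are identical.
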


\textbf{Extension to Other Compression Schemes.}
So far we put our focus on simple mask pruning schemes to achieve high model sparsity. 
However, the pruning scheme in Algorithm~\ref{algo:our_algo} could be replaced by an arbitrary compressor $\cC\colon \R^d \to \R^d$, i.e., $\hat \xx _t = \cC(\xx_t)$. Our analysis extends to compressors as e.g.\ defined in~\citep{karimireddy2019error,StichK19delays}, whose quality is also measured in terms of the $\delta_t$ parameters as in~\eqref{def:delta}.
For example, if our objective was not to obtain a sparse model, but to produce a quantized neural network where inference could be computed faster on low-precision numbers, then we could define $\cC$ as a quantized compressor.
One variant of this approach is implemented in the Binary Connect algorithm (BC)~\citep{courbariaux2015binaryconnect} with prominent results, see also~\citep{li2017training} for further insights and discussion.

\section{Experiments} \label{subsec:unstructured_exp_setup}\vspace{-1mm}
We evaluated \algopt together with its competitors on a wide range of neural architectures and sparsity levels. 
\textbf{\algopt exhibits consistent and noticeable performance benefits over its competitors.}
\subsection{Experimental setup}
\vspace{-1mm}
\textbf{Datasets.}
We evaluated \algopt on two image classification benchmarks:
(1) CIFAR-10~\citep{krizhevsky2009learning} (50K/10K training/test samples with 10 classes),
and (2) ImageNet~\citep{russakovsky2015imagenet} (1.28M/50K training/validation samples with 1000 classes).
We adopted the standard data augmentation and preprocessing scheme from~\citet{he2016deep,huang2016deep}; for further details refer to Appendix~\ref{sec:implementation_details}.

\textbf{Models.}
Following the common experimental setting in related work on network pruning%
~\citep{liu2018rethinking,gale2019state,dettmers2019sparse,mostafa2019parameter},
our main experiments focus on ResNet~\citep{he2016deep} and WideResNet~\citep{zagoruyko2016wide}.
However, \algopt can be effectively extended to other neural architectures,
e.g., VGG~\citep{simonyan2014very}, DenseNet~\citep{huang2016densely}.
We followed the common definition in~\cite{he2016deep,zagoruyko2016wide}
and used ResNet-$a$, WideResNet-$a$-$b$ to represent neural network with $a$ layers and width factor $b$.

\textbf{Baselines.}
We considered the state-of-the-art model compression methods presented in the table below as our strongest competitors.
We omit the comparison to other dynamic reparameterization methods,
as DSR can outperform DeepR~\citep{bellec2018deep} and SET~\citep{mocanu2018scalable} by a noticeable margin~\citep{mostafa2019parameter}.
\vspace{-1mm}
\begin{table}[h!]
	\centering
	\label{tab:methods}
	\resizebox{1.\textwidth}{!}{%
	\begin{tabular}{llll}
	\emph{Scheme}  &\emph{Reference}& \emph{Pruning} & \emph{How the mask(s) are found} \\ 	\toprule
	Lottery Ticket (LT) &\citeyear[FDRC]{frankle2019lottery} & \multirow{2}{*}{before training} &  10-30 successive rounds of (full training + pruning).\\ 
	SNIP & \citeyear[LAT]{lee2018snip}                  &                                         &  By inspecting properties/sensitivity of the network.\\ \midrule
	One-shot + fine-tuning (One-shot P+FT)& \citeyear[HPDT]{han2015learning}          & after training                   &  Saliency criterion (prunes smallest weights).\\ \midrule
	Incremental pruning + fine-tuning (Incremental)&\citeyear[ZG]{zhu2017prune}                      & incremental                      &  Saliency criterion. Sparsity is gradually incremented.\\ \midrule
	Dynamic Sparse Reparameterization (DSR) &\citeyear[MW]{mostafa2019parameter}          & \multirow{3}{*}{dynamic}         & Prune--redistribute--regrowth cycle.\\
	Sparse Momentum (SM) & \citeyear[DZ]{dettmers2019sparse}             &                                          &  Prune--redistribute--regrowth + mean momentum.\\
	\algopt & \emph{\textbf{ours}}               &                                          & Reparameterization via error feedback.  \\
	\bottomrule
	\end{tabular}
	}
	\vspace{-2mm}
\end{table}

\textbf{Implementation of \algopt.}
Compared to other dynamic reparameterization methods (e.g. DSR and SM) that introduced many extra hyperparameters,
our method has trivial hyperparameter tuning overhead. 
We perform pruning across all neural network layers (no layer-wise pruning) using magnitude-based unstructured weight pruning (inherited from~\citet{han2015learning}). We found the best preformance when updating the mask every 16 iterations (see also Table~\ref{fig:resnet20_cifar10_hyper_param_impact}) and we keep this value fixed for all experiments (independent of the architecture or task).

Unlike our competitors that may ignore some layers (e.g. the first convolution and downsampling layers in DSR),
we applied \algopt (as well as the One-shot P+FT and Incremental baselines) to all convolutional layers
while keeping the last fully-connected layer\footnote{
	The last fully-connected layer normally makes up only a very small faction of the total MACs, e.g.\ 0.05\% for ResNet-50 on ImageNet and 0.0006\% for WideResNet-28-2 on CIFAR-10.
}, biases and batch normalization layers dense.
Lastly, our algorithm gradually increases the sparsity $s_t$ of the mask from 0 to the desired sparsity using the same scheduling as in~\citep{zhu2017prune}; see Appendix~\ref{sec:implementation_details}.

\textbf{Training schedules.}
For all competitors, we adapted their open-sourced code and
applied a consistent (and standard) training scheme over different methods to ensure a fair comparison.
Following the standard training setup for CIFAR-10,
we trained ResNet-$a$ for $300$ epochs and decayed the learning rate by $10$ when accessing $50\%$ and $75\%$ of the total training samples~\citep{he2016deep,huang2016densely};
and we trained WideResNet-$a$-$b$ as~\citet{zagoruyko2016wide} for $200$ epochs and decayed the learning rate by $5$ when accessing $30\%$, $60\%$ and $80\%$ of the total training samples.
For ImageNet training, we used the training scheme in~\citep{goyal2017accurate} for $90$ epochs and decayed learning rate by $10$ at $30, 60, 80$ epochs.
For all datasets and models, we used mini-batch SGD with Nesterov momentum (factor $0.9$) with fine-tuned learning rate for \algopt.
We reused the tuned (or recommended) hyperparameters for our baselines (DSR and SM),
and fine-tuned the optimizer and learning rate for One-shot P+FT, Incremental and SNIP.
The mini-batch size is fixed to $128$ for CIFAR-10 and $1024$ for ImageNet regardless of datasets, models and methods.
\looseness=-1

\subsection{Experiment Results}
\vspace{-1mm}
\textbf{CIFAR-10.} 
Figure~\ref{fig:wideresnet28_2_cifar10_unstructured_demonstration} shows a comparison of different methods for WideResNet-28-2. 
For low sparsity level (e.g. 50\%), \algopt outperforms even the dense baseline, which is in line with regularization properties of network pruning. 
Furthermore, \algopt can prune the model up to a very high level (e.g. 99\%), and still exhibit viable performance. 
This observation is also present in Table~\ref{tab:sota_dnns_cifar10_unstructured_pruning_baseline_performance},
where the results of training different state-of-the-art DNN architectures with higher sparsities are depicted.
\algopt shows reasonable performance even with extremely high sparsity level on large models 
(e.g. WideResNet-28-8 with sparsity ratio 99.9\%), while other methods either suffer from significant quality loss or even fail to converge.

\begin{figure*}[!h]
	\centering\vspace{-2mm}
    \subfigure[\small{Sparsity ratio v.s. top-1 test acc.}]{
    \includegraphics[width=0.31\textwidth,]{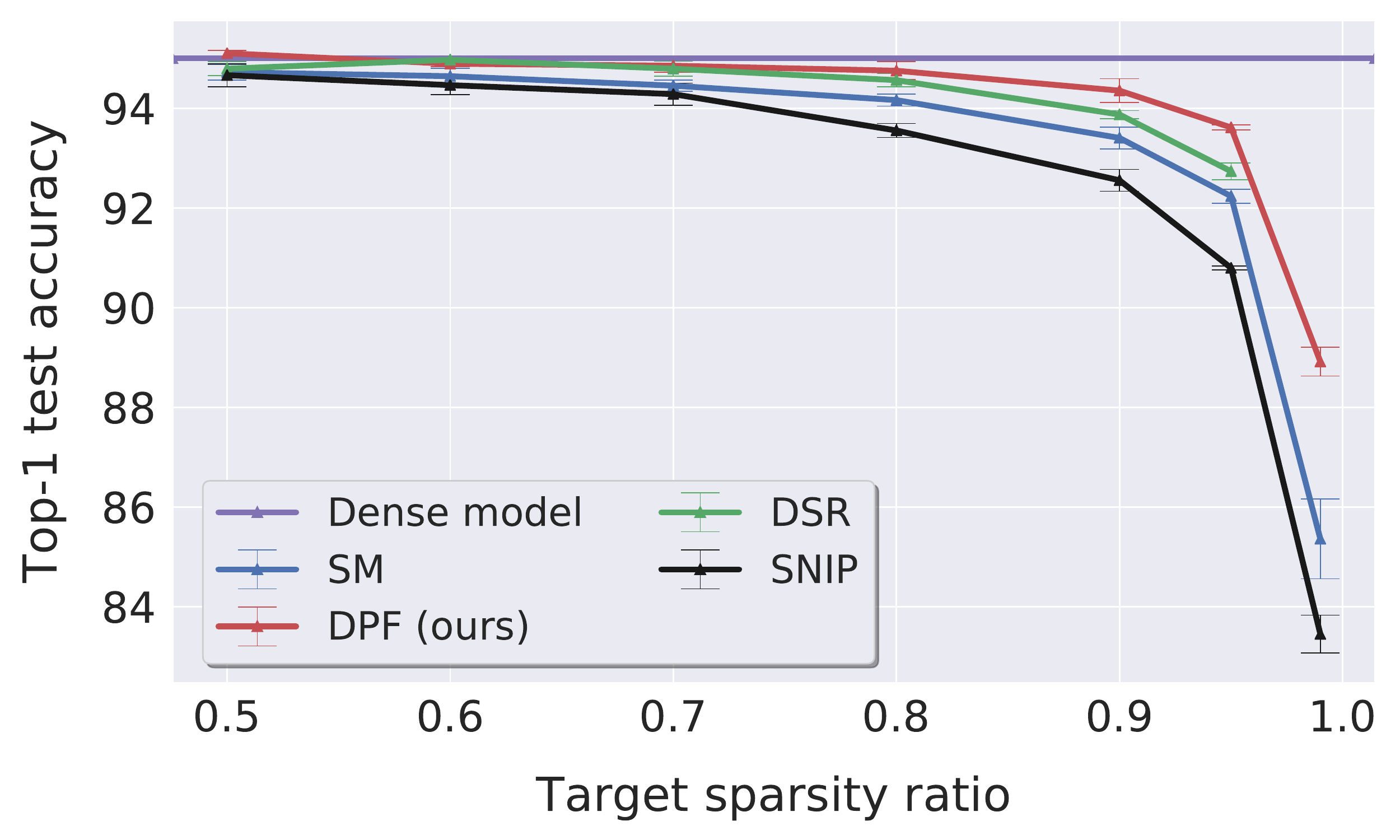}
        \label{fig:wideresnet28_2_cifar10_unstructured_sparsity_ratio_vs_test_acc}
    }
    \hfill
    \subfigure[\small{\# of params v.s. top-1 test acc.}]{
    \includegraphics[width=0.31\textwidth,]{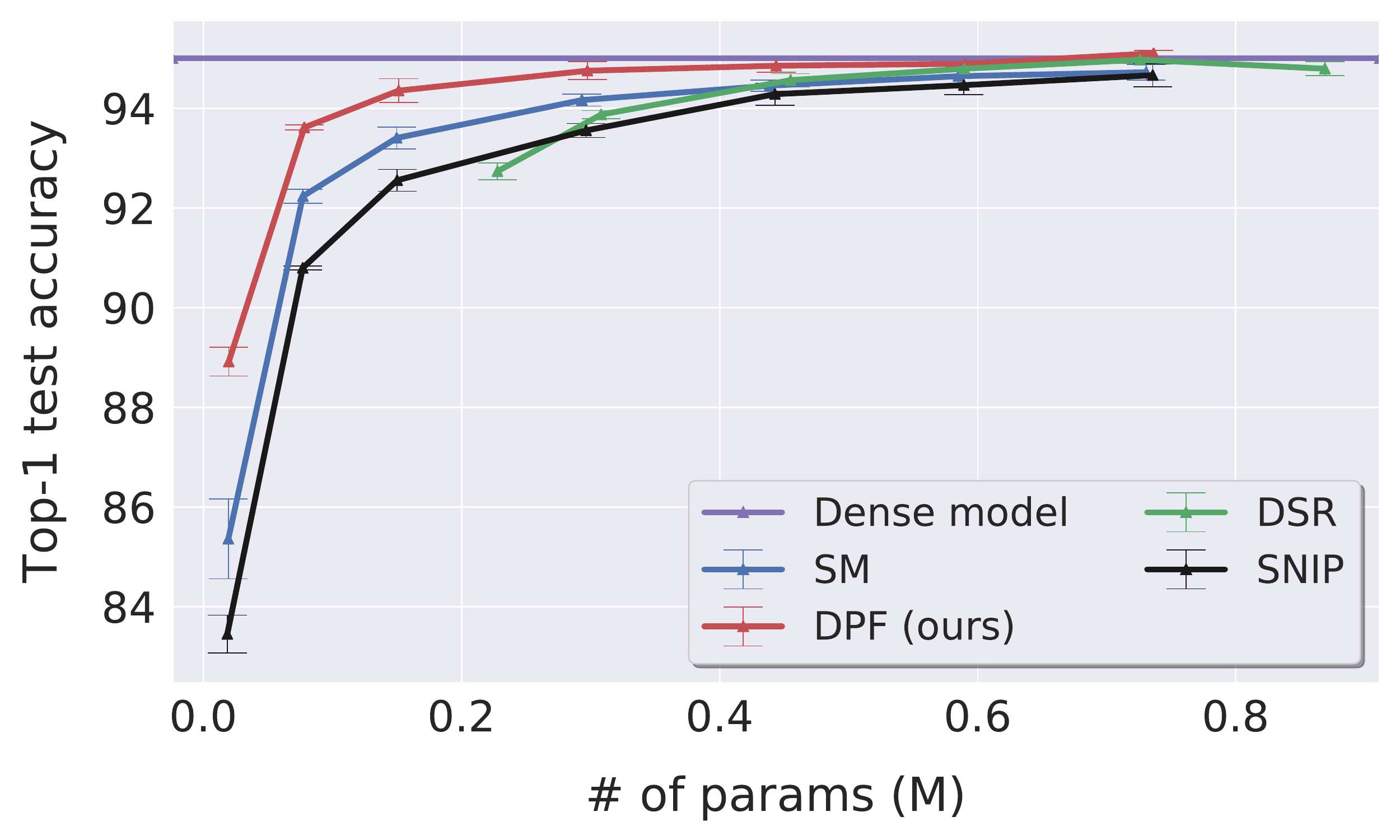}
        \label{fig:wideresnet28_2_cifar10_unstructured_n_params_vs_test_acc}
    }
    \hfill
    \subfigure[\small{MAC v.s. top-1 test acc.}]{
        \includegraphics[width=0.31\textwidth,]{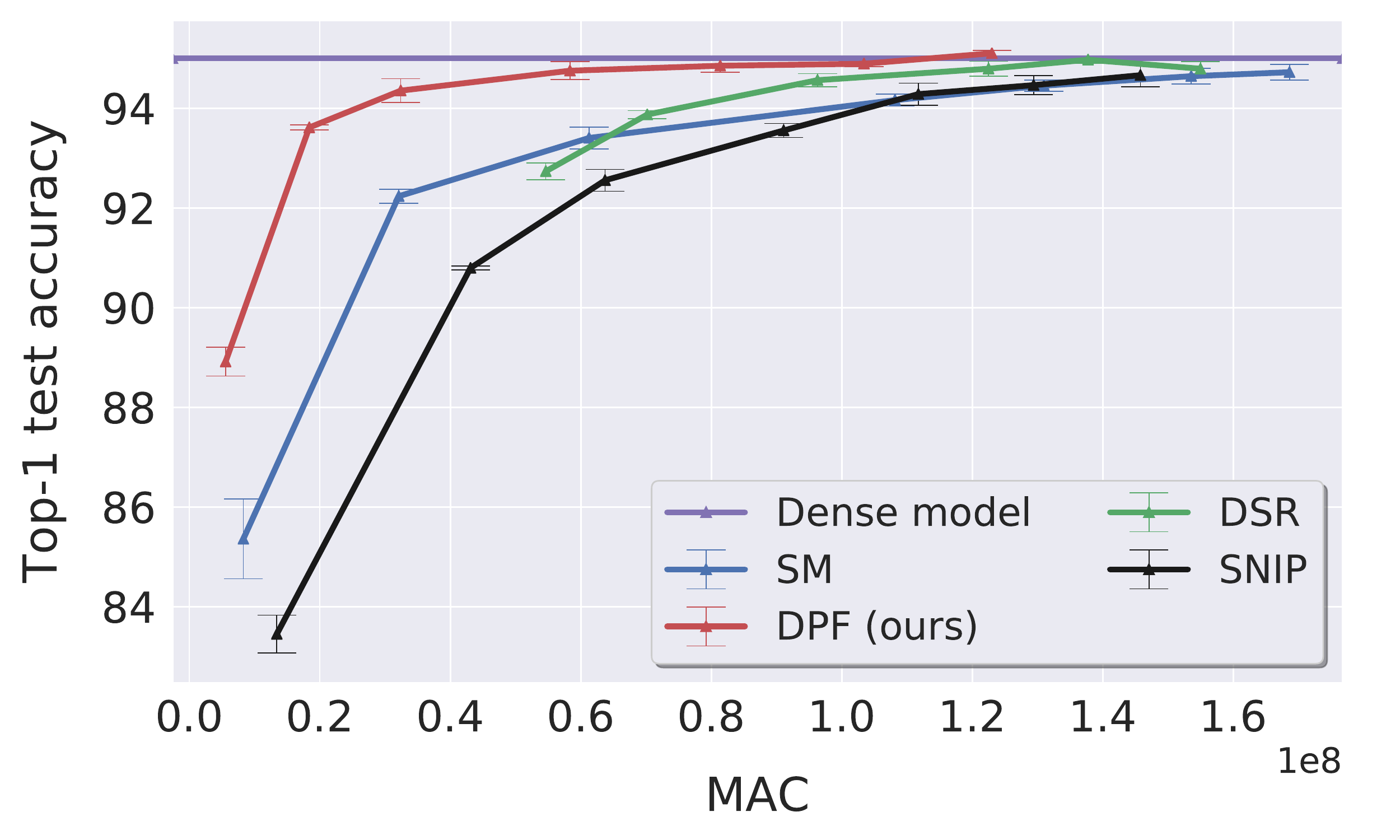}
        \label{fig:wideresnet28_2_cifar10_unstructured_mac_vs_test_acc}
    }
    \vspace{-3mm}
    \caption{\small	
		Top-1 test accuracy of \textbf{WideResNet-28-2} on \textbf{CIFAR-10} for unstructured weight pruning.
		The original model has $1.47$M parameters with $216$M MACs (Multiplier-ACcumulator).
		We varied the sparsity ratio from 50\% to 99\%. The complete numerical test accuracy values refer to Table~\ref{tab:sota_dnns_cifar10_unstructured_pruning_baseline_performance_complete} in Appendix~\ref{subsubsec:complete_unstructured_cifar10}.
		The lower \# of params and MACs the model has, the higher sparsity ratio it uses.
		All results are averaged over three runs.
		Note that different methods might consider different types of layers and 
		thus the same pruning sparsity ratio might result in the slight difference for both of \# of params and MACs.
		The DSR cannot converge when using the extreme high sparsity ratio (99\%).
		\looseness=-1
	}
    \label{fig:wideresnet28_2_cifar10_unstructured_demonstration}
\end{figure*}

\begin{table}[!h]
	\centering
	\caption{%
		\small
		Top-1 test accuracy of SOTA DNNs on \textbf{CIFAR-10} for unstructured weight pruning.
		We considered unstructured pruning and the $\star$ indicates the method cannot converge.
		The results are averaged for three runs.
		The results we presented for each model consider some reasonable pruning ratios (we prune more aggressively for deeper and wider neural networks),
		and readers can refer to Table~\ref{tab:sota_dnns_cifar10_unstructured_pruning_baseline_performance_complete} (in Appendix~\ref{subsubsec:complete_unstructured_cifar10}) for a complete overview.
		\looseness=-1
	}
	\label{tab:sota_dnns_cifar10_unstructured_pruning_baseline_performance}
	\resizebox{1.\textwidth}{!}{%
	\begin{tabular}{lcccccc}
		\toprule
		                    						&                       				& \multicolumn{4}{c}{Methods}          																										& 		\\ \cmidrule{3-6}
	\parbox{2cm}{Model}   			& \parbox{2cm}{\centering Baseline on \\dense model}		&\!\!\!\parbox{2.5cm}{\centering SNIP\\ (L$^+$, \citeyear{lee2018snip})}  	    \!\!&\!\! \parbox{2.5cm}{\centering SM\\(DZ, \citeyear{dettmers2019sparse})} 			\!\!&\!\! \parbox{2.5cm}{\centering DSR\\(MW, \citeyear{mostafa2019parameter})} 	\!\!\!\!\!\!\!& \parbox{2.5cm}{\centering \algopt}				& 	\parbox{2cm}{\centering Target Pr.\\ ratio}					    			\\ \midrule
		VGG16-D              						& $93.74 \pm 0.13$						& $93.04 \pm 0.26$					& $93.59 \pm 0.17$							&  - 							& $\mathbf{93.87} \pm 0.15$	& 95\%									\\ \midrule
		\multirow{4}{*}{ResNet-20}            		& \multirow{4}{*}{$92.48 \pm 0.20$}		& $91.10 \pm 0.22$ 					& $91.98 \pm 0.01$							& $92.00 \pm 0.19$				& $\mathbf{92.42} \pm 0.14$	& 70\%                               	\\ 
		            								& 										& $90.53 \pm 0.27$ 					& $91.54 \pm 0.16$							& $91.78 \pm 0.28$				& $\mathbf{92.17} \pm 0.21$	& 80\%                               	\\ 
				            						& 										& $88.50 \pm 0.13$ 					& $89.76 \pm 0.40$							& $87.88 \pm 0.04$				& $\mathbf{90.88} \pm 0.07$	& 90\%                               	\\ 
				            						& 										& $84.91 \pm 0.25$ 					& $83.03 \pm 0.74$							& $\star$						& $\mathbf{88.01} \pm 0.30$	& 95\%                               	\\ \midrule
		\multirow{2}{*}{ResNet-32}            		& \multirow{2}{*}{$93.83 \pm 0.12$} 	& $90.40 \pm 0.26$					& $91.54 \pm 0.18$							& $91.41 \pm 0.23$ 				& $\mathbf{92.42} \pm 0.18$	& 90\%                          		\\
				            						& 										& $87.23 \pm 0.29$					& $88.68 \pm 0.22$							& $84.12 \pm 0.32$				& $\mathbf{90.94} \pm 0.35$	& 95\%                          		\\ \midrule
		\multirow{2}{*}{ResNet-56}            		& \multirow{2}{*}{$94.51 \pm 0.20$} 	& $91.43 \pm 0.34$					& $92.73 \pm 0.21$							& $93.78 \pm 0.20$				& $\mathbf{93.95} \pm 0.11$	& 90\%                          		\\
				            						& 										& $\star$							& $90.96 \pm 0.40$							& $92.57 \pm 0.09$				& $\mathbf{92.74} \pm 0.08$	& 95\%                      			\\ \midrule
		\multirow{3}{*}{WideResNet-28-2}            & \multirow{3}{*}{$95.01 \pm 0.04$} 	& $92.58 \pm 0.22$					& $93.41 \pm 0.22$							& $93.88 \pm 0.08$				& $\mathbf{94.36} \pm 0.24$	& 90\%                          		\\
					            					& 										& $90.80 \pm 0.04$					& $92.24 \pm 0.14$							& $92.74 \pm 0.17$				& $\mathbf{93.62} \pm 0.05$	& 95\%                          		\\
					            					& 										& $83.45 \pm 0.38$					& $85.36 \pm 0.80$							& $\star$						& $\mathbf{88.92} \pm 0.29$	& 99\%                          		\\ \midrule
		\multirow{3}{*}{WideResNet-28-4}           	& \multirow{3}{*}{$95.69 \pm 0.10$} 	& $93.62 \pm 0.17$					& $94.45 \pm 0.14$							& $94.63 \pm 0.08$				& $\mathbf{95.38} \pm 0.04$	& 95\%                          		\\
					            					& 										& $92.06 \pm 0.38$					& $93.80 \pm 0.24$							& $93.92 \pm 0.16$				& $\mathbf{94.98} \pm 0.08$	& 97.5\%                          		\\
					            					& 										& $89.49 \pm 0.20$					& $92.18 \pm 0.04$							& $92.50 \pm 0.07$				& $\mathbf{93.86} \pm 0.20$	& 99\%                          		\\ \midrule 
		\multirow{5}{*}{WideResNet-28-8}			& \multirow{5}{*}{$96.06 \pm 0.06$} 	& $95.49 \pm 0.21$					& $95.67 \pm 0.14$							& $95.81 \pm 0.10$				& $\mathbf{96.08} \pm 0.15$	& 90\%                          		\\ 
					            					& 										& $94.92 \pm 0.13$  				& $95.64 \pm 0.07$							& $95.55 \pm 0.12$				& $\mathbf{95.98} \pm 0.10$	& 95\%                          		\\ 
					            					& 										& $94.11 \pm 0.19$  				& $95.31 \pm 0.20$							& $95.11 \pm 0.07$				& $\mathbf{95.84} \pm 0.04$	& 97.5\%                          	    \\ 
					            					& 										& $92.04 \pm 0.11$					& $94.38 \pm 0.12$							& $94.10 \pm 0.12$				& $\mathbf{95.63} \pm 0.16$	& 99\%                          		\\ 
        			            					& 										& $74.50 \pm 2.23$  				& $\star$									& $88.65 \pm 0.36$				& $\mathbf{91.76} \pm 0.18$	& 99.9\%                          		\\ 
		\bottomrule
	\end{tabular}%
	}
    \vspace{-1em}
\end{table}

Because simple model pruning techniques sometimes show better performance than complex techniques~\citep{gale2019state},
we further consider these simple models in Table~\ref{tab:sota_dnns_cifar10_unstructured_pruning_baseline_performance_extra_training}. 
While \algopt outperforms them in almost all settings, it faces difficulties pruning smaller models to extremely high sparsity ratios (e.g. ResNet-20 with sparsity ratio $95\%$).%
This seems however to be an artifact of fine-tuning, as \algopt with extra fine-tuning convincingly outperforms all other methods regardless of the network size. %
This comes to no surprise as schemes like One-shot P+FT and Incremental do not benefit from extra fine-tuning, since it is already incorporated in their training procedure and they might become stuck in local minima. 
On the other hand, dynamic pruning methods, and in particular \algopt, work on a different paradigm, and can still heavily benefit from fine-tuning.\footnote{%
Besides that a large fraction of the mask elements already converge during training (see e.g.\ Figure~\ref{fig:wideresnet28_2_cifar10_unstructured_masks_last_change} below), not all mask elements converge.  Thus~\ref{scheme:ours} can still benefit from fine-tuning  on the fixed sparsity mask.
}

Figure~\ref{fig:wideresnet28_cifar10_unstructured_better_neural_arch_search} (in Appendix~\ref{subsubsec:implicit_nas_unstructured})
depicts another interesting property of \algopt.
When we search for a subnetwork with a (small) predefined number of parameters for a fixed task, it is much better to run \algopt on a large model (e.g. WideResNet-28-8) than on a smaller one (e.g. WideResNet-28-2). That is, \algopt performs structural exploration more efficiently in larger parametric spaces.

\begin{table}[!h]
	\centering
	\caption{%
		\small
		Top-1 test accuracy of SOTA DNNs on \textbf{CIFAR-10} for unstructured weight pruning via some simple pruning techniques.
		This table complements Table~\ref{tab:sota_dnns_cifar10_unstructured_pruning_baseline_performance} 
		and evaluates the performance of model compression under One-shot P+FT and Incremental,
		as well as how extra fine-tuning (FT) impact the performance of Incremental and our \algopt.
		Note that One-shot P+FT prunes the dense model and uses extra fine-tuning itself.
		The Dense, Incremental and \algopt train with the same number of epochs from scratch.
		The extra fine-tuning procedure considers the model checkpoint at the end of the normal training,
		uses the same number of training epochs (60 epochs in our case) with tuned optimizer and learning rate.
		Detailed hyperparameters tuning procedure refers to Appendix~\ref{sec:implementation_details}.
		\looseness=-1
	}
	\label{tab:sota_dnns_cifar10_unstructured_pruning_baseline_performance_extra_training}
	\resizebox{1.\textwidth}{!}{%
	\begin{tabular}{lccccccc}
		\toprule
								&  				& \multicolumn{5}{c}{Methods}          																																		 										& 	\\ \cmidrule{3-7}
		\parbox{2cm}{Model}           						& \parbox{2cm}{\centering Baseline on\\dense model}	 								& \parbox{2.5cm}{\centering One-shot + FT\\ (H$^+$, \citeyear{han2015learning})}			& \parbox{2.5cm}{\centering Incremental\\ (ZG, \citeyear{zhu2017prune})} 	& \parbox{2.5cm}{\centering Incremental + FT}					& \parbox{2.5cm}{\centering \algopt}						& \parbox{2.5cm}{\centering \algopt + FT}					& 	\parbox{1.5cm}{\centering Target pr. ratio}					    			\\ \midrule
		\multirow{2}{*}{ResNet-20}             		& \multirow{2}{*}{$92.48 \pm 0.20$}  	& $90.18 \pm 0.12$ 								& $90.55 \pm 0.38$ 							& $90.54 \pm 0.25$								& $90.88 \pm 0.07$				& $\mathbf{91.76} \pm 0.12$				& 90\%                               	\\ 
				            						& 										& $86.91 \pm 0.16$								& $89.21 \pm 0.10$ 							& $89.24 \pm 0.28$								& $88.01 \pm 0.30$				& $\mathbf{90.34} \pm 0.31$				& 95\%                               	\\ \midrule
		\multirow{2}{*}{ResNet-32}             		& \multirow{2}{*}{$93.83 \pm 0.12$} 	& $91.72 \pm 0.15$								& $91.69 \pm 0.12$							& $91.76 \pm 0.14$								& $92.42 \pm 0.18$				& $\mathbf{92.61} \pm 0.11$				& 90\%                          		\\
				            						& 										& $89.31 \pm 0.18$								& $90.86 \pm 0.17$ 							& $90.93 \pm 0.18$								& $90.94 \pm 0.35$				& $\mathbf{92.18} \pm 0.14$				& 95\%                          		\\ \midrule
		\multirow{2}{*}{ResNet-56}            		& \multirow{2}{*}{$94.51 \pm 0.20$} 	& $93.26 \pm 0.06$								& $93.14 \pm 0.23$							& $93.09 \pm 0.16$								& $\mathbf{93.95} \pm 0.11$		& $\mathbf{93.95} \pm 0.17$				& 90\%                          		\\
				            						& 										& $91.61 \pm 0.07$								& $92.14 \pm 0.10$ 							& $92.50 \pm 0.25$								& $92.74 \pm 0.08$				& $\mathbf{93.25} \pm 0.15$				& 95\%                      			\\
		\bottomrule
	\end{tabular}%
	}
\end{table}

\textbf{ImageNet.} 
We compared \algopt to other dynamic reparameterization methods as well as the strong Incremental baseline in Table~\ref{tab:resnet50_imagenet_unstructured_pruning_baseline_performance}. 
For both sparsity levels ($80\%$ and $90\%$), \algopt shows a significant improvement of top-1 test accuracy with fewer or equal number of parameters. 

\begin{table}[h!]
	\centering
	\caption{
		\small
		Top-1 test accuracy of \textbf{ResNet-50} on \textbf{ImageNet} for unstructured weight pruning.
		The \# of parameters for the full model is 25.56 M.
		We used the results of DSR from~\citet{mostafa2019parameter}
		as we use the same (standard) training/data augmentation scheme for the same neural architecture.
		Note that different methods prune different types of layers and result in the different \# of parameters for the same target pruning ratio.
		We also directly (and fairly) compare with the results of Incremental~\citep{zhu2017prune} reproduced and fine-tuned by~\citet{gale2019state},
		where they consider layer-wise sparsity ratio and fine-tune both the sparsity warmup schedule and label-smoothing for better performance.
		\looseness=-1
	}
	\label{tab:resnet50_imagenet_unstructured_pruning_baseline_performance}
	\resizebox{1.\textwidth}{!}{%
	\begin{tabular}{lccccccccc}
		\toprule
															& \multicolumn{3}{c}{Top-1 accuracy} 														& \multicolumn{3}{c}{Top-5 accuracy}          												& \multicolumn{2}{c}{Pruning ratio}			& 										\\ \cmidrule(lr){2-4} \cmidrule(lr){5-7} \cmidrule(lr){8-9}
		Method												& Dense						& Pruned						& Difference					& Dense						& Pruned 						& Difference 					& Target 									& Reached								& remaining \# of params \\ \midrule
		Incremental~(ZG, \citeyear{zhu2017prune})       	& 75.95						& 74.25							& -1.70							& 92.91						& 91.84 						& -1.07							& 80\%										& 73.5\%								& \textbf{6.79 M}									\\
		DSR~(MW, \citeyear{mostafa2019parameter})	    	& 74.90						& 73.30 						& -1.60							& 92.40						& 92.40 						& \textbf{0}					& 80\%										& 71.4\%								& 7.30 M								 			\\
		SM (DZ, \citeyear{dettmers2019sparse})              & 75.95						& 74.59							& -1.36							& 92.91						& 92.37 						& -0.54							& 80\%										& 72.4\%								& 7.06 M								 			\\
		\algopt                								& 75.95						& 75.48							& \textbf{-0.47}				& 92.91 					& 92.59 						& -0.32							& 80\%										& 73.5\%								& \textbf{6.79 M}									\\	\midrule
		Incremental~\citep{gale2019state}       			& 76.69						& 75.58							& -1.11							& -							& - 							& - 							& 80\%										& 79.9\%								& 5.15 M											\\
		\algopt                								& 75.95						& 75.13							& \textbf{-0.82}				& 92.91 					& 92.52 						& -0.39							& 80\%										& 79.9\% 								& 5.15 M											\\	\midrule
		Incremental (ZG, \citeyear{zhu2017prune})       	& 75.95						& 73.36							& -2.59							& 92.91						& 91.27 						& -1.64							& 90\%										& 82.6\%								& \textbf{4.45 M}						 			\\
		DSR (MW, \citeyear{mostafa2019parameter})     		& 74.90						& 71.60							& -3.30							& 92.40						& 90.50 						& -1.90							& 90\%										& 80.0\%								& 5.10 M											\\
		SM (DZ, \citeyear{dettmers2019sparse})              & 75.95						& 72.65							& -3.30							& 92.91						& 91.26 						& -1.65							& 90\%										& 82.0\%								& 4.59 M								 			\\
		\algopt                								& 75.95						& 74.55							& \textbf{-1.44}				& 92.91						& 92.13							& \textbf{-0.78}				& 90\%										& 82.6\%								& \textbf{4.45 M}						 			\\
		\bottomrule
	\end{tabular}%
	}
\end{table}

\section{Discussion}\label{sec:discussion}
\vspace{-1mm}
Besides the theoretical guarantees, a straightforward benefit of \algopt over one-shot pruning in practice is its \textit{fine-tuning free} training process. 
Figure~\ref{fig:trivial_computational_overhead} in the appendix (Section~\ref{sec:hyperparameter_impact}) 
demonstrates the trivial computational overhead (considering the dynamic reparameterization cost) of involving~\algopt to train the model from scratch. 
Small number of hyperparameters compared to other dynamic reparameterization methods (e.g. DSR and SM) is another advantage of \algopt and Figure~\ref{fig:resnet20_cifar10_hyper_param_impact} further studies how different setups of~\algopt impact the final performance.
Notice also that for \algopt, inference is done only at sparse models, an aspect that could be leveraged for more efficient computations.

\textbf{Empirical difference between one-shot pruning and \algopt.}
From the Figure~\ref{fig:2D_example} one can see that \algopt tends to oscillate among several local minima, whereas one-shot pruning, even with fine tuning, converges to a single solution, which is not necessarily close to the optimum. 
We believe that the wider exploration of \algopt helps to find a better local minima (which can be even further improved by fine-tuning, as shown in Table~\ref{tab:sota_dnns_cifar10_unstructured_pruning_baseline_performance_extra_training}). 
We empirically analyzed how drastically the mask changes between each reparameterization step, and how likely it is for some pruned weight to become non-zero in the later stages of training. 
Figure~\ref{fig:wideresnet28_2_cifar10_unstructured_masks_last_change} shows at what stage of the training each element of the final mask becomes fixed.
For each epoch, we report how many mask elements were flipped starting from this epoch.
As an example, we see that for sparsity ratio 95\%, after epoch 157 (i.e. for 43 epochs left), only 5\% of the mask elements were changing.
This suggests that, except for a small percentage of weights that keep oscillating, the mask has converged early in the training.
In the final epochs, the algorithm keeps improving accuracy, but the masks are only being fine-tuned.
A similar mask convergence behavior can be found in Appendix (Figure~\ref{fig:resnet20_cifar10_unstructured_masks_last_change_from0_inplace}) 
for training ResNet-20 on CIFAR-10.

 \begin{figure*}[!h]
	\centering
    \vspace{-0.5em}
    \includegraphics[width=0.75\textwidth,]{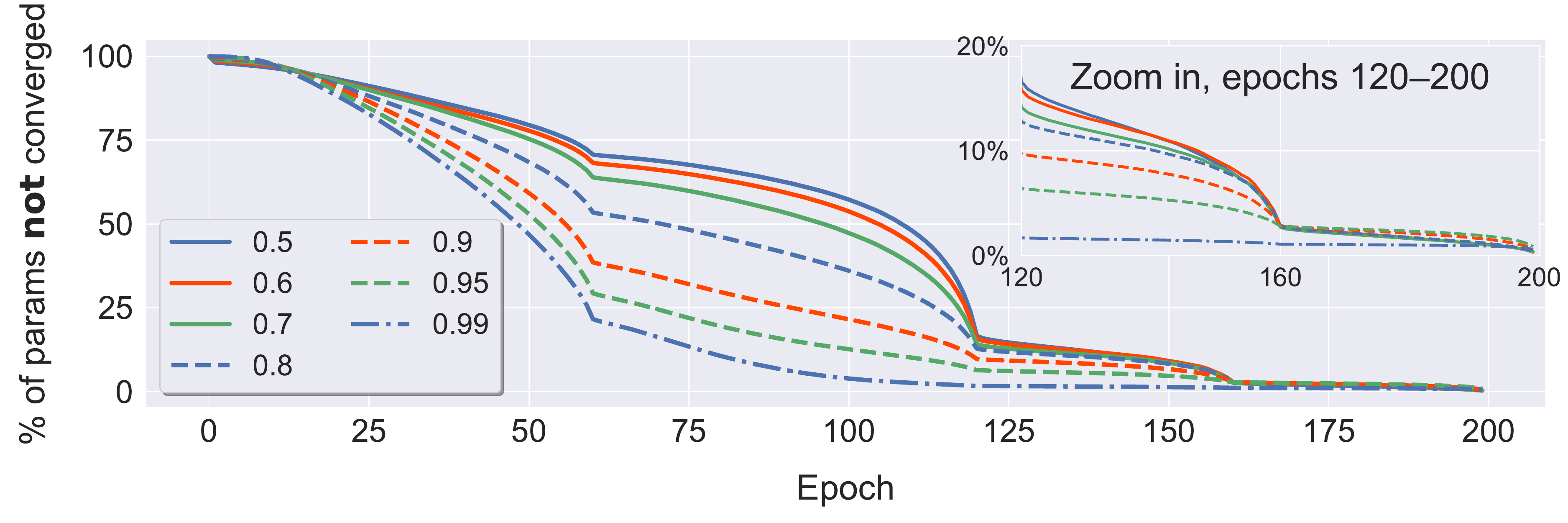}
        \label{fig:wideresnet28_2_cifar10_unstructured_masks_last_change_from0_inplace}
    \vspace{-1em}
    \caption{\small
		Convergence of the pruning mask $\mm_t$ of \algopt for different target sparsity levels (see legend). 
		The $y$-axis represent the percentage of mask elements that still change \textbf{after} a certain epoch ($x$-axis).
		The illustrated example are from WideResNet-28-2 on CIFAR-10. We decayed the learning rate at $60,\!120,\!160$ epochs.
		\looseness=-1
    }
    \label{fig:wideresnet28_2_cifar10_unstructured_masks_last_change}
\end{figure*}

\textbf{\algopt does not find a lottery ticket.} 
The LT hypothesis~\citep{frankle2018the} conjectures that for every desired sparsity level there exists a sparse submodel that can be trained to the same or better performance as the dense model. 
In Figure~\ref{fig:wideresnet28_2_cifar10_unstructured_pruning_unstructured_lottery_ticket_in_all} we show that the mask found by \algopt is not a LT, i.e., training the obtained sparse model from scratch does not recover the same performance.
The (expensive) procedure proposed in~\citet{frankle2018the,frankle2019lottery} finds different masks and achieves the same performance as \algopt for mild sparsity levels, but \algopt is much better for extremely sparse models (99\% sparsity).

\begin{figure}[!h]
	\centering
    \vspace{-1em}
	\includegraphics[width=0.7\textwidth]{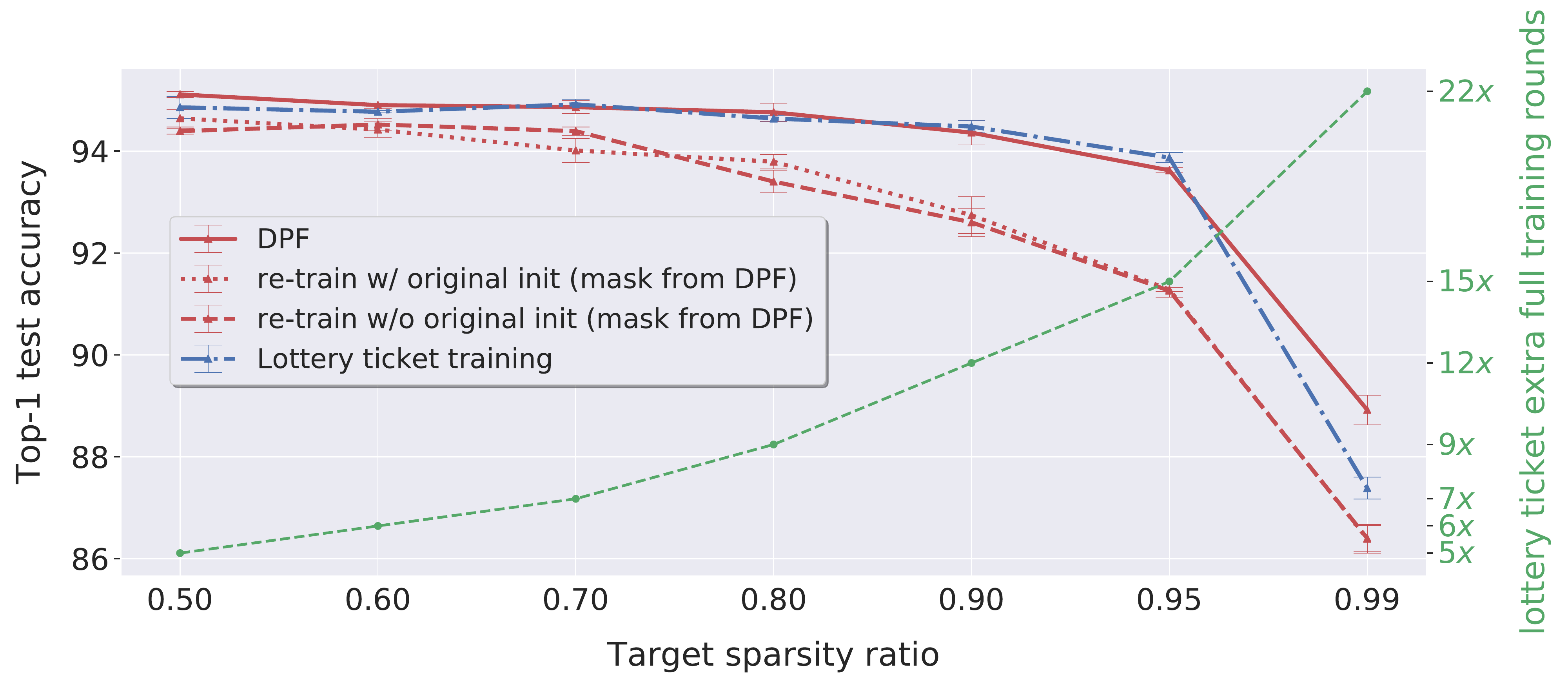}
    \vspace{-1em}
    \caption{\small
		Top-1 test accuracy for different target sparsity levels (on WideResNet-28-2 with CIFAR-10, unstructured pruning). 
		\algopt reaches comparable accuracy than the LT training method (and better for 99\% target sparsity), but involves much less computation (\textbf{right $y$-axis, green}).
		Training the sparse models found by \algopt from scratch does not reach the same performance (hence our sparse models are not lottery tickets). %
    }
    \label{fig:wideresnet28_2_cifar10_unstructured_pruning_unstructured_lottery_ticket_in_all}
\end{figure}

\textbf{Extension to structured pruning.}
The current state-of-the-art dynamic reparameterization methods only consider unstructured weight pruning.
Structured filter pruning\footnote{
	\citet{lym2019prunetrain} consider structured filter pruning and reconfigure the large (but sparse) model to small (but dense) model during the training for better training efficiency.
	Note that they perform model update on a gradually reduced model space,
	and it is completely different from the dynamic reparameterization methods (e.g. DSR, SM and our scheme) that perform reparameterization under original (full) model space.
} is either ignored~\citep{bellec2018deep,mocanu2018scalable,dettmers2019sparse}
or shown to be challenging~\citep{mostafa2019parameter} even for the CIFAR dataset.
In Figure~\ref{tab:sota_dnns_cifar10_structured_pruning_performance} below we presented some preliminary results on CIFAR-10 to show that our \algopt can also be applied to structured filter pruning schemes. 
\emph{\algopt outperforms the current filter-norm based state-of-the-art method for structured pruning (e.g. SFP~\citep{he2018soft}) by a noticeable margin.}
Figure~\ref{fig:wideresnet28_2_cifar10_structured_pruning_sparsity_complete}
in Appendix~\ref{subsec:visualize_model_sparsity} displays the transition procedure of the sparsity pattern (of different layers) for WideResNet-28-2 with different sparsity levels. \algopt can be seen as a particular neural architecture search method, as it gradually learns to prune entire layers under the guidance of the feedback signal.
\looseness=-1

We followed the common experimental setup as mentioned in Section~\ref{subsec:unstructured_exp_setup} with $\ell_2$ norm based filter selection criteria for structured pruning extension on CIFAR-10.
We do believe a better filter selection scheme~\citep{ye2018rethinking,he2019filter,lym2019prunetrain}
could further improve the results 
but we leave this exploration for the future work.

\begin{figure*}[!h]
    \centering
    \subfigure[\small{WideResNet-28-2.}]{
        \includegraphics[width=0.31\textwidth,]{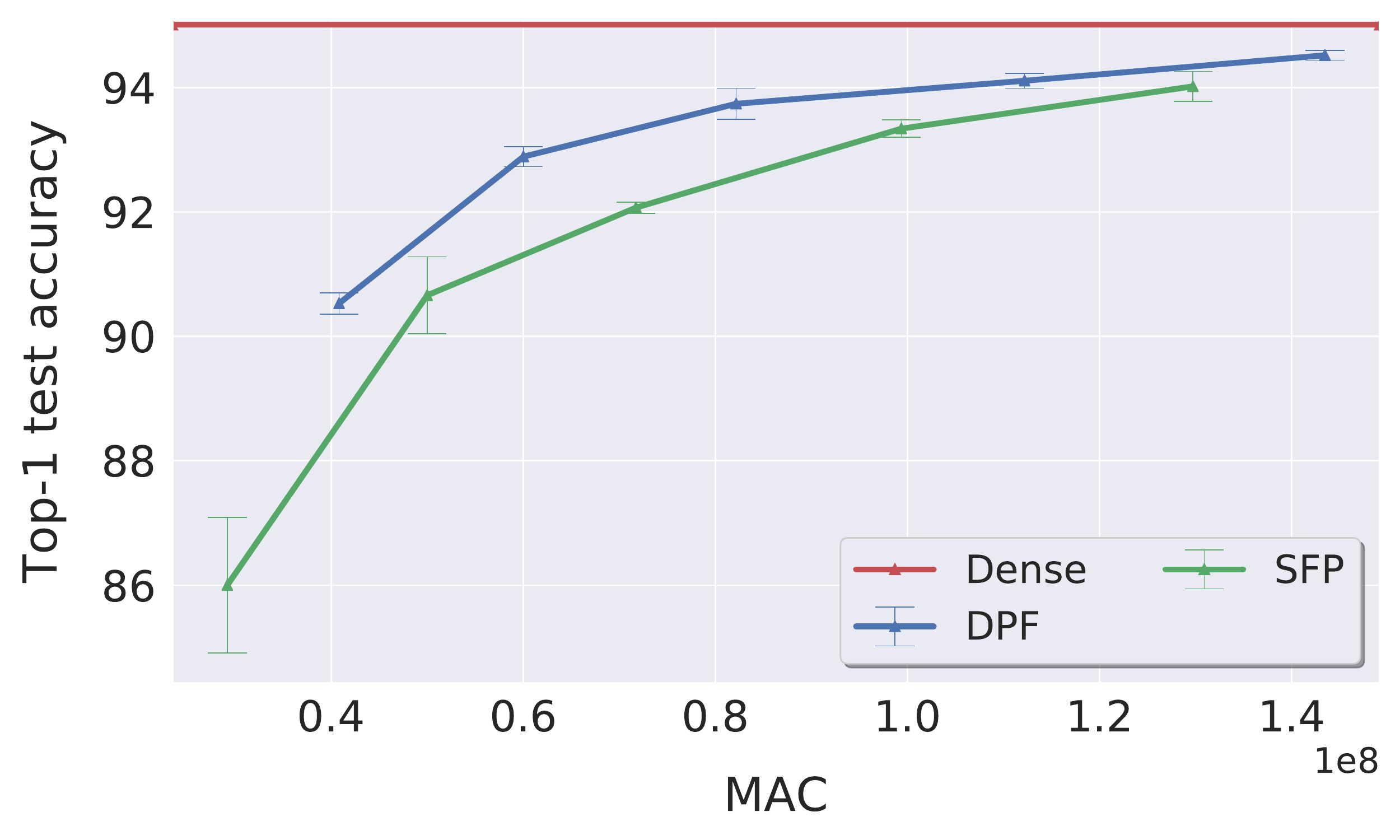}
        \label{fig:wideresnet28_2_cifar10_structured_mac_vs_test_acc}
	}
    \hfill
    \subfigure[\small{WideResNet-28-4.}]{
        \includegraphics[width=0.31\textwidth,]{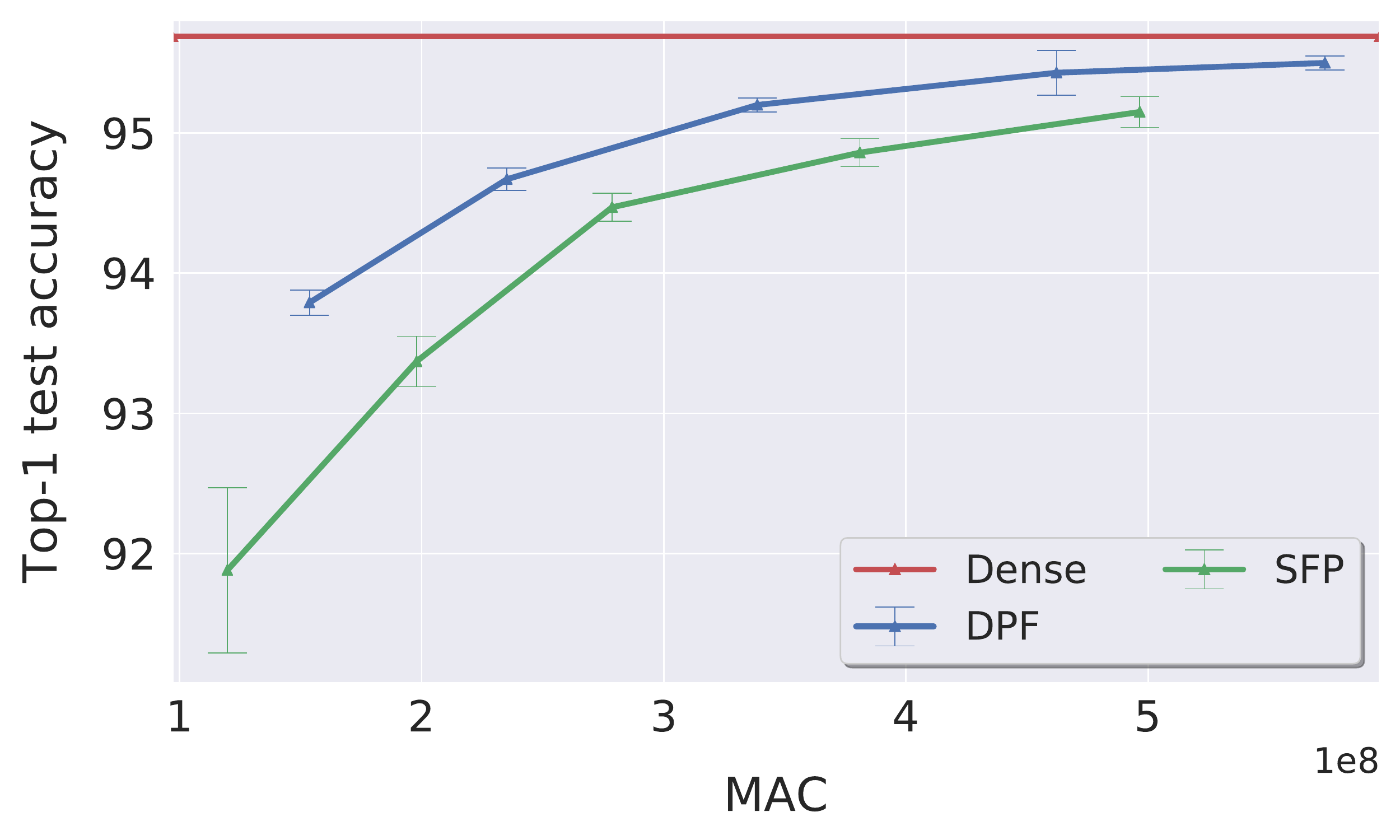}
        \label{fig:wideresnet28_4_cifar10_structured_mac_vs_test_acc}
	}
    \hfill
    \subfigure[\small{WideResNet-28-8.}]{
        \includegraphics[width=0.31\textwidth,]{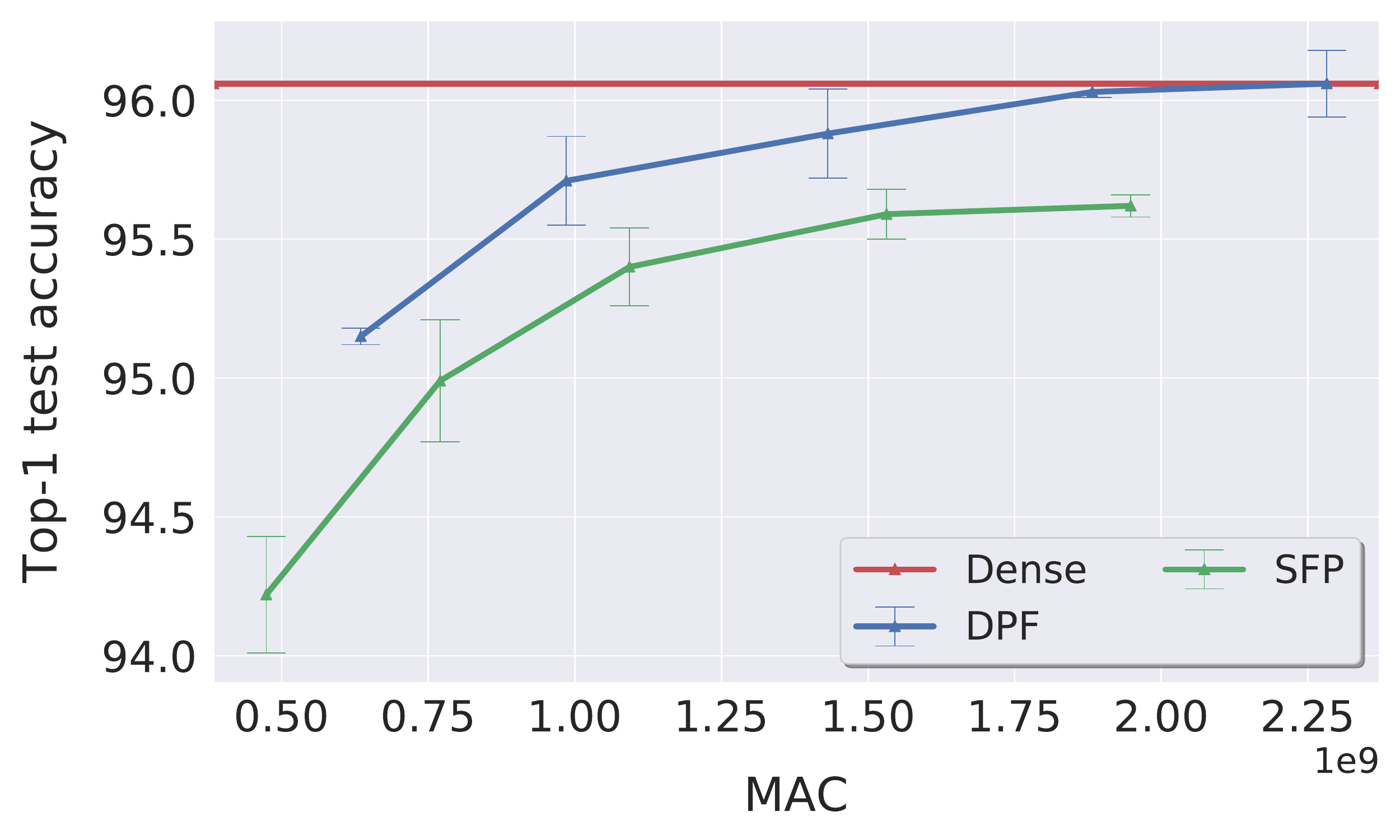}
        \label{fig:wideresnet28_8_cifar10_structured_mac_vs_test_acc}
    }
    \vspace{-1em}
    \caption{\small
		MAC v.s. top-1 test accuracy, for training WideResNet-28 (with different width) on CIFAR-10.
		The reported results are averaged over three runs.
		The WideResNet-28-2 has $216$M MACs, WideResNet-28-4 has $848$M MACs and WideResNet-28-8 has $3366$M MACs.
		Other detailed information refers to the Appendix~\ref{subsec:generalization_structured_pruning_cifar10},
		e.g., the \# of params v.s. top-1 test accuracy in Figure~\ref{fig:wideresnet28_cifar10_structured_demonstration_n_params_vs_acc},
		and the numerical test accuracy score in Table~\ref{tab:sota_dnns_cifar10_structured_pruning_performance}.
    }
    \label{fig:wideresnet28_cifar10_structured_demonstration_mac_vs_acc}
\end{figure*}

\clearpage
\section*{Acknowledgements}
We acknowledge funding from SNSF grant 200021\_175796, as well as a Google Focused Research Award.

\bibliography{iclr2020_conference}
\bibliographystyle{iclr2020_conference}

\clearpage
\appendix
\section{Appendix}

\subsection{Algorithm}
\label{sec:algorithm}
\renewcommand{\algorithmiccomment}[1]{\hfill $\triangleright$ #1}
\begin{algorithm}[!]\footnotesize
	\caption{\small\itshape {The detailed training procedure of \algopt.}}
	\label{algo:our_algo}
	\begin{algorithmic}[1]
	\REQUIRE uncompressed model weights $\xx \in \R^d$, pruned weights: $\hat \xx$,
		mask: $\mm \in  \{0,1\}^d$; reparametrization period: $p$;
		training iterations: $T$.
	\FOR {$t = 0, \ldots, T$}
		\IF[trigger mask update, per default every $p=16$ iterations] {$p \mid t$}
			\STATE compute mask $\mm \gets \mm_t(\xx_t)$  \COMMENT{by arbitrary pruning scheme (e.g. unstructured magnitude pruning)}
		\ENDIF
		\STATE $\hat \xx_t \gets \mm \odot \xx_t$ \COMMENT{apply (precomputed) mask}
		\STATE compute (mini-batch) gradient $\gg(\hat{\xx})$ \COMMENT{forward/backward pass with pruned weights $\hat \xx_t$}
		\STATE $\xx_{t+1}\gets$ gradient update $\gg(\hat{\xx})$ to $\xx_t$  \COMMENT{via arbitrary optimizer (e.g. SGD with momentum)}
	\ENDFOR
	\ENSURE $\xx_T$ and $\hat \xx_T$
	\end{algorithmic}
\end{algorithm}

We trigger the mask update every $p=16$ iterations (see also Figure~\ref{fig:resnet20_cifar10_hyper_param_impact})  and we keep this parameter fixed throughout all experiments, independent of architecture or task.

We perform pruning across all neural network layers (no layer-wise pruning) using magnitude-based unstructured weight pruning (inherited from~\citet{han2015learning}). Pruning is applied to all convolutional layers while keeping the last fully-connected layer, biases and batch normalization layers dense.

We gradually increases the sparsity $s_t$ of the mask from 0 to the desired sparsity using the same scheduling as in~\citet{zhu2017prune}; see Appendix~\ref{sec:implementation_details} below.

\subsection{Implementation Details} \label{sec:implementation_details}
We implemented our \algopt in PyTorch~\citep{paszke2017automatic}.
All experiments were run on NVIDIA Tesla V100 GPUs.
Sparse tensors in our implementation are respresented as the dense tenors multiplied by the corresponding binary masks.

\paragraph{Datasets}
We evaluate all methods on the following standard image classification tasks:
\begin{itemize}
\item Image classification for CIFAR-10~\citep{krizhevsky2009learning}.
Dataset consists of a training set of $50$K and a test set of $10$K color images of $32 \times 32$ pixels, as well as $10$ target classes.
We adopt the standard data augmentation and preprocessing scheme~\citep{he2016deep,huang2016deep}.

\item Image classification for ImageNet~\citep{russakovsky2015imagenet}.
The ILSVRC $2012$ classification dataset consists of $1.28$ million images for training,
and $50$K for validation, with $1$K target classes.
We use ImageNet-1k~\citep{imagenet_cvpr09}
and adopt the same data preprocessing and augmentation scheme %
as in~\citet{he2016deep,he2016identity,simonyan2014very}.
\end{itemize}

\paragraph{Gradual Pruning Scheme}
For Incremental baseline, we tuned their automated gradual pruning scheme $\smash{s_t = s_f + (s_i - s_f) \left( 1 - \frac{t - t_0}{n \Delta t} \right)^3}$
to gradually adjust the pruning sparsity ratio $s_t$ for $t \in \{ t_0, \ldots, t_0 + n \Delta t \}$.
That is, in our setup, we increased from an initial sparsity ratio $s_i=0$ to the desired target model sparsity ratio $s_f$ over the epoch ($n$) 
when performing the second learning rate decay,
from the training epoch $t_0 = 0$ and with pruning frequency $\Delta t = 1$ epoch.
In our experiments, we used this gradual pruning scheme over different methods,
except One-shot P+FT, SNIP, and the methods (DSR, SM) that have their own fine-tuned gradual pruning scheme.

\paragraph{Hyper-parameters tuning procedure} 
We grid-searched the optimal learning rate, starting from the range of $\{0.05, 0.10, 0.15, 0.20\}$.
More precisely, we will evaluate a linear-spaced grid of learning rates.
If the best performance was ever at one of the extremes of the grid,
we would try new grid points so that the best performance was contained in the middle of the parameters.

We trained most of the methods by using mini-batch SGD with Nesterov momentum.
For baselines involving fine-tuning procedure (e.g. Table~\ref{tab:sota_dnns_cifar10_unstructured_pruning_baseline_performance_extra_training}),
we grid-searched the optimal results by tuning the optimizers (i.e. mini-batch SGD with Nesterov momentum, or Adam) and the learning rates.

\paragraph{The optimal hyper-parameters for \algopt}
The mini-batch size is fixed to $128$ for CIFAR-10 and $1024$ for ImageNet regardless of datasets and models.

For CIFAR-10, we trained ResNet-$a$ and VGG for $300$ epochs and decayed the learning rate by $10$ when accessing $50\%$ and $75\%$ of the total training samples~\citep{he2016deep,huang2016densely};
and we trained WideResNet-$a$-$b$ as~\citet{zagoruyko2016wide} for $200$ epochs and decayed the learning rate by $5$ when accessing $30\%$, $60\%$ and $80\%$ of the total training samples.
The optimal learning rate for ResNet-$a$, WideResNet-$a$-$b$ and VGG are $0.2$, $0.1$ and $0.2$ respectively;
the corresponding weight decays are $1e\!-\!4$, $5e\!-\!4$ and $1e\!-\!4$ respectively.

For ImageNet training, we used the training scheme in~\citet{goyal2017accurate} for $90$ epochs,
where we gradually warmup the learning rate from $0.1$ to $0.4$ and decayed learning rate by $10$ at $30; 60; 80$ epochs.
The used weight decay is $1e\!-\!4$.

\subsection{Additional Results for Unstructured Pruning} \label{subsec:additional_results_unstructured}

\subsubsection{Complete Results of Unstructured Pruning on CIFAR-10} \label{subsubsec:complete_unstructured_cifar10}
Table~\ref{tab:sota_dnns_cifar10_unstructured_pruning_baseline_performance_complete} details the numerical results for training SOTA DNNs on CIFAR-10.
Some results of it reconstruct the Table~\ref{tab:sota_dnns_cifar10_unstructured_pruning_baseline_performance} and Figure~\ref{fig:wideresnet28_2_cifar10_unstructured_demonstration}.

\begin{table}[!h]
	\centering
	\caption{\small%
		Top-1 test accuracy for training (compressed) SOTA DNNs on \textbf{CIFAR-10} from scratch.
		We considered unstructured pruning and the $\star$ indicates the method cannot converge.
		The results are averaged for three runs.
		\looseness=-1
	}
	\label{tab:sota_dnns_cifar10_unstructured_pruning_baseline_performance_complete}
	\resizebox{1.\textwidth}{!}{%
	\begin{tabular}{lcccccc}
		\toprule
		\multicolumn{1}{c}{} 						& \multicolumn{1}{c}{} 	& \multicolumn{4}{c}{Methods}          																										& \multicolumn{1}{c}{}					\\ \cmidrule{3-6}
		\parbox{2cm}{Model}                						& \parbox{2cm}{\centering Baseline on \\dense model} 				& \!\!\!\parbox{2.5cm}{\centering SNIP\\ (L$^+$, \citeyear{lee2018snip})}  	    \!\!&\!\! \parbox{2.5cm}{\centering SM\\(DZ, \citeyear{dettmers2019sparse})} 			\!\!&\!\! \parbox{2.5cm}{\centering DSR\\(MW, \citeyear{mostafa2019parameter})} 	\!\!\!\!\!\!\!& \parbox{2.5cm}{\centering \algopt}				& \parbox{2cm}{\centering Target Pr.\\ ratio}						    \\ \midrule
		VGG16-D              						& $93.74 \pm 0.13$		& $93.04 \pm 0.26$					& $93.59 \pm 0.17$							&  - 							& $\mathbf{93.87} \pm 0.15$	& 95\%									\\ \midrule
		ResNet-20            						& $92.48 \pm 0.20$		& $91.10 \pm 0.22$ 					& $91.98 \pm 0.01$							& $92.00 \pm 0.19$				& $\mathbf{92.42} \pm 0.14$	& 70\%                               	\\ 
		ResNet-20            						& $92.48 \pm 0.20$		& $90.53 \pm 0.27$ 					& $91.54 \pm 0.16$							& $91.78 \pm 0.28$				& $\mathbf{92.17} \pm 0.21$	& 80\%                               	\\ 
		ResNet-20            						& $92.48 \pm 0.20$		& $88.50 \pm 0.13$ 					& $89.76 \pm 0.40$							& $87.88 \pm 0.04$				& $\mathbf{90.88} \pm 0.07$	& 90\%                               	\\ 
		ResNet-20            						& $92.48 \pm 0.20$		& $84.91 \pm 0.25$ 					& $83.03 \pm 0.74$							& $\star$						& $\mathbf{88.01} \pm 0.30$	& 95\%                               	\\ \midrule
		ResNet-32            						& $93.83 \pm 0.12$		& $90.40 \pm 0.26$					& $91.54 \pm 0.18$							& $91.41 \pm 0.23$ 				& $\mathbf{92.42} \pm 0.18$	& 90\%                          		\\
		ResNet-32            						& $93.83 \pm 0.12$		& $87.23 \pm 0.29$					& $88.68 \pm 0.22$							& $84.12 \pm 0.32$				& $\mathbf{90.94} \pm 0.35$	& 95\%                          		\\ \midrule
		ResNet-56            						& $94.51 \pm 0.20$		& $91.43 \pm 0.34$					& $92.73 \pm 0.21$							& $93.78 \pm 0.20$				& $\mathbf{93.95} \pm 0.11$	& 90\%                          		\\
		ResNet-56            						& $94.51 \pm 0.20$		& $\star$							& $90.96 \pm 0.40$							& $92.57 \pm 0.09$				& $\mathbf{92.74} \pm 0.08$	& 95\%                      			\\ \midrule
		WideResNet-28-2            					& $95.01 \pm 0.04$		& $94.67 \pm 0.23$					& $94.73 \pm 0.16$							& $94.80 \pm 0.14$				& $\mathbf{95.11} \pm 0.06$	& 50\%                          		\\
		WideResNet-28-2            					& $95.01 \pm 0.04$		& $94.47 \pm 0.19$					& $94.65 \pm 0.16$							& $94.98 \pm 0.07$				& $94.90 \pm 0.06$			& 60\%                          		\\
		WideResNet-28-2            					& $95.01 \pm 0.04$		& $94.29 \pm 0.22$					& $94.46 \pm 0.11$							& $94.80 \pm 0.15$				& $\mathbf{94.86} \pm 0.13$	& 70\%                          		\\
		WideResNet-28-2            					& $95.01 \pm 0.04$		& $93.56 \pm 0.14$					& $94.17 \pm 0.12$							& $94.57 \pm 0.13$				& $\mathbf{94.76} \pm 0.18$	& 80\%                          		\\
		WideResNet-28-2            					& $95.01 \pm 0.04$		& $92.58 \pm 0.22$					& $93.41 \pm 0.22$							& $93.88 \pm 0.08$				& $\mathbf{94.36} \pm 0.24$	& 90\%                          		\\
		WideResNet-28-2            					& $95.01 \pm 0.04$		& $90.80 \pm 0.04$					& $92.24 \pm 0.14$							& $92.74 \pm 0.17$				& $\mathbf{93.62} \pm 0.05$	& 95\%                          		\\
		WideResNet-28-2            					& $95.01 \pm 0.04$		& $83.45 \pm 0.38$					& $85.36 \pm 0.80$							& $\star$						& $\mathbf{88.92} \pm 0.29$	& 99\%                          		\\ \midrule
		WideResNet-28-4            					& $95.69 \pm 0.10$		& $95.42 \pm 0.05$					& $95.57 \pm 0.08$							& $95.67 \pm 0.07$				& $95.58 \pm 0.21$			& 70\%                          		\\ 
		WideResNet-28-4            					& $95.69 \pm 0.10$		& $95.24 \pm 0.07$					& $95.27 \pm 0.02$							& $95.49 \pm 0.04$				& $\mathbf{95.60} \pm 0.08$	& 80\%                          		\\ 
		WideResNet-28-4            					& $95.69 \pm 0.10$		& $94.56 \pm 0.11$					& $95.01 \pm 0.05$							& $95.30 \pm 0.12$				& $\mathbf{95.65} \pm 0.14$	& 90\%                					\\ 
		WideResNet-28-4            					& $95.69 \pm 0.10$		& $93.62 \pm 0.17$					& $94.45 \pm 0.14$							& $94.63 \pm 0.08$				& $\mathbf{95.38} \pm 0.04$	& 95\%                          		\\
		WideResNet-28-4            					& $95.69 \pm 0.10$		& $92.06 \pm 0.38$					& $93.80 \pm 0.24$							& $93.92 \pm 0.16$				& $\mathbf{94.98} \pm 0.08$	& 97.5\%                          		\\
		WideResNet-28-4            					& $95.69 \pm 0.10$		& $89.49 \pm 0.20$					& $92.18 \pm 0.04$							& $92.50 \pm 0.07$				& $\mathbf{93.86} \pm 0.20$	& 99\%                          		\\ \midrule
		WideResNet-28-8            					& $96.06 \pm 0.06$		& $95.81 \pm 0.05$					& $95.92 \pm 0.12$							& $96.06 \pm 0.09$				& -  						& 70\%                          		\\ 
		WideResNet-28-8            					& $96.06 \pm 0.06$		& $95.86 \pm 0.10$					& $95.97 \pm 0.05$							& $96.05 \pm 0.12$				& -  						& 80\%                          		\\ 
		WideResNet-28-8            					& $96.06 \pm 0.06$		& $95.49 \pm 0.21$					& $95.67 \pm 0.14$							& $95.81 \pm 0.10$				& $\mathbf{96.08} \pm 0.15$	& 90\%                          		\\ 
		WideResNet-28-8            					& $96.06 \pm 0.06$		& $94.92 \pm 0.13$  				& $95.64 \pm 0.07$							& $95.55 \pm 0.12$				& $\mathbf{95.98} \pm 0.10$	& 95\%                          		\\ 
		WideResNet-28-8            					& $96.06 \pm 0.06$		& $94.11 \pm 0.19$  				& $95.31 \pm 0.20$							& $95.11 \pm 0.07$				& $\mathbf{95.84} \pm 0.04$	& 97.5\%                          	    \\ 
		WideResNet-28-8            					& $96.06 \pm 0.06$		& $92.04 \pm 0.11$					& $94.38 \pm 0.12$							& $94.10 \pm 0.12$				& $\mathbf{95.63} \pm 0.16$	& 99\%                          		\\ 
        WideResNet-28-8            					& $96.06 \pm 0.06$		& $74.50 \pm 2.23$  				& $\star$									& $88.65 \pm 0.36$				& $\mathbf{91.76} \pm 0.18$	& 99.9\%                          		\\ 
		\bottomrule
	\end{tabular}%
	}
\end{table}

\subsubsection{Understanding the Training Dynamics and Lottery Ticket Effect}\label{subsec:unstructured_training_dynamics_lottery_ticket_more}
Figure~\ref{fig:resnet20_cifar10_unstructured_masks_last_change_from0_inplace} and
Figure~\ref{fig:wideresnet28_2_cifar10_unstructured_training_dynamics} complements Figure~\ref{fig:wideresnet28_2_cifar10_unstructured_masks_last_change},
and details the training dynamics (e.g. the converge of $\delta$ and masks) of \algopt from the other aspect.
Figure~\ref{fig:dpf_vs_incremental} compares the training dynamics between \algopt and Incremental~\citep{zhu2017prune},
demonstrating the fact that our scheme enables a drastical reparameterization over the dense parameter space for better generalization performance.

\begin{figure*}[!h]
	\centering
    \vspace{-0.5em}
    \includegraphics[width=0.75\textwidth,]{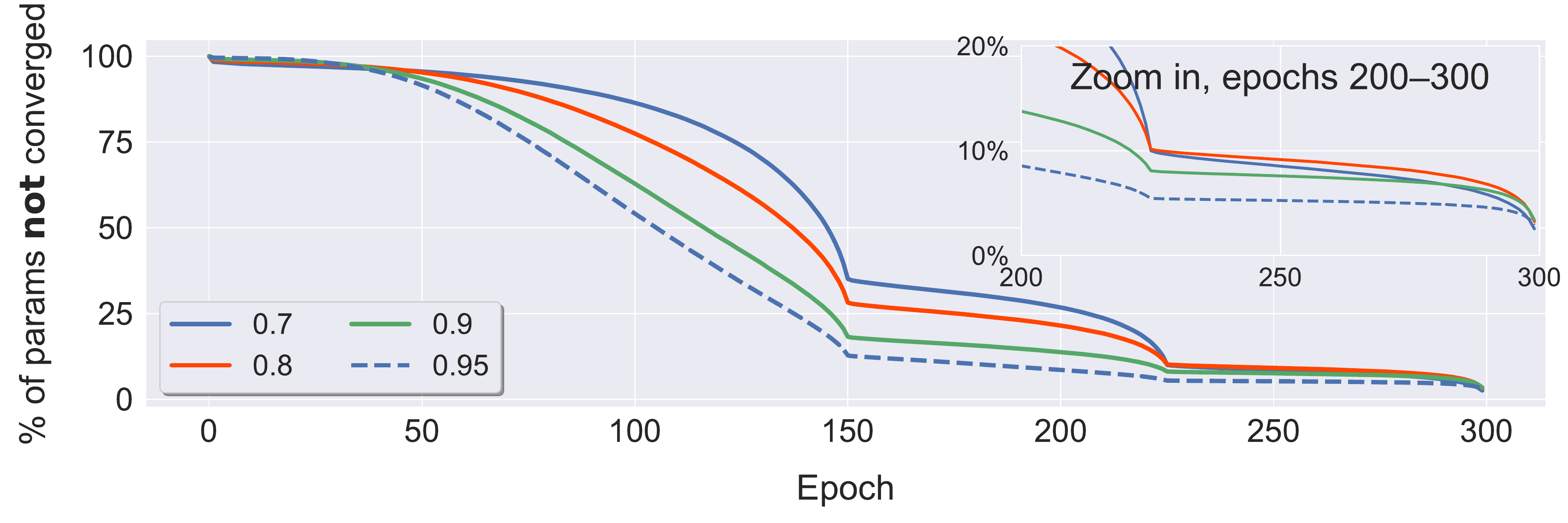}
        \label{fig:resnet20_cifar10_unstructured_masks_last_change_from0_inplace}
    \vspace{-1em}
    \caption{\small
		Convergence of the pruning mask $\mm_t$ of \algopt for different target sparsity levels (see legend). 
		The $y$-axis represent the percentage of mask elements that still change \textbf{after} a certain epoch ($x$-axis).
		The illustrated example are from ResNet-20 on CIFAR-10. We decayed the learning rate at $150$ and $225$ epochs.
		\looseness=-1
    }
    \label{fig:resnet20_cifar10_unstructured_masks_last_change_from0_inplace}
\end{figure*}

\begin{figure*}[!h]
    \centering
    \subfigure[\small{$\delta$ v.s. epoch.}]{
    \includegraphics[width=0.315\textwidth,]{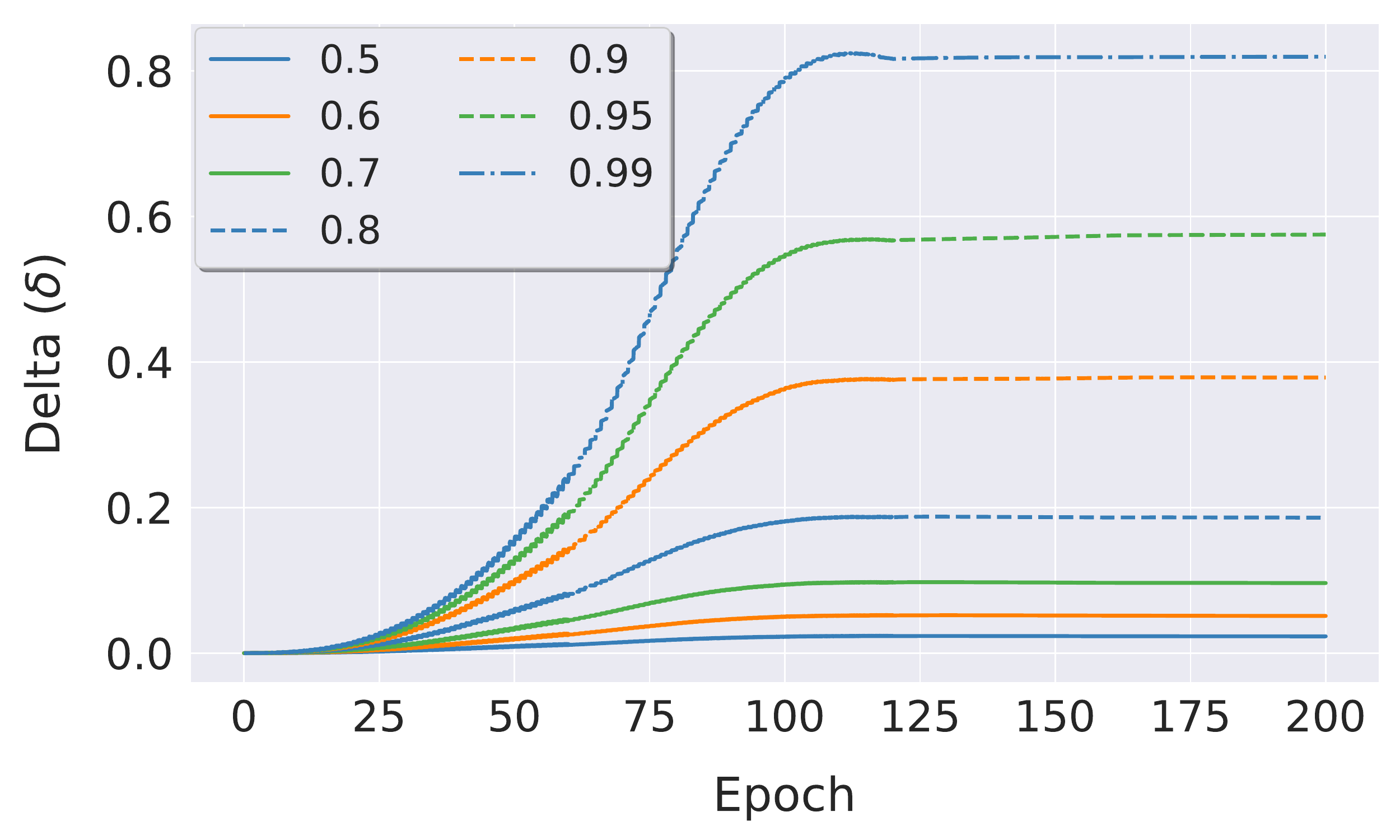}
        \label{fig:wideresnet28_2_cifar10_unstructured_delta_vs_epoch}
    }
    \hfill
    \subfigure[\small{Mask flip ratio v.s. epoch.}]{
        \includegraphics[width=0.315\textwidth,]{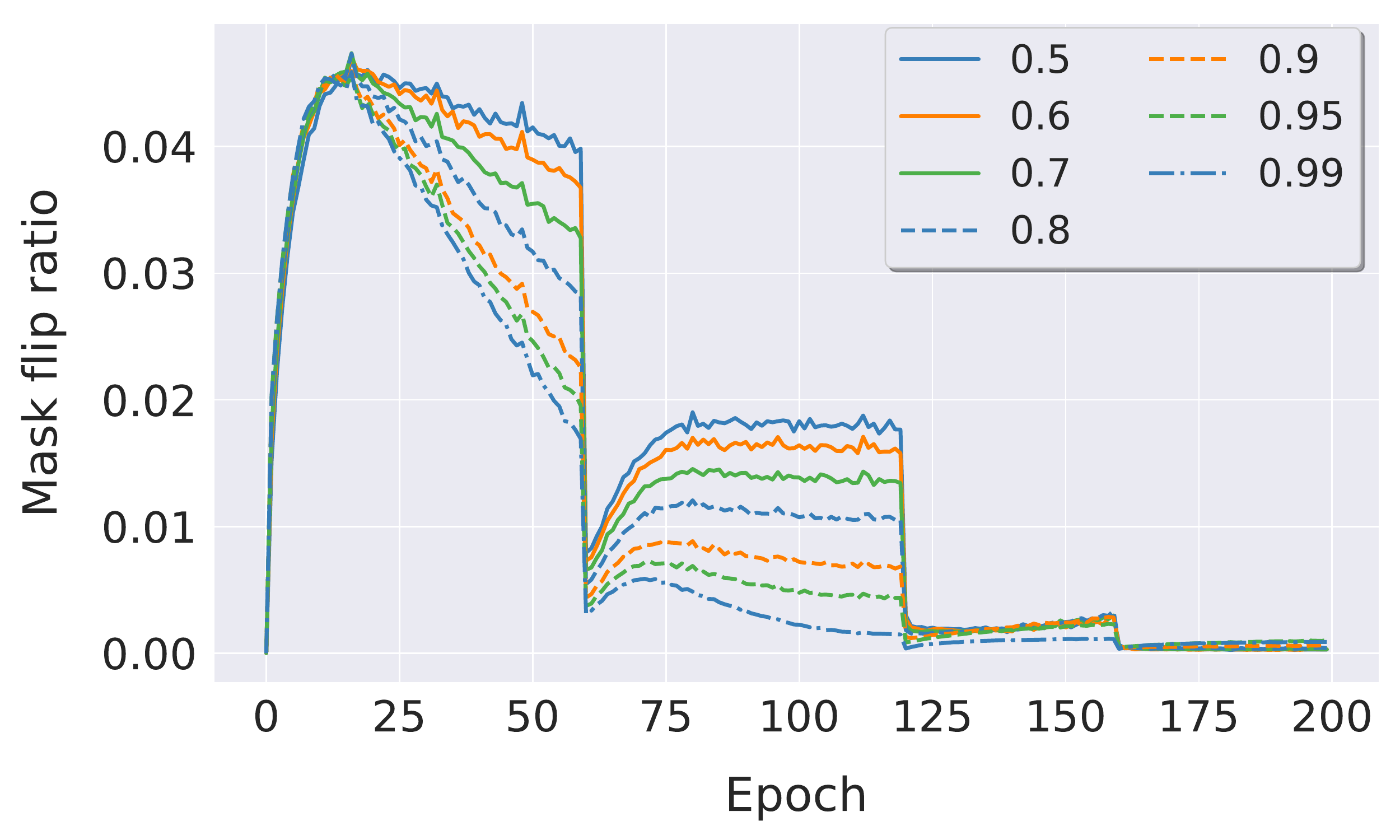}
        \label{fig:wideresnet28_2_cifar10_unstructured_mask_flip_ratio_vs_epoch}
	}
    \hfill
    \subfigure[\small{IoU v.s. epoch.}]{
        \includegraphics[width=0.315\textwidth,]{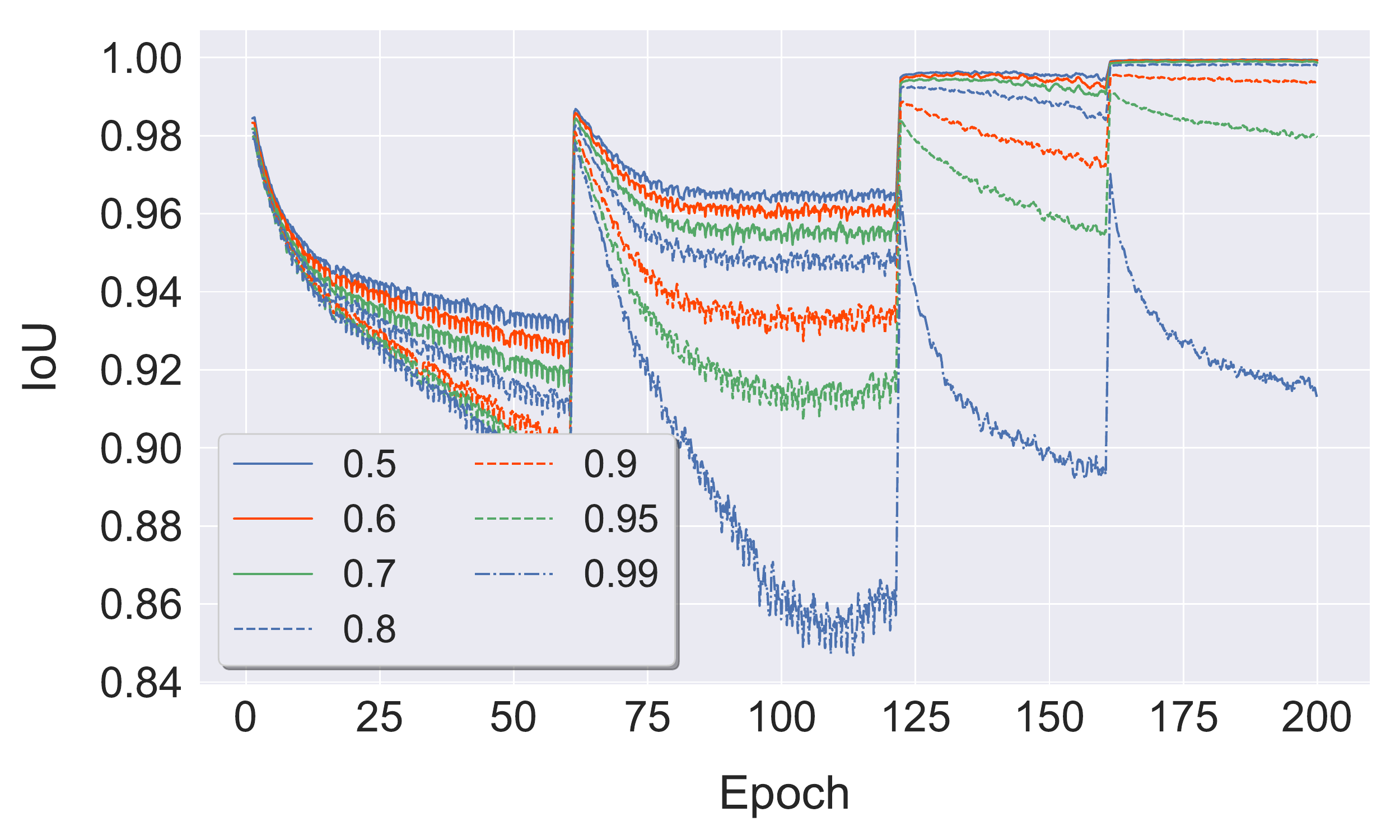}
        \label{fig:wideresnet28_2_cifar10_unstructured_mask_flip_ratio_vs_ratio}
    }
    \vspace{-1em}
    \caption{\small{
		Training dynamics of \algopt (WideResNet-28-2 on CIFAR-10) for unstructured pruning with different sparsity ratios.
		IoU stands for Intersection over Union for the non-masked elements of two consecutive masks;
		the smaller value the more fraction of the masks will flip.
    }}
    \label{fig:wideresnet28_2_cifar10_unstructured_training_dynamics}
\end{figure*}

\begin{figure*}[!h]
	\centering
    \vspace{-0.5em}
    \includegraphics[width=1.0\textwidth,]{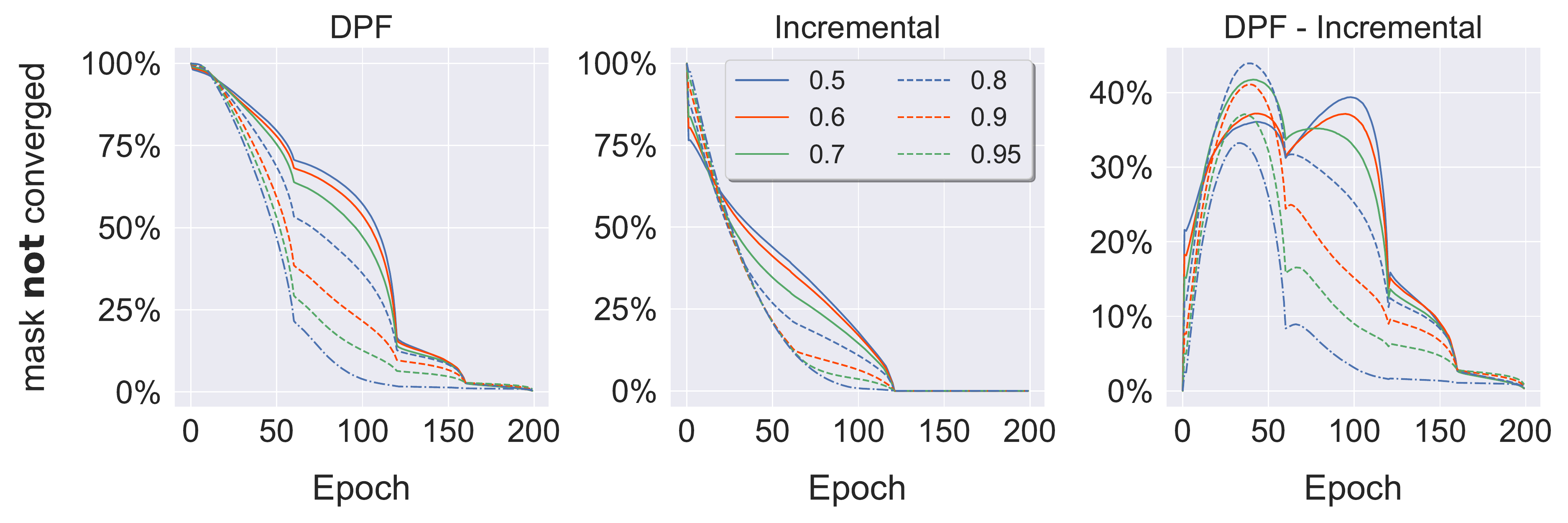}
        \label{fig:dpf_vs_incremental}
    \vspace{-1.5em}
    \caption{\small
		Convergence of the pruning mask $\mm_t$ of \algopt V.S. Incremental~\citep{zhu2017prune} for different target sparsity levels (see legend). 
		The $y$-axis represent the percentage of mask elements that still change \textbf{after} a certain epoch ($x$-axis).
		The illustrated example are from WideResNet-28-2 on CIFAR-10. We decayed the learning rate at $60;120;160$ epochs.
		These two schemes use the same gradual warmup schedule (and hyper-parameters) for the pruning ratio during the training.
		\looseness=-1
    }
    \label{fig:dpf_vs_incremental}
\end{figure*}

Figure~\ref{fig:wideresnet28_2_lottery_ticket_effect_unstructured}
in addition to the Figure~\ref{fig:wideresnet28_2_cifar10_unstructured_pruning_unstructured_lottery_ticket_in_all} (in the main text)
further studies the lottery ticket hypothesis under different training budgets (same epochs or same total flops).
The results of~\algopt also demonstrate the importance of training-time structural exploration as well as the corresponding implicit regularization effects.
Note that we do not want to question the importance of the weight initialization or the existence of the lottery ticket.
Instead, our~\algopt can provide an alternative training scheme to compress the model to an extremely high compression ratio without sacrificing the test accuracy,
where most of the existing methods still meet severe quality loss (including~\citet{frankle2018the,liu2018rethinking,frankle2019lottery}).

\begin{figure*}[!h]
	\centering
    \subfigure[\small{Retraining with same epoch.}]{
        \includegraphics[width=0.48\textwidth,]{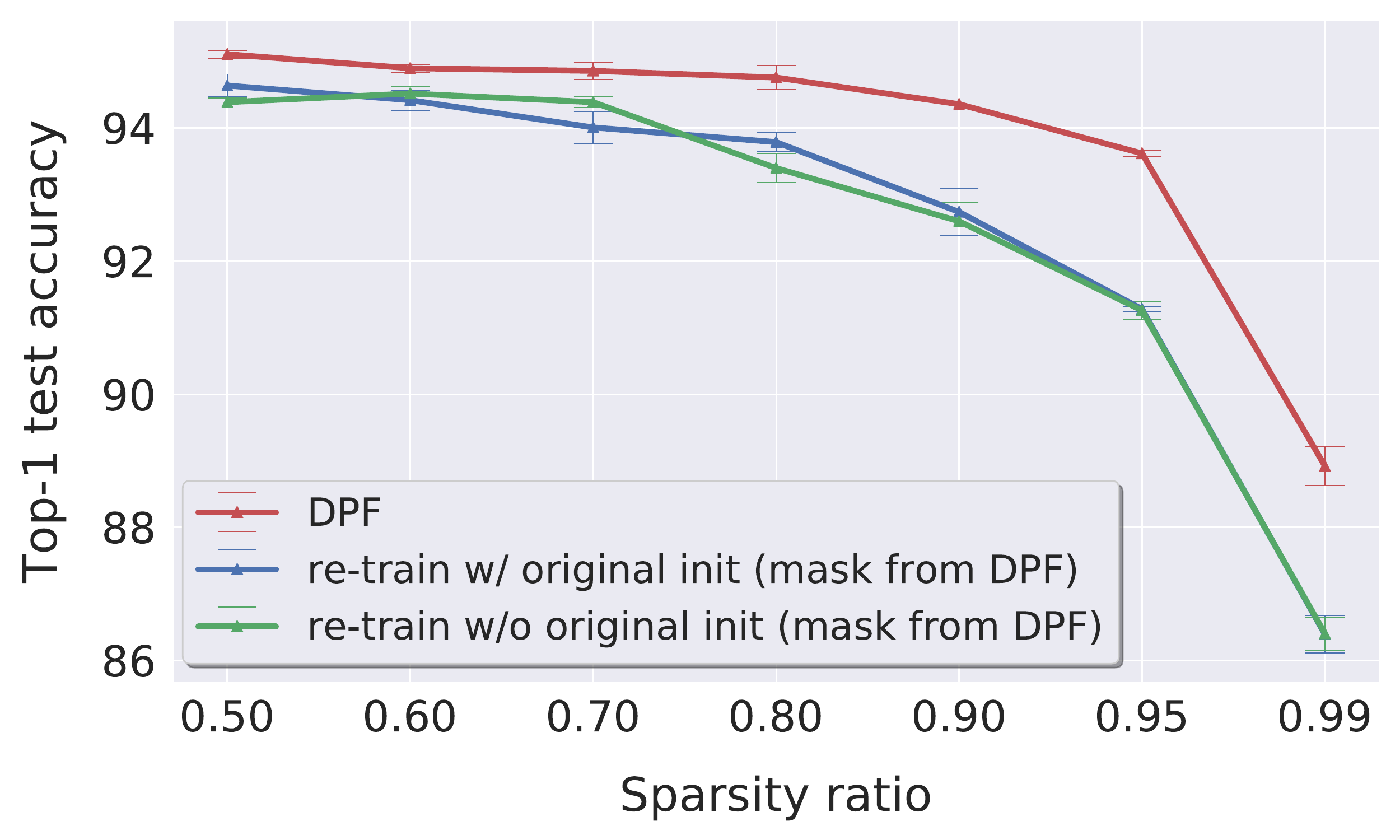}
        \label{fig:wideresnet28_2_cifar10_structured_pruning_lottery_ticket_effect_same_epoch}
	}
    \hfill
    \subfigure[\small{Retraining with same flops.}]{
    	\includegraphics[width=0.48\textwidth,]{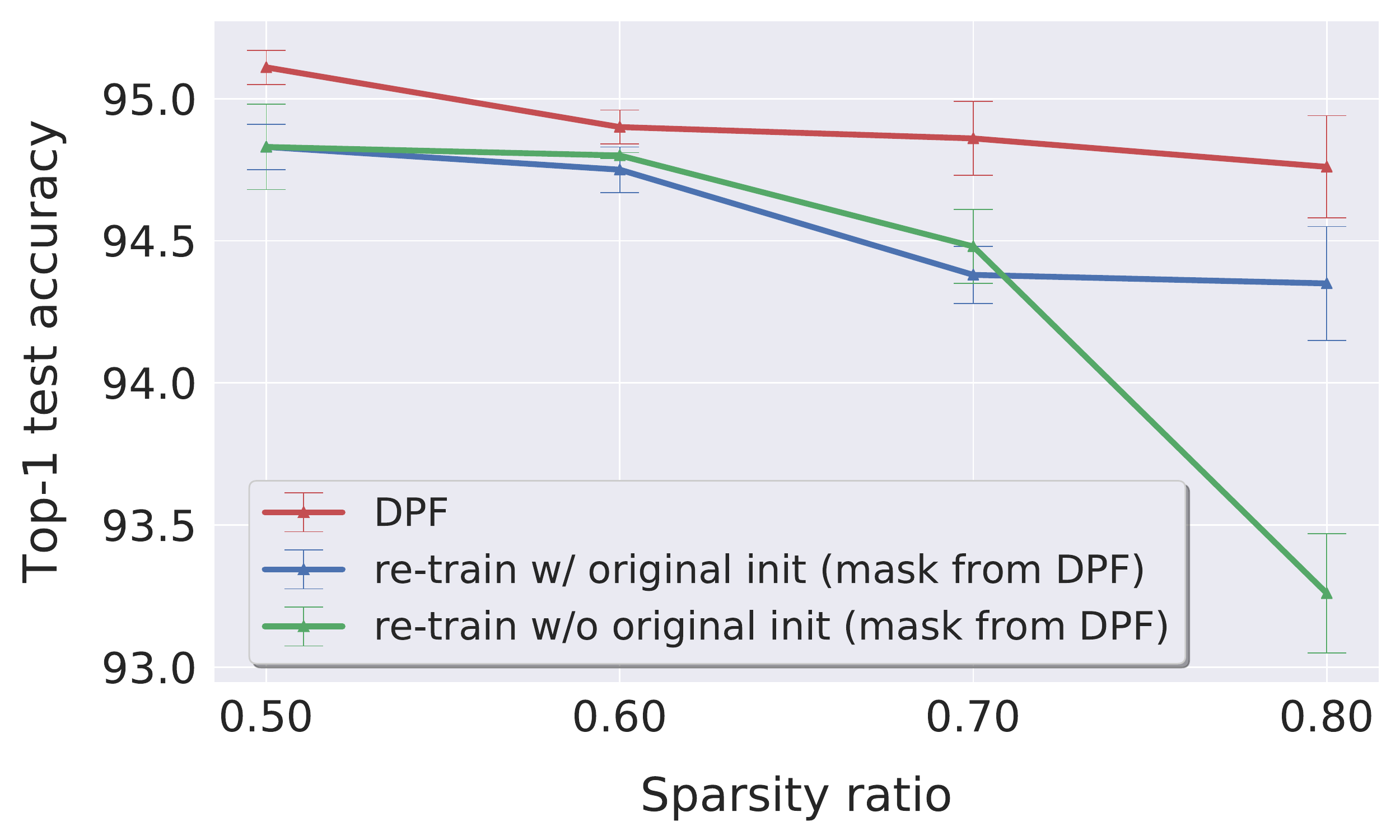}
    	\label{fig:wideresnet28_2_cifar10_unstructured_pruning_lottery_ticket_effect_same_flops}
	}
    \vspace{-1em}
    \caption{\small{
		Investigate the effect of lottery ticket for model compression (unstructured weight pruning for WideResNet28-2 on CIFAR-10).
		It complements the observations in Figure~\ref{fig:wideresnet28_2_cifar10_unstructured_pruning_unstructured_lottery_ticket_in_all}
		by retraining the model for the same amount of computation budget (i.e. flops).
	}}
    \vspace{-1em}
    \label{fig:wideresnet28_2_lottery_ticket_effect_unstructured}
\end{figure*}

\subsubsection{Computational Overhead and the Impact of Hyper-parameters} \label{sec:hyperparameter_impact}
In Figure~\ref{fig:resnet20_cifar10_hyper_param_impact}, we evaluated the top-1 test accuracy of a compressed model trained by~\algopt under different setups,
e.g., different reparameterization period $p$, different sparsity ratios, different mini-batch sizes, as well as whether layer-wise pruning or not.
We can witness that the optimal reparameterization (i.e $p=16$) is quite consistent over different sparsity ratios and different mini-batch sizes,
and we used it in all our experiments.
The global-wise unstructured weight pruning (instead of layer-wise weight pruning) allows our~\algopt more flexible to perform dynamic parameter reallocation,
and thus can provide better results especially for more aggressive pruning sparsity ratios.
However, we also need to note that,
for the same number of compressed parameters (layerwise or globalwise unstructured weight pruning),
using global-wise pruning leads to a slight increase in the amount of MACs,
as illustrated Table~\ref{tab:resnet20_cifar10_hyper_param_impact_macs}.

\begin{figure*}[!h]
	\centering
    \subfigure[\small{
		Reparameterization period vs. sparsity ratio.
		The heatmap value is the test accuracy minus the best accuracy of the corresponding sparsity.
		We use mini-batch size $128$ w/o layerwise reparameterization.
        \looseness=-1
	}]{
    \includegraphics[width=0.33\textwidth,]{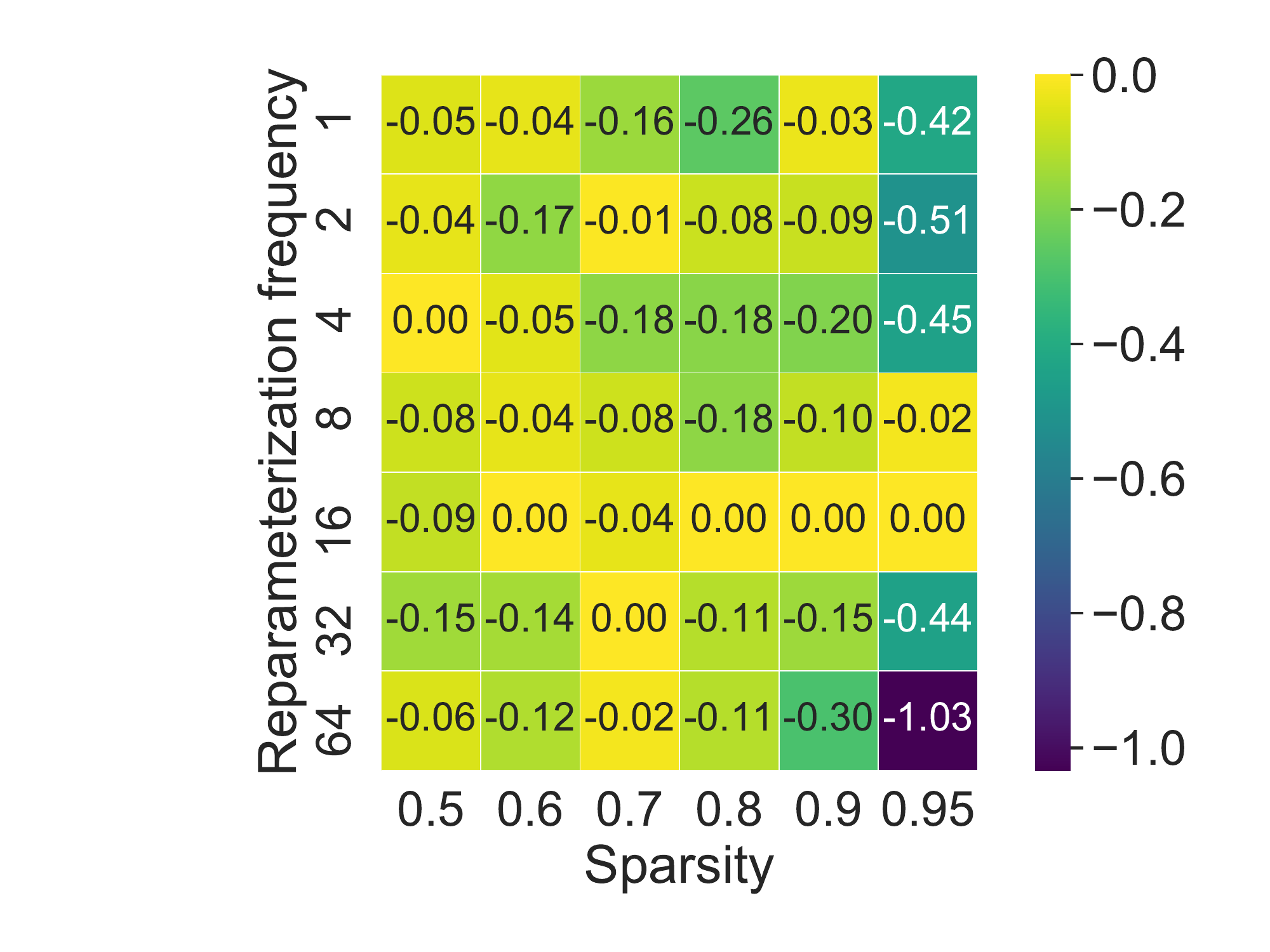}
    	\label{fig:resnet20_cifar10_mask_change_frequency_impact_for_sparsities}
    }
	\hfill
    \subfigure[\small{
		Reparameterization period vs. mini-batch size.
		The heatmap displays the test accuracy.
		We use sparsify ratio $0.80$.
        \looseness=-1
	}]{
    \includegraphics[width=0.31\textwidth,]{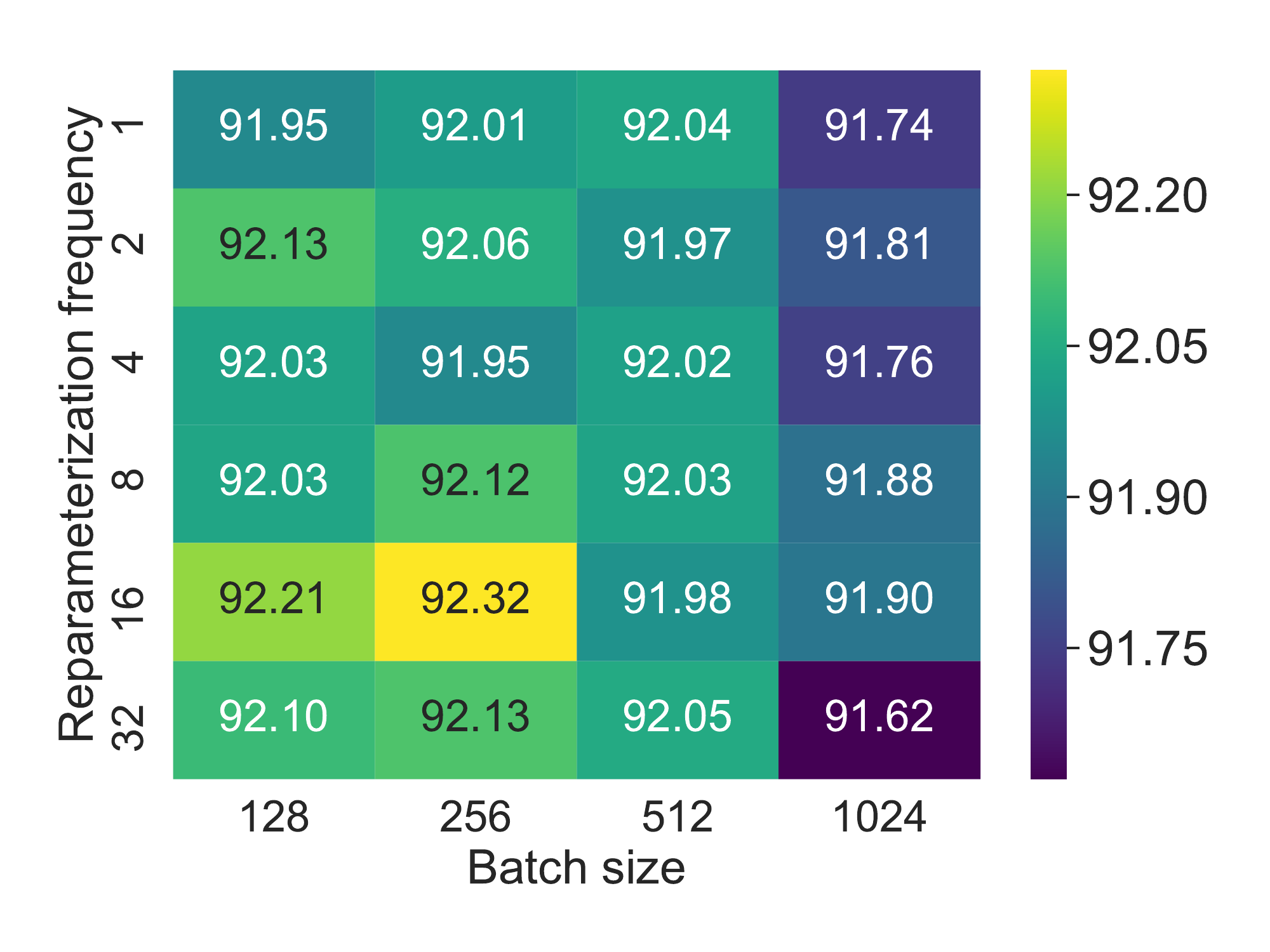}
    	\label{fig:resnet20_cifar10_batchsize_impact_for_sparsity_080}
	}
	\hfill
    \subfigure[\small{
		Reparameterization scheme (whether layerwise) vs. sparsity ratio.
		The heatmap displays the test accuracy.
		We use mini-batch size $128$ w/ reparameterization period $p=16$.
		\looseness=-1
	}]{
    \includegraphics[width=0.28\textwidth,]{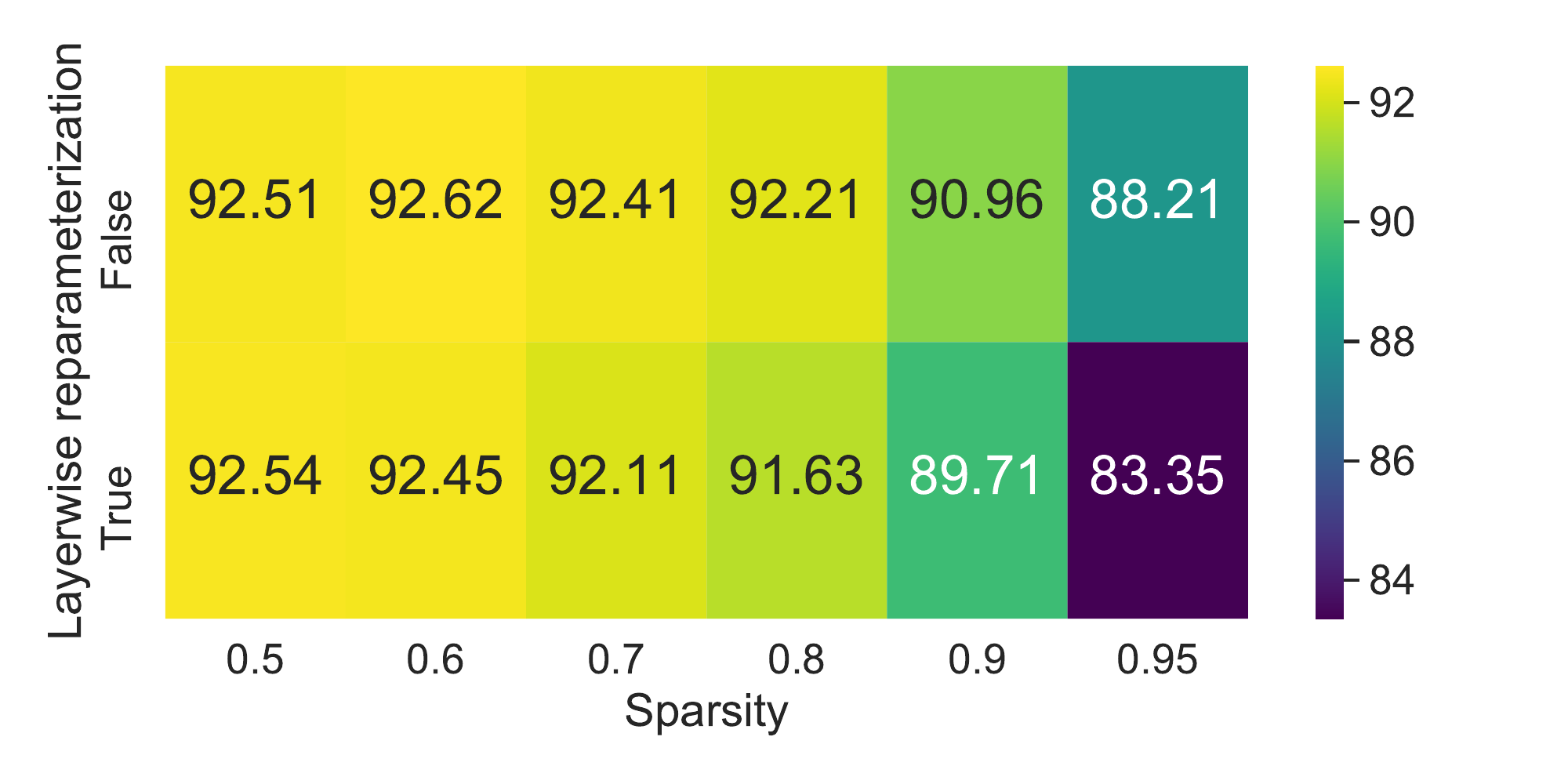}
    	\label{fig:resnet20_cifar10_layerwise_impact_for_sparsities}
	}
    \vspace{-1em}
    \caption{\small{
		Investigate how the reparameterization period/scheme and mini-batch size impact the generalization performance (test top-1 accuracy),
		for dynamically training (and reparameterizing) a compressed model from scratch (ResNet-20 with CIFAR-10).
        \looseness=-1
    }}
	\label{fig:resnet20_cifar10_hyper_param_impact}
    \vspace{-0.5em}
\end{figure*}

\begin{table}[!h]
\centering
\caption{\small 
	Investigate how the reparameterization scheme (layer-wise or not) impact the MACs (for the same number of compressed parameters), 
	for using \algopt on ResNet-20 with CIFAR-10.
}
\label{tab:resnet20_cifar10_hyper_param_impact_macs}
\begin{tabular}{ccccccc}
\cline{1-4}
\toprule
Target sparsity                  & 50\%  & 60\%  & 70\%  & 80\%  & 90\% & 95\% \\ \midrule
w/o layerwise reparameterization & 22.80M & 19.13M & 15.18M & 11.12M & 6.54M & 4.02M \\ 
w/ layerwise reparameterization  & 20.99M & 16.91M & 12.83M & 8.75M  & 4.67M & 2.63M \\
\bottomrule
\end{tabular}
\end{table}

Figure~\ref{fig:trivial_computational_overhead} demonstrates the trivial computational overhead
of involving~\algopt to gradually train a compressed model (ResNet-50) from scratch (on ImageNet).
Note that we evaluated the introduced reparameterization cost for dynamic pruning,
which is independent of (potential) significant system speedup brought by the extreme high model sparsity.
Even though our work did not estimate the practical speedup,
we do believe we can have a similar training efficiency as the values reported in~\citet{dettmers2019sparse}.

\begin{figure}[!h]
    \centering
	\includegraphics[width=1.\textwidth]{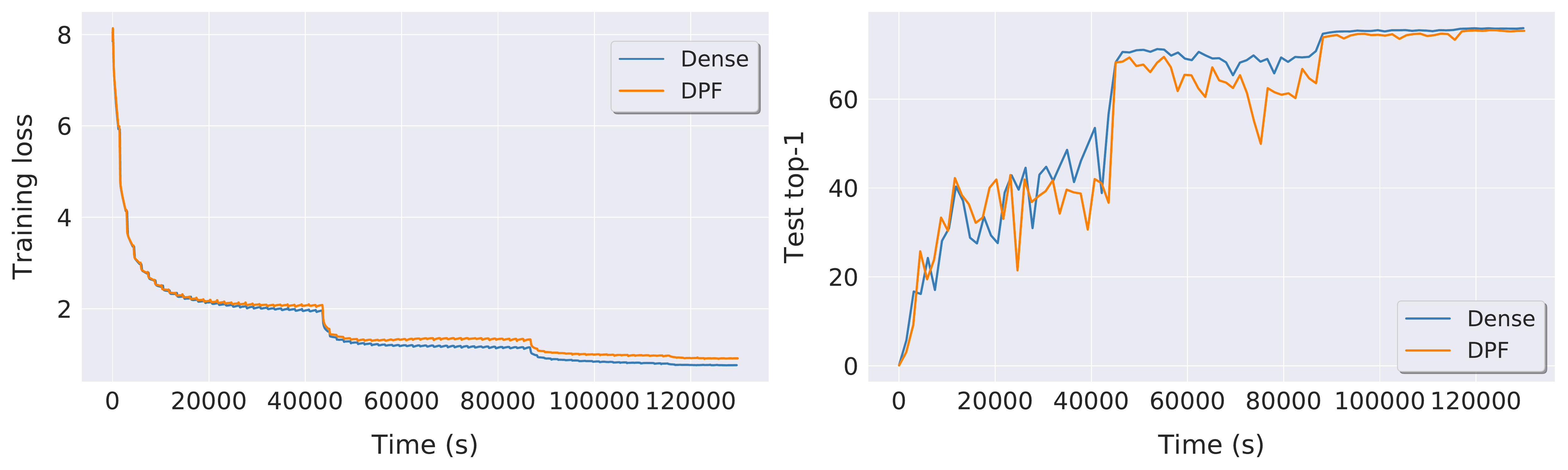}
    \vspace{-2em}
    \caption{\small{
        The learning curve of our \algopt (with unstructured magnitude pruning) and the standard mini-batch SGD for training ResNet50 on ImageNet.
        Our proposed \algopt has trivial computational overhead.
		We trained ResNet50 on 4 NVIDIA V100 GPUs with 1024 mini-batch size. The target sparsity ratio is 80\%.
        \looseness=-1
    }}
	\label{fig:trivial_computational_overhead}
    \vspace{-0.5em}
\end{figure}

\subsubsection{Implicit Neural Architecture Search}\label{subsubsec:implicit_nas_unstructured}
\algopt can provide effective training-time structural exploration or even implicit neural network search.
Figure~\ref{fig:wideresnet28_cifar10_unstructured_better_neural_arch_search} below
demonstrates that for the same pruned model size (i.e. any point in the $x$-axis),
we can always perform ``architecture search'' to get a better (in terms of generalization) pruned model,
from a larger network (e.g. WideResNet-28-8)
rather than the one searched from a relatively small network (e.g. WideResNet-28-4).

\begin{figure}[!h]
    \centering
	\includegraphics[width=.6\textwidth]{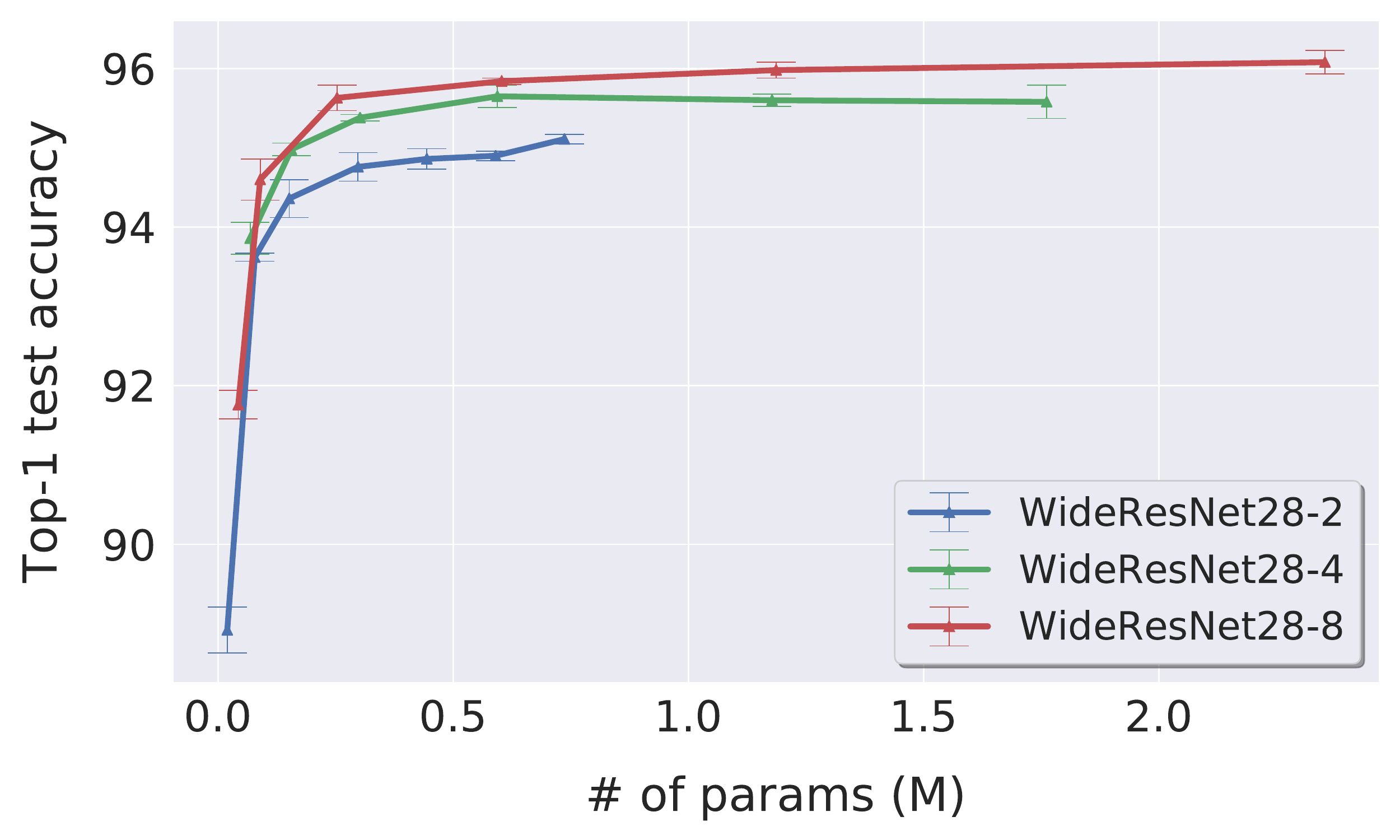}
    \vspace{-1em}
    \caption{\small
		Test top-1 accuracy vs. the compressed model size, for training WideResNet-28 (with different widths) on CIFAR-10. 
    	The compressed model is searched from WideResNet-28 (fixed depth) with different width (number of filters per layer).
    }
    \label{fig:wideresnet28_cifar10_unstructured_better_neural_arch_search}
\end{figure}

\subsection{Additional Results for Structured Pruning}
\subsubsection{Generalization performance for CIFAR-10} \label{subsec:generalization_structured_pruning_cifar10}
Figure~\ref{fig:wideresnet28_cifar10_structured_demonstration_n_params_vs_acc}
complements the results of structured pruning in the main text (Figure~\ref{fig:wideresnet28_cifar10_structured_demonstration_mac_vs_acc}),
and Table~\ref{tab:sota_dnns_cifar10_structured_pruning_performance} details the numerical results presented in both of Figure~\ref{fig:wideresnet28_cifar10_structured_demonstration_mac_vs_acc} and Figure~\ref{fig:wideresnet28_cifar10_structured_demonstration_n_params_vs_acc}.

\begin{figure*}[!h]
    \centering
    \subfigure[\small{WideResNet28-2.}]{
    \includegraphics[width=0.31\textwidth,]{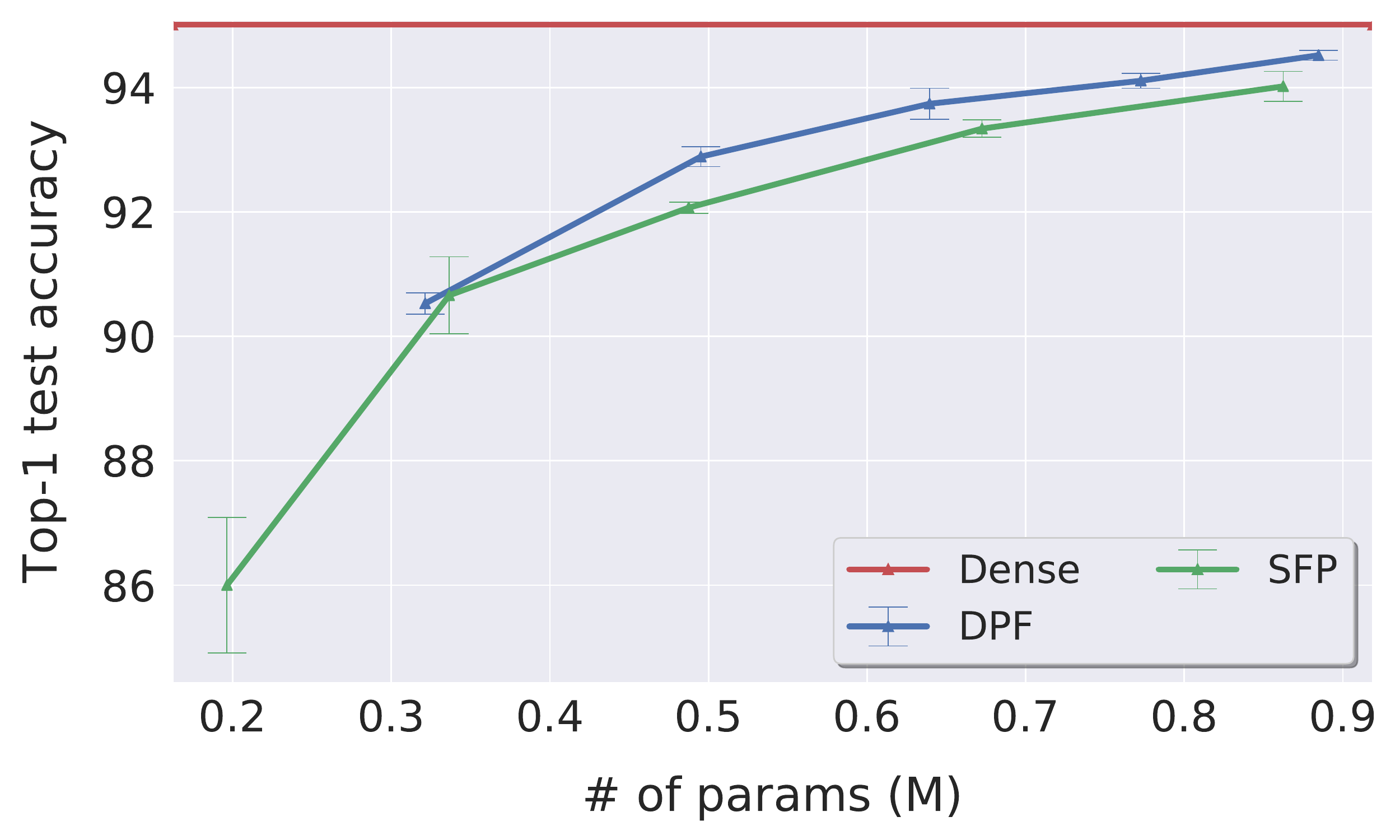}
		\label{fig:wideresnet28_2_cifar10_structured_n_params_vs_test_acc}
    }
	\hfill
    \subfigure[\small{WideResNet28-4.}]{
    \includegraphics[width=0.31\textwidth,]{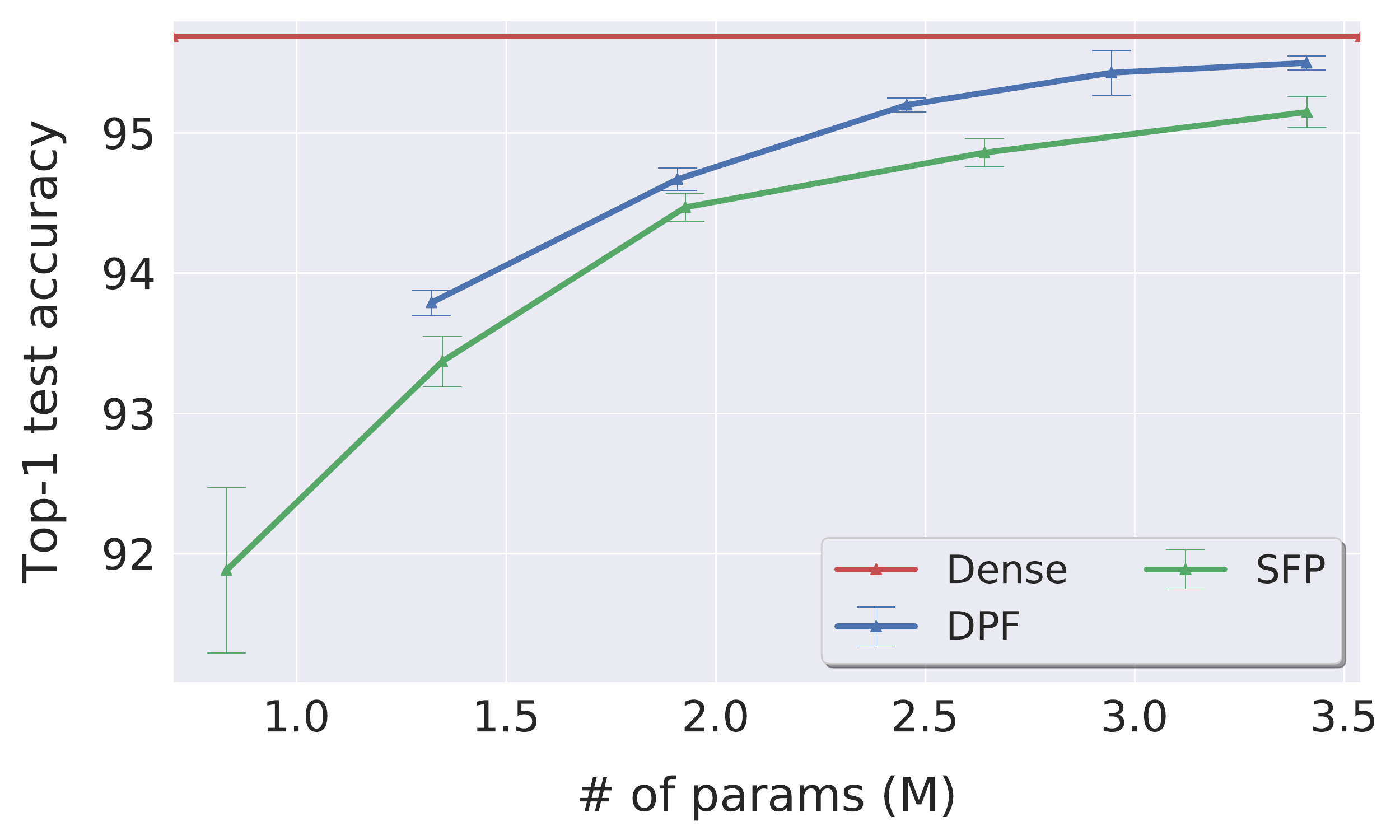}
		\label{fig:wideresnet28_4_cifar10_structured_n_params_vs_test_acc}
	}
	\hfill
    \subfigure[\small{WideResNet28-8.}]{
    \includegraphics[width=0.31\textwidth,]{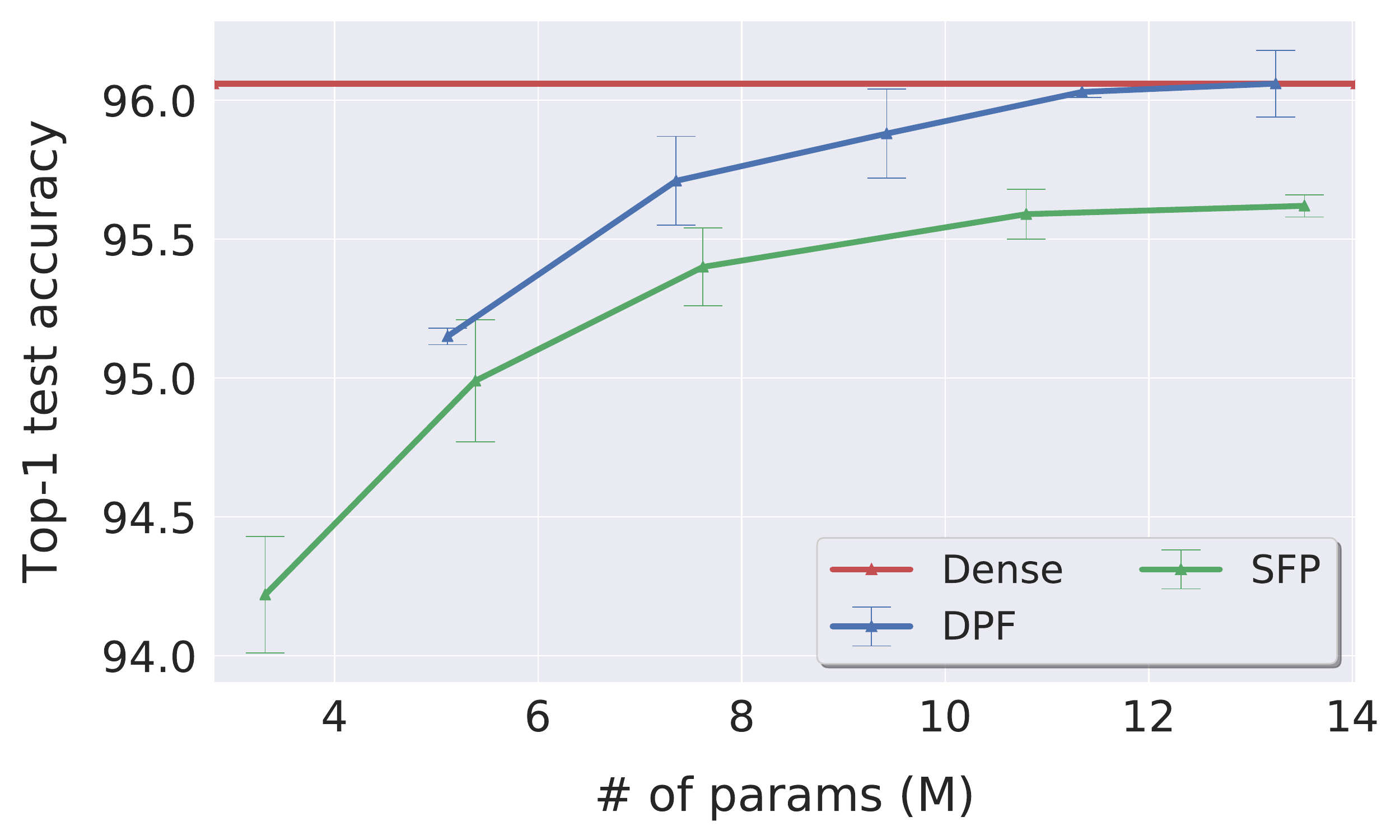}
		\label{fig:wideresnet28_8_cifar10_structured_n_params_vs_test_acc}
    }
    \vspace{-1em}
    \caption{\small{
		\# of params v.s. top-1 test accuracy, for training WideResNet28 (with different width) on CIFAR-10.
		Structured filter-wise pruning is used here.
		The reported results are averaged over three runs.
    }}
    \label{fig:wideresnet28_cifar10_structured_demonstration_n_params_vs_acc}
\end{figure*}

\begin{table}[!h]
	\centering
	\caption{\small{
		Performance evaluation of \algopt (and other baseline methods) for training (Wide)ResNet variants on CIFAR-10. 
		We use the norm-based criteria for filter selection (as in SFP~\citep{he2018soft}) to estimate the output channel-wise pruning threshold.
		We follow the gradual pruning warmup scheme (as in~\citet{zhu2017prune})
		from $0$ epoch to the epoch when performing the second learning rate decay.
		Note that SFP prunes filters within the layer by a given ratio while our \algopt prunes filters across layers.
		Due to the difference between filters for different layers,
		the \# of parameters pruned by \algopt might slight different from the one pruned by SFP.
		The pruning ratio refers to either prune filters within the layer or across the layers.
	}}
	\label{tab:sota_dnns_cifar10_structured_pruning_performance}
	\resizebox{.6\textwidth}{!}{%
	\begin{tabular}{lccccc}
		\toprule
		\multicolumn{1}{c}{} 						& \multicolumn{1}{c}{} & \multicolumn{2}{c}{Methods}										& \multicolumn{1}{c}{}	\\ \cmidrule{3-4}
		\parbox{2cm}{Model}					&  \parbox{2cm}{\centering Baseline on \\dense model}        	       & \!\!\!\parbox{2.5cm}{\centering SFP\\ (H$^+$, \citeyear{he2018soft})}  	    \!\!\!\!\!\!\!& \parbox{2.5cm}{\centering \algopt} 						& \parbox{2cm}{\centering Target Pr.\\ ratio}		\\ \midrule
		ResNet-20            						& $92.48 \pm 0.20$     & $92.18 \pm 0.31$					& $\mathbf{92.54} \pm 0.07$						& 10\%					\\ 
		ResNet-20            						& $92.48 \pm 0.20$     & $91.12 \pm 0.20$					& $\mathbf{91.90} \pm 0.06$						& 20\%					\\ 
		ResNet-20            						& $92.48 \pm 0.20$     & $90.32 \pm 0.25$					& $\mathbf{91.07} \pm 0.40$	      				& 30\%					\\ 
		ResNet-20            						& $92.48 \pm 0.20$     & $89.60 \pm 0.46$					& $\mathbf{90.28} \pm 0.26$						& 40\%					\\ \midrule
		ResNet-32            						& $93.52 \pm 0.13$	   & $92.07 \pm 0.22$  					& $\mathbf{92.18} \pm 0.16$						& 30\%                  \\
		ResNet-32            						& $93.52 \pm 0.13$	   & $91.14 \pm 0.45$  					& $\mathbf{91.50} \pm 0.21$						& 40\%                  \\ \midrule
		ResNet-56            						& $94.51 \pm 0.20$	   & $93.99 \pm 0.27$					& $\mathbf{94.53} \pm 0.13$						& 30\%                  \\ 
		ResNet-56            						& $94.51 \pm 0.20$	   & $93.57 \pm 0.16$					& $\mathbf{94.03} \pm 0.38$						& 40\%                  \\ \midrule
		WideResNet-28-2            					& $95.01 \pm 0.04$	   & $94.02 \pm 0.24$  					& $\mathbf{94.52} \pm 0.08$						& 40\%                  \\ 
		WideResNet-28-2            					& $95.01 \pm 0.04$	   & $93.34 \pm 0.14$ 					& $\mathbf{94.11} \pm 0.12$						& 50\%                  \\ 
		WideResNet-28-2            					& $95.01 \pm 0.04$	   & $92.07 \pm 0.09$				  	& $\mathbf{93.74} \pm 0.25$						& 60\%                  \\ 
		WideResNet-28-2            					& $95.01 \pm 0.04$	   & $90.66 \pm 0.62$				  	& $\mathbf{92.89} \pm 0.16$						& 70\%                  \\
		WideResNet-28-2            					& $95.01 \pm 0.04$	   & $86.00 \pm 1.09$					& $\mathbf{90.53} \pm 0.17$						& 80\%                  \\ \midrule
		WideResNet-28-4            					& $95.69 \pm 0.10$	   & $95.15 \pm 0.11$					& $\mathbf{95.50} \pm 0.05$						& 40\%                  \\
		WideResNet-28-4            					& $95.69 \pm 0.10$	   & $94.86 \pm 0.10$					& $\mathbf{95.43} \pm 0.16$						& 50\%                  \\
		WideResNet-28-4            					& $95.69 \pm 0.10$	   & $94.47 \pm 0.10$					& $\mathbf{95.20} \pm 0.05$						& 60\%                  \\
		WideResNet-28-4            					& $95.69 \pm 0.10$	   & $93.37 \pm 0.18$					& $\mathbf{94.67} \pm 0.08$						& 70\%                  \\
        WideResNet-28-4            					& $95.69 \pm 0.10$	   & $91.88 \pm 0.59$					& $\mathbf{93.79} \pm 0.09$						& 80\%                  \\ \midrule
		WideResNet-28-8            					& $96.06 \pm 0.06$	   & $95.62 \pm 0.04$					& $\mathbf{96.06} \pm 0.12$						& 40\%                  \\ 
		WideResNet-28-8            					& $96.06 \pm 0.06$	   & $95.59 \pm 0.09$					& $\mathbf{96.03} \pm 0.02$						& 50\%                  \\ 
		WideResNet-28-8            					& $96.06 \pm 0.06$	   & $95.40 \pm 0.14$					& $\mathbf{95.88} \pm 0.16$						& 60\%                  \\ 
		WideResNet-28-8            					& $96.06 \pm 0.06$	   & $94.99 \pm 0.22$					& $\mathbf{95.71} \pm 0.16$						& 70\%                  \\ 
        WideResNet-28-8            					& $96.06 \pm 0.06$	   & $94.22 \pm 0.21$					& $\mathbf{95.15} \pm 0.03$						& 80\%                  \\ 
		\bottomrule
	\end{tabular}%
	}
\end{table}

\subsubsection{Understanding the Lottery Ticket Effect}\label{subsec:structured_lottery_ticket}
Similar to the observations in Section~\ref{sec:discussion} (for unstructured pruning),
Figure~\ref{fig:wideresnet28_2_cifar10_structured_pruning_lottery_ticket_effect_same_epoch} instead considers structured pruning
and again we found \algopt does not find a lottery ticket.
The superior generalization performance of \algopt cannot be explained by the found mask or the weight initialization scheme.

\begin{figure}[!h]
    \centering
	\includegraphics[width=.6\textwidth]{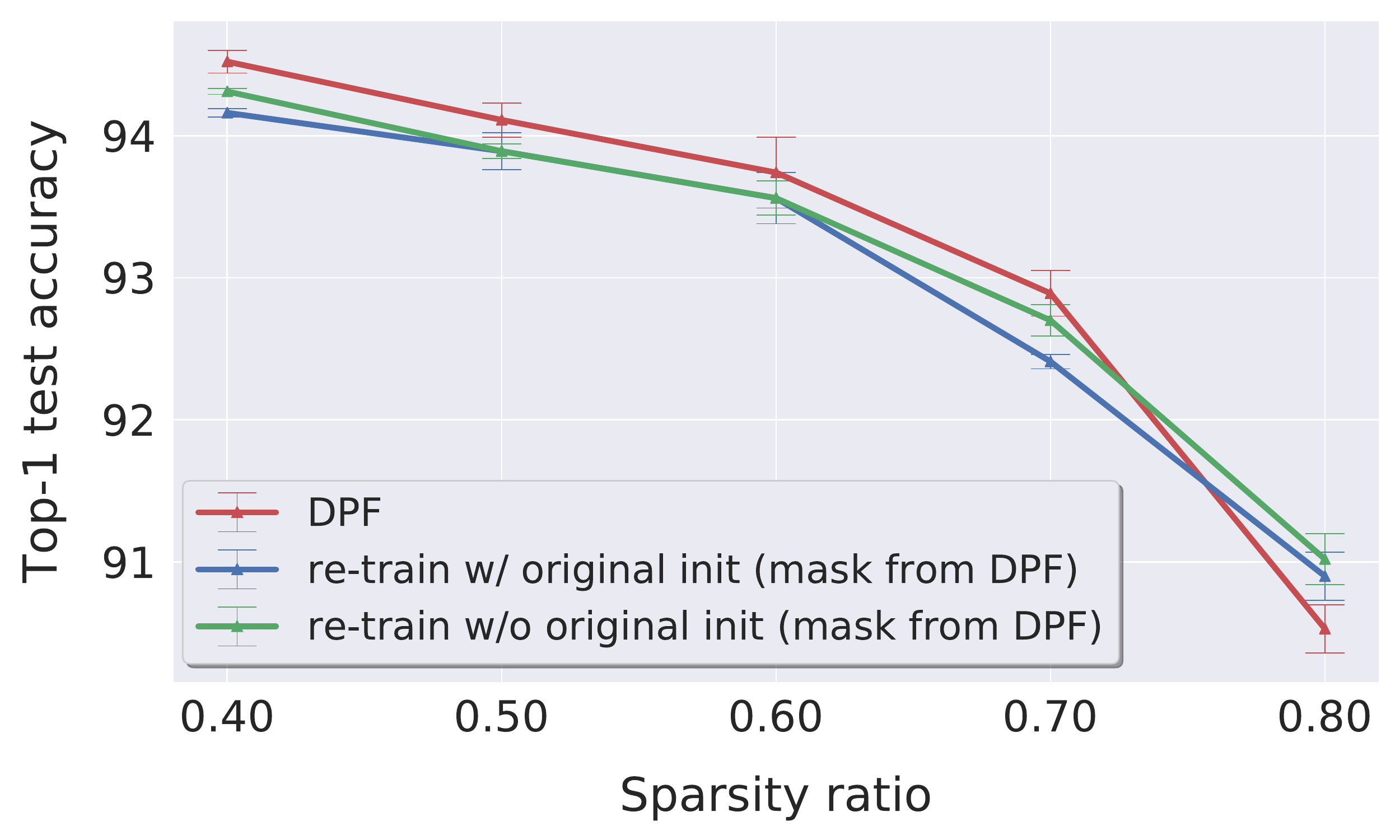}
    \vspace{-1em}
    \caption{\small{
		Investigate the effect of lottery ticket for model compression (WideResNet28-2 with CIFAR-10) for structured pruning.
		We retrained the model with the mask from the model trained by \algopt, by using the same epoch budget.
    }}
    \label{fig:wideresnet28_2_cifar10_structured_pruning_lottery_ticket_effect_same_epoch}
\end{figure}

\subsubsection{Model Sparsity Visualization}\label{subsec:visualize_model_sparsity}
Figure~\ref{fig:wideresnet28_2_cifar10_structured_pruning_sparsity_complete}
below visualizes the model sparsity transition patterns for different model sparsity levels under the structured pruning.
We can witness that due to the presence of residual connection, \algopt gradually learns to prune the entire residual blocks.

\begin{figure*}[!h]
    \centering
    \subfigure[\small{Structured pruning with 40\% sparsity.}]{
        \includegraphics[width=0.48\textwidth,]{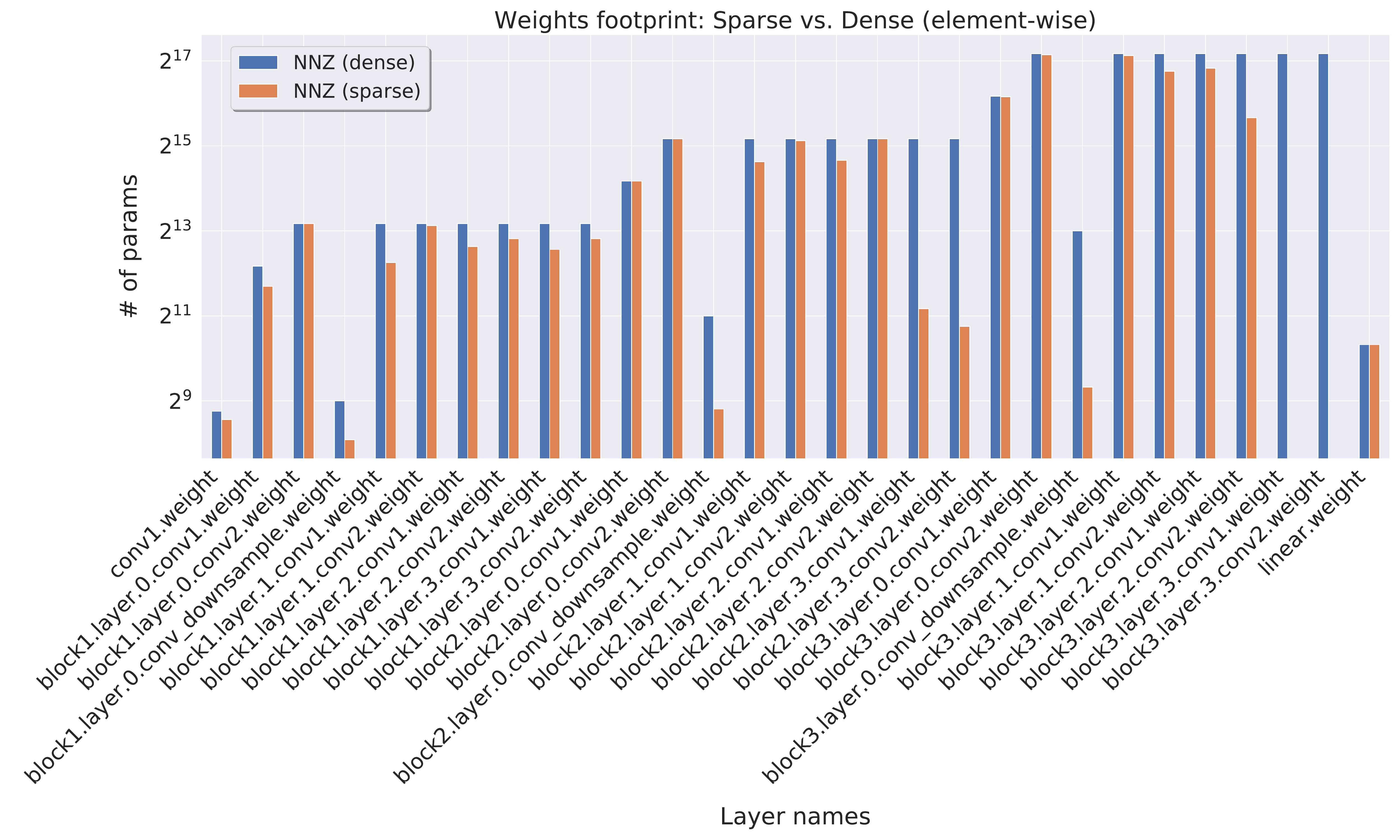}
    }
    \hfill
    \subfigure[\small{Structured pruning with 50\% sparsity.}]{
        \includegraphics[width=0.48\textwidth,]{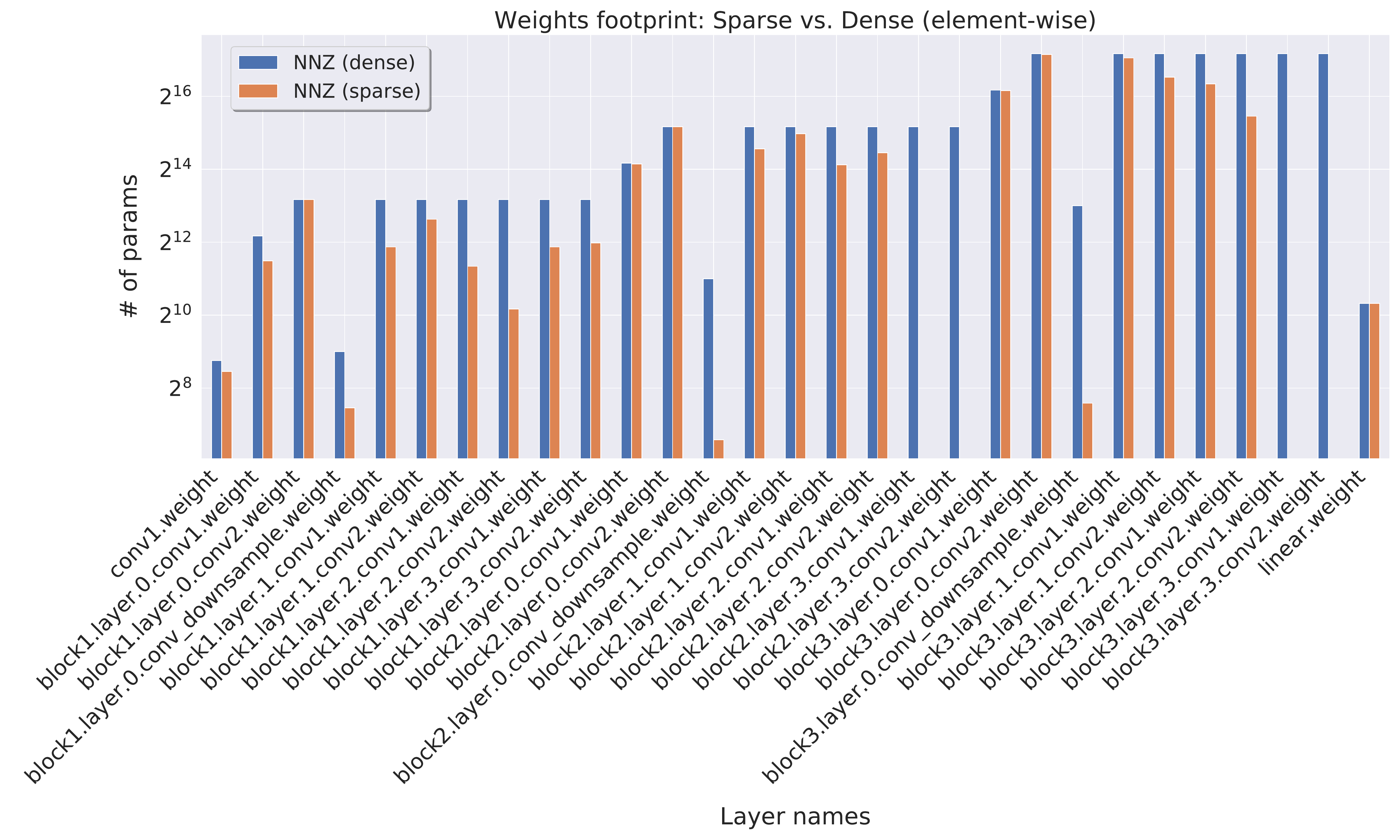}
    }
    \hfill
    \subfigure[\small{Structured pruning with 60\% sparsity.}]{
        \includegraphics[width=0.48\textwidth,]{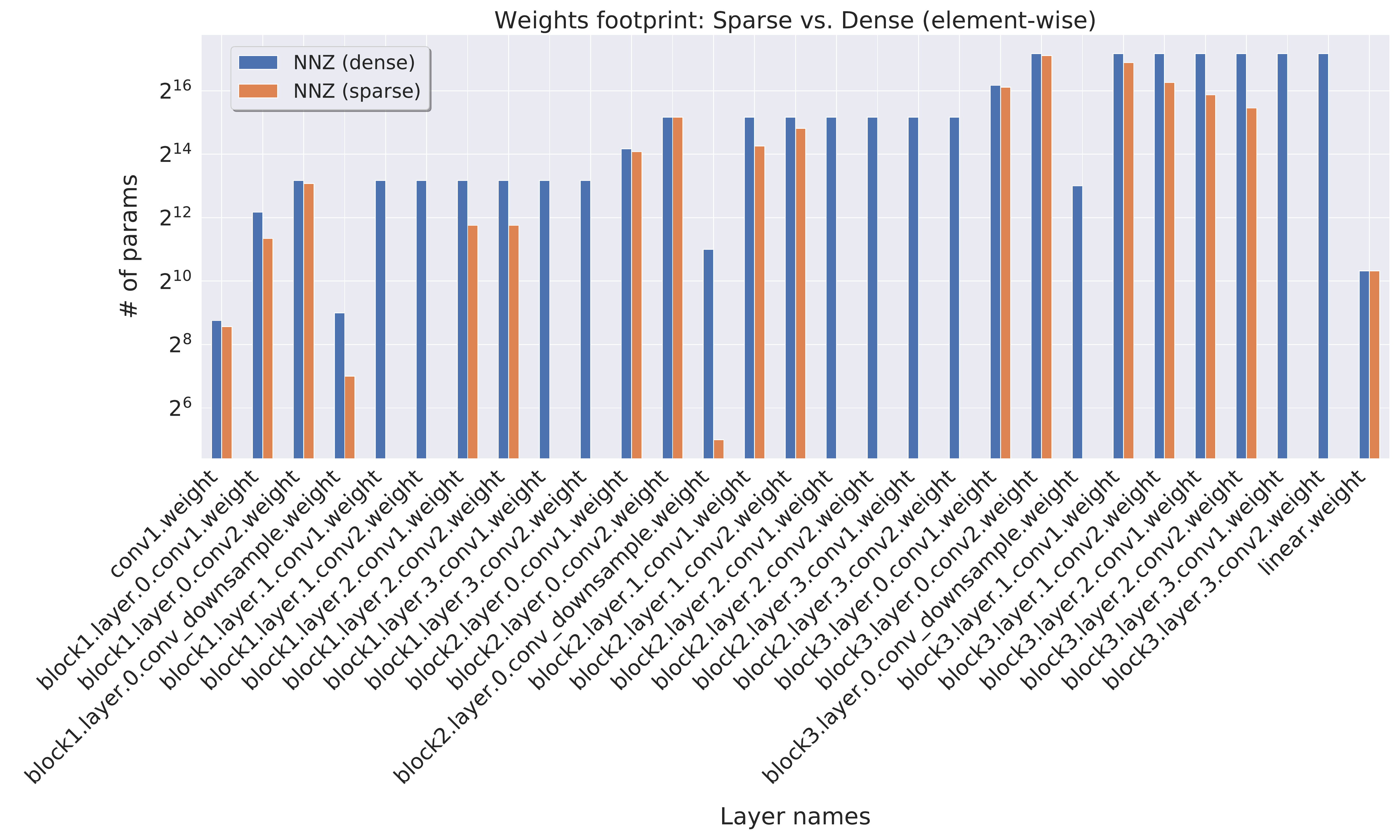}
    }
    \hfill
    \subfigure[\small{Structured pruning with 70\% sparsity.}]{
        \includegraphics[width=0.48\textwidth,]{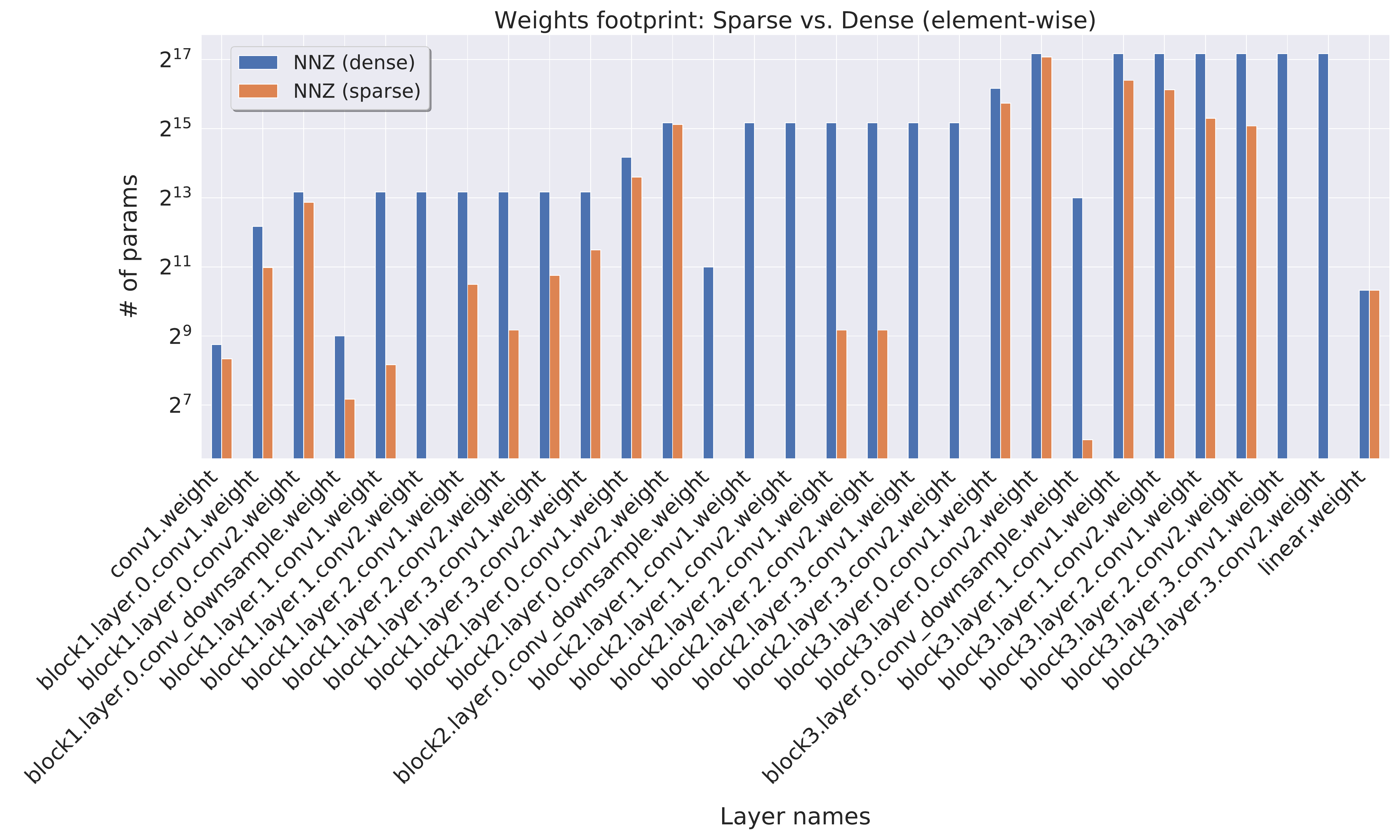}
    }
    \hfill
    \subfigure[\small{Structured pruning with 80\% sparsity.}]{
        \includegraphics[width=0.48\textwidth,]{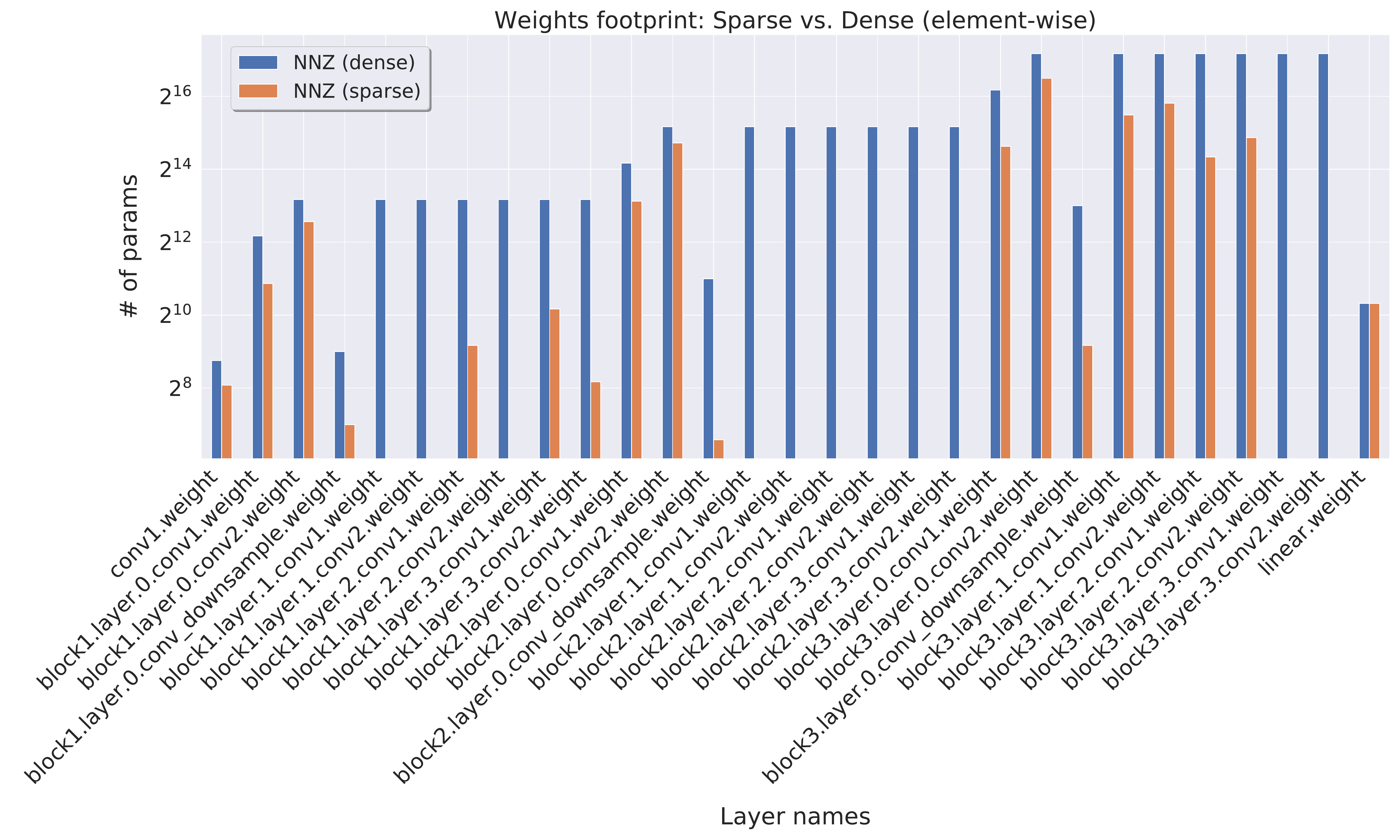}
    }
    \caption{\small{
        The element-wise sparsity of each layer for WideResNet28-2 (trained on CIFAR-10 via \algopt), under different structured pruning sparsity ratios.
        \algopt for model compression (with structured pruning) performs implicit as a neural architecture search.
    }}
    \label{fig:wideresnet28_2_cifar10_structured_pruning_sparsity_complete}
\end{figure*}

\clearpage
\section{Missing Proofs}
In this section we present the proofs for the claims in Section~\ref{sec:convergence}.

First, we give the proof for the stongly convex case. Here we follow~\cite{Lacoste2012:simpler} for the general structure, combined with estimates from the error-feedback framework~\citep{Stich2018:sparsified,StichK19delays} to control the pruning errors.

\begin{proof}[Proof of Theorem~\ref{thm:convex}]
By definition of~\eqref{scheme:ours}, $\xx_{t+1}=\xx_{t} - \gamma_t \gg(\hat \xx_t)$, hence,
\begin{align*}
 \Eb{\norm{\xx_{t+1} - \xx^\star}^2\mid \xx_t}  &=  \norm{\xx_t - \xx^\star}^2 - 2\gamma_t \lin{\xx_t - \xx^\star,\E \gg(\hat \xx_t)} + \gamma_t^2  \E \norm{\gg(\hat \xx_t)}^2 \\
 &\leq \norm{\xx_t - \xx^\star}^2 - 2\gamma_t \lin{\xx_t - \xx^\star,\nabla f(\hat \xx_t)} + \gamma_t^2 G^2\\
 &= \norm{\xx_t - \xx^\star}^2 - 2\gamma_t \lin{\hat \xx_t - \xx^\star,\nabla f(\hat \xx_t)} + \gamma_t^2 G^2 \\ &\qquad + 2\gamma_t \lin{\hat \xx_t - \xx_t, \nabla f(\hat \xx_t)}\,.
\end{align*}
By strong convexity,
\begin{align*}
  - 2\lin{\hat \xx_t - \xx^\star,\nabla f(\hat \xx_t)} \leq -\mu \norm{\hat \xx_t - \xx^\star}^2 - 2 \left( f(\hat \xx_t) - f(\xx^\star) \right)\,,
\end{align*}
and with $\norm{\aa + \bb}^2 \leq 2 \norm{\aa}^2 + 2 \norm{\bb}^2$ further
\begin{align*}
 - \norm{\hat \xx_t - \xx^\star}^2  \leq -\frac{1}{2} \norm{\xx_t -\xx^\star}^2 + \norm{\xx_t - \hat \xx_t}^2 
\end{align*}
and with $\lin{\aa,\bb} \leq \frac{1}{2\alpha} \norm{\aa}^2 + \frac{\alpha}{2} \norm{\bb}^2$ for $\aa,\bb \in \R^d$ and $\alpha > 0$,
\begin{align*}
2\lin{\hat \xx_t - \xx_t ,\nabla f(\hat \xx_t) } & \leq 2L \norm{\hat \xx_t - \xx_t}^2 + \frac{1}{2L} \norm{\nabla f(\hat \xx_t)}^2 \\
&= 2L \norm{\hat \xx_t - \xx_t}^2  + \frac{1}{2L} \norm{\nabla f(\hat \xx_t) - \nabla f(\xx^\star) }^2 \\
&\leq 2L \norm{\hat \xx_t - \xx_t}^2 + f(\hat \xx_t) - f(\xx^\star)\,,
\end{align*}
where the last inequality is a consequence of $L$-smoothness. Combining all these inequalities yields
\begin{align*}
\Eb{\norm{\xx_{t+1} - \xx^\star}^2\mid \xx_t}  &\leq \left(1-\frac{\mu \gamma_t}{2}\right) \norm{\xx_t -\xx^\star}^2 - \gamma_t \left(f(\hat \xx_t) - f(\xx^\star) \right)  + \gamma_t^2 G^2 \\ &\qquad + \gamma_t (2L + \mu) \norm{\hat \xx_t - \xx_t}^2\\
&\leq \left(1-\frac{\mu \gamma_t}{2}\right) \norm{\xx_t -\xx^\star}^2 - \gamma_t \left(f(\hat \xx_t) - f(\xx^\star) \right)  + \gamma_t^2 G^2 \\ &\qquad + 3 \gamma_t L \norm{\hat \xx_t - \xx_t}^2\,.
\end{align*}
as $\mu \leq L$. Hence, by rearranging and multiplying with a weight $\lambda_t > 0$:
\begin{align*}
 \lambda_t \E \left(f(\hat \xx_t) - f(\xx^\star) \right) &\leq \frac{\lambda_t(1-\mu \gamma_t/2)}{\gamma_t} \E \norm{\xx_t -\xx^\star}^2  - \frac{\lambda_t}{\gamma_t} \E \norm{\xx_{t+1} - \xx^\star}^2 + \gamma_t \lambda_t G^2 \\ & \qquad + 3\lambda_t L \E \norm{\hat \xx_t - \xx_t}^2\,.
\end{align*}
By plugging in the learning rate, $\gamma_t = \frac{4}{\mu(t+2)}$ and setting $\lambda_t = (t+1)$ we obtain
\begin{align*}
 \lambda_t \E \left(f(\hat \xx_t) - f(\xx^\star) \right) &\leq \frac{\mu}{4} \left[t(t+1)\E \norm{\xx_t -\xx^\star}^2 -  (t+1)(t+2) \E \norm{\xx_{t+1} - \xx^\star}^2 \right] \\
 & \qquad  + \frac{4(t+1)}{\mu(t+2)}G^2  + 3(t+1)L\E \norm{\hat \xx_t - \xx_t}^2 \,.
\end{align*}
By summing from $t=0$ to $t=T$ these $\lambda_t$-weighted inequalities, we obtain a telescoping sum:
\begin{align*}
\sum_{t=0}^T \lambda_t\E \left(f(\hat \xx_t) - f(\xx^\star) \right) &\leq \frac{\mu}{4} \left[0-(T+1)(T+2)  \E \norm{\xx_{t+1} - \xx^\star}^2 \right] + \frac{4(T+1)}{\mu} G^2 \\
&\qquad + 3L \sum_{t=0}^T \lambda_t \E \norm{\hat \xx_t - \xx_t}^2\\
&\leq \frac{4(T+1)}{\mu} G^2 +  3L \sum_{t=0}^T \lambda_t \E \norm{\hat \xx_t - \xx_t}^2\,.
\end{align*}
Hence, for $\Lambda_T := \sum_{t=0}^T \lambda_t = \frac{(T+1)(T+2)}{2}$,
\begin{align*}
\frac{1}{\Lambda_T} \sum_{t=0}^T \lambda_t\E \left(f(\hat \xx_t) - f(\xx^\star) \right)  &\leq \frac{4(T+1)}{\mu \Lambda_T} G^2 + \frac{3L}{\Lambda_T} \sum_{t=0}^T \lambda_t \E \norm{\hat \xx_t - \xx_t}^2\\
&= \cO \left(\frac{G^2}{\mu T} + \frac{L}{\Lambda_T} \sum_{t=0}^T \lambda_t \E \norm{\hat \xx_t - \xx_t}^2 \right)\,.
\end{align*}
Finally, using $\norm{\hat \xx_t - \xx_t}^2 = \delta_t \norm{\xx_t}^2$ by~\eqref{def:delta}, shows the theorem.
\end{proof}

Before giving the proof of Theorem~\ref{thm:nonconvex}, we first give a justification for the remark just below Theorem~\ref{thm:convex} on the one-shot pruning of the final iterate.

We have by $L$-smoothness and $\lin{\aa,\bb}\leq \frac{1}{2\alpha}\norm{\aa}^2 + \frac{\alpha}{2}\norm{\bb}^2$ for $\aa,\bb \in \R^d$ and $\alpha > 0$ for any iterate $\xx_t$:
\begin{align}
 f(\hat \xx_t) - f(\xx^\star) &\leq f(\xx_t) - f(\xx^\star) + \lin{\nabla f(\xx_t), \hat \xx_t- \xx_t} + \frac{L}{2} \norm{\hat \xx_t- \xx_t}^2 \notag \\
 &\leq f(\xx_t) - f(\xx^\star)  + \frac{1}{2L} \norm{\nabla f(\xx_t)}^2 + L \norm{\hat \xx_t- \xx_t}^2  \notag \\
 &\leq 2 ( f(\xx_t)- f(\xx^\star)) + \delta_t L \norm{\xx_t}^2\,. \label{eq:123}
\end{align}
Furthermore, again by $L$-smoothness,
\begin{align*}
 f(\xx_T) - f(\xx^\star) \leq \frac{L}{2}\norm{\xx_T - \xx^\star}^2 = \cO \left(\frac{L G^2}{\mu^2 T}\right)
\end{align*}
as standard SGD analysis gives the estimate $\E \norm{\xx_T - \xx^\star}^2 = \cO \left(\frac{G^2}{\mu^2 T}\right)$, see e.g.~\cite{Lacoste2012:simpler}.
Combining these two estimates (with $\xx_t=\xx_T$) shows the claim.

Furthermore, we also claimed that also the dense model converges to a neighborhood of optimal solution. This follows by $L$-smoothness and~\eqref{eq:123}: For any fixed model $\xx_t$ we have the estimate~\eqref{eq:123}, hence for a randomly chosen (dense) model $\uu$ (from the same distribution as the sparse model in Theorem~\ref{thm:convex}) we have
\begin{align*}
 \E f(\uu) - f(\xx^\star) \stackrel{\eqref{eq:123}}{\leq} 2 \Eb{ f(\hat \uu) - f(\xx^\star)} + L \Eb{ \delta_t \norm{\ww_t}^2} \stackrel{\text{(Thm~\ref{thm:convex})}}{=} \cO\left(\frac{G^2}{\mu T} + L \Eb{ \delta_t \norm{\ww_t}^2} \right)\,.
\end{align*}

Lastly, we give the proof of Theorem~\ref{thm:nonconvex}, following~\cite{karimireddy2019error}.

\begin{proof}[Proof of Theorem~\ref{thm:nonconvex}]
By smoothness, and $\lin{\aa,\bb} \leq \frac{1}{2}\norm{\aa}^2 + \frac{1}{2}\norm{\bb}^2$ for $\aa,\bb \in \R^d$,
\begin{align*}
 \Eb{f(\xx_{t+1}) \mid \xx_t} &\leq f(\xx_t) - \gamma \lin{\nabla f(\xx_t),\E \gg(\hat \xx_t)} +  \gamma^2 \frac{L }{2} \E \norm{\gg(\hat \xx_t)}^2 \\
 &\leq f(\xx_t) - \gamma \lin{\nabla f(\xx_t),\nabla f(\hat \xx_t)} + \gamma^2  \frac{L G^2}{2} \\
 &=  f(\xx_t) - \gamma \lin{\nabla f(\hat \xx_t),\nabla f(\hat \xx_t)} + \gamma^2 \frac{L  G^2}{2} \\ & \qquad + \gamma \lin{\nabla f(\hat \xx_t)- \nabla f(\xx_t), \nabla f(\hat \xx_t)} \\ 
 &\leq f(\xx_t) - \gamma \norm{\nabla f(\hat \xx_t)}^2 + \gamma^2 \frac{L  G^2}{2} \\ & \qquad + \frac{\gamma}{2} \norm{\nabla f(\hat \xx_t) - \nabla f(\xx_t)}^2  + \frac{\gamma}{2}\norm{\nabla f(\hat \xx_t)}^2 \\
 &\leq f(\xx_t) - \frac{\gamma}{2}  \norm{\nabla f(\hat \xx_t)}^2  +  \gamma^2\frac{L  G^2}{2} + \frac{\gamma L^2}{2}\norm{\xx_t - \hat \xx_t}^2 \,,
\end{align*}
and by rearranging
\begin{align*}
 \E \norm{\nabla f(\hat \xx_t)}^2 \leq \frac{2}{\gamma} \left[ \E f(\xx_t) - \E f(\xx_{t+1}) \right] + \gamma LG^2 + L^2 \E \norm{\xx_t - \hat \xx_t}^2\,.
\end{align*}
Summing these inequalities from $t=0$ to $t=T$ gives
\begin{align*}
\frac{1}{T+1} \sum_{t=0}^T \E \norm{\nabla f(\hat \xx_t)}^2 &\leq \frac{2}{\gamma(T+1)} \sum_{t=0}^T \left( \Eb{f(\xx_t)} - \Eb{f(\xx_{t+1})}\right) +  \gamma L G^2  \\ &\qquad + \frac{L^2}{T+1} \sum_{t=0}^T \E \norm{\xx_t - \hat \xx_t}^2 \\
&\leq \frac{2 \left(f(\xx_0) - f(\xx^\star)\right)}{\gamma (T+1)} +  L \gamma G^2 + \frac{L^2}{T+1} \sum_{t=0}^T \E \norm{\ee_t}^2\,.
\end{align*}
Finally, using $\norm{\hat \xx_t - \xx_t}^2 = \delta_t \norm{\xx_t}^2$ by~\eqref{def:delta}, and plugging in the stepsize $\gamma$ that minimizes the right hand side shows the claim.
\end{proof}

\end{document}